\newtheorem{theorem}{Theorem}
\newtheorem{lemma}[theorem]{Lemma} 
\newtheorem{proposition}[theorem]{Proposition} 
\newtheorem{corollary}[theorem]{Corollary}
\newtheorem{definition}[theorem]{Definition}
\newenvironment{assumption}[1]
  {\innerassumption}
  {\endinnerassumption}
\definecolor{harvardcrimson}{rgb}{0.79, 0.0, 0.09}
\definecolor{brickred}{rgb}{0.8, 0.25, 0.33}
\definecolor{vividcerise}{rgb}{0.85, 0.11, 0.51}
\definecolor{tangerine}{rgb}{0.95, 0.52, 0.0}
\definecolor{olivine}{rgb}{0.6, 0.73, 0.45}
\definecolor{pistachio}{rgb}{0.58, 0.77, 0.45}
\definecolor{darkseagreen}{rgb}{0.56, 0.74, 0.56}
\definecolor{forestgreen(web)}{rgb}{0.13, 0.55, 0.13}
\definecolor{sacramentostategreen}{rgb}{0.0, 0.34, 0.25}
\definecolor{steelblue}{rgb}{0.27, 0.51, 0.71}
\definecolor{veronica}{rgb}{0.63, 0.36, 0.94}
\begin{document}

\title{Monotonic Risk Relationships under Distribution Shifts \\for Regularized Risk Minimization}
\date{} 

\author[1]{Daniel LeJeune\thanks{daniel@dlej.net}}
\author[2]{Jiayu Liu\thanks{jiayu.liu@tum.de}}
\author[2]{Reinhard Heckel\thanks{reinhard.heckel@tum.de}}
\affil[1]{Department of Statistics, Stanford University}
\affil[2]{Department of Electrical and Computer Engineering, Technical University of Munich}

\maketitle

\vspace{-1.3cm}

\begin{abstract}%
Machine learning systems are often applied to data that is drawn from a different distribution than the training distribution. 
Recent work has shown that for a variety of classification and signal reconstruction problems, the out-of-distribution performance is strongly linearly correlated with the in-distribution performance. 
If this relationship or more generally a monotonic one holds, it has important consequences. 
For example, it allows to optimize performance on one distribution as a proxy for performance on the other. 
In this paper, we study conditions under which a monotonic relationship between the performances of a model on two distributions is expected.
We prove an exact asymptotic linear relation for squared error and a monotonic relation for misclassification error for ridge-regularized general linear models under covariate shift, as well as an approximate linear relation for linear inverse problems.
\end{abstract}

\section{Introduction}
\label{sec:introduction}

Machine learning models are typically evaluated by shuffling a set of labeled data, splitting it into training and test sets, and evaluating the model trained on the training set on the test set. This measures how well the model performs on the distribution the model was trained on. However, in practice a model is most commonly not applied to such in-distribution data, but rather to out-of-distribution data that is almost always at least slightly different. In order to understand the performance of machine learning methods in practice, it is therefore important to understand how out-of-distribution performance relates to in-distribution performance. 

While there are settings in which models with similar in-distribution performance have different out-of-distribution performance~\citep{mccoy19}, a series of recent empirical studies have shown that often, the in-distribution and out-of-distribution performances of models are strongly correlated:

\begin{itemize}
\item \citet{recht19}, \citet{yadav19}, and \citet{miller20} constructed new test sets for the popular CIFAR-10, ImageNet, and MNIST image classification problems and for the SQuAD question answering datasets by following the original data collection and labeling process as closely as possible. For CIFAR-10 and ImageNet the performance drops significantly when evaluated on the new test set, indicating that even when following the original data collection and labeling process, a significant distribution shift can occur. In addition, for all four distribution shifts, the in- and out-of-distribution errors are strongly linearly correlated. 

\item \citet{miller21} identified a strong linear correlation of the performance of image classifiers for a variety of natural distribution shifts. 
Apart from classification, the linear performance relationship phenomenon is also observed in machine learning tasks where models produce real-valued output, for example in pose estimation~\citep{miller21} and object detection~\citep{caine21}.

\item \citet{darestani21} identified a strong linear correlation of the performance of image reconstruction methods for a variety of natural distribution shifts. This relation between in- and out-of-distribution performances persisted for image reconstruction methods that are only tuned, i.e., only a small set of hyperparameters is chosen based on hyperparameter optimization on the training data. %
\end{itemize}

An important consequence of a linear, or more generally, a monotonic relationship between in- and out-of-distribution performances is that a model that performs better in-distribution also performs better on out-of-distribution data, and thus measuring in-distribution performance can serve as a proxy for tuning and comparing different models for application on out-of-distribution data. 

It is therefore important to understand when a linear or more generally a monotonic relationship between the performance on two distributions occurs. 
In this paper we study this question theoretically and empirically for a class of distribution shifts where the feature or signal models come from different distributions, also known as covariate shift.
    
    First, we show that for a real-world regression problem, in- and out-of-distribution performances are linearly correlated. Specifically, we show that for object detection, the performance of models trained on the COCO 2017 training set and evaluated on the COCO 2017 validation set is linearly correlated with the performance on the VOC 2012 dataset. This finding establishes that a linear risk relation also occurs for regression problems, beyond classification problems as established before.%
    
    We then consider a simple linear regression model with a feature vector drawn from a different subspace for in- and out-of-distribution data. We provide sufficient conditions for a linear estimator that characterizes when a linear relation between in- and out-of-distribution occurs. 
    
    Next, we consider a general setup encompassing classification and regression, and consider a distribution shift model on the feature vectors. We consider a large class of estimators obtained with regularized empirical risk minimization, and show that as various training parameters change, including for example the regularization strength or the number of training examples (resulting in different estimators), the relationship between in- and out-of-distribution performances is monotonic. 
    Different classes of estimators follow different monotonic relations, and we also observe this in practice (see Figure~\ref{fig:classification-risk-shifts}). Interestingly, for a certain class of shifts in classification, we recover a linear relation for a nonlinear function of the risks that is remarkably similar to that demonstrated empirically by \mbox{\citet{miller21}}.
    
    Finally, we study linear inverse problems, to understand when a linear relation occurs in a signal reconstruction problem. %
    We consider a distribution shift model consisting of a shift in subspace as well as noise variance, and again characterize conditions under which a linear or near-linear relation between in- and out-of-distribution performances exists.

Our results suggest that linear risk relationships observed in regression and classification actually arise by independent mechanisms, being based on a shift in feature subspace for regression and a shift in feature scaling for classification.

Code for the experiments and figures in this paper can be found at \url{https://github.com/MLI-lab/monotonic_risk_relationships}

\subsection{Prior Theoretical Work on Characterizing Linear Performance Relations}
\label{sec:prior_theoretical_work}

Classical theory for characterizing out-of-distribution performance ensures that the difference between in- and out-of-distribution performance of an estimator is bounded by a function of the distance of the training and test distributions~\citep{quinonero08, ben-david10, cortes14}. 
Such bounds often apply to a class of target distributions. In contrast, we are interested in precise relationships between a fixed source and target distribution. 

Regarding characterizing linear relationships, \citet[Sec.~7]{miller21} proved that for a distribution shift for a binary mixture model, the in- and out-of-distribution accuracies have a linear relation if the features vectors are sufficiently high-dimensional. 
\citet{mania20} showed that an approximate linear relationship occurs under a model similarity assumption that high accuracy models correctly classify most of the data points that are correctly classified by lower accuracy models. %

Most related to our work is that of \citet{tripuraneni21}, who revealed an exact linear relation for squared error of a linear random feature regression model under a covariate shift in the high-dimensional limit. This covariate shift is philosophically similar to the simplifying assumption we make for the main statement and interpretation of our results, and yields a similar linear relation for squared error. However, our results apply to a broader class of general linear models and extend to misclassification error, and we go further to capture how the distribution shift can depend on the task itself, which captures how classification problems can become easier or harder. Moreover, our results predict general monotonic relationships as opposed to only linear ones.

\section{Linear Relations in Regression and Motivation for the Subspace Model}
\label{sec:linear_shift_in_regression_and_motivation_for_subspace_model}

\begin{figure}[t]
\centering
\setlength\tabcolsep{1.5pt} %
\begin{tabular}{ccc}
\begin{tikzpicture}
\begin{axis} [
    xmin=0.005, xmax=0.02, ymin=0.0, ymax=0.08,
    xmin=0.0075, xmax=0.0175, ymin=0.0, ymax=0.08,
    xtick distance=0.005,
    ytick distance=0.02,
    xticklabel style={/pgf/number format/fixed,
                  /pgf/number format/precision=3},
    yticklabel style={/pgf/number format/precision=3,
  /pgf/number format/fixed},
    every tick label/.append style={font=\tiny},
    scaled x ticks = false,
    scaled y ticks = false,
    grid=both,
    minor tick num=1,
    major grid style={lightgray!25},
    minor grid style={lightgray!25},
    width=0.3\textwidth,
    height=0.3\textwidth,
    x label style={at={(axis description cs:0.5,0.1)}, anchor=north},
    y label style={at={(axis description cs:0.22,0.5)},  anchor=south},
    xlabel={COCO 2017 MSE}, %
    ylabel={VOC 2012 MSE}, %
    label style={font=\scriptsize},
    legend entries={$\text{models}$, $\text{linear fit}$, $y=x$},
    legend style={font=\tiny, at={(1.03,1.03)},
        anchor=north west},
]
\addplot [
    mark size=1.2pt, only marks, mark options={solid}, 
    scatter/classes={
        Faster_R-CNN={steelblue},
        Mask_R-CNN={tangerine},
        Keypoint_R-CNN={olivine},
        SSD={forestgreen(web)},
        RetinaNet={veronica},
        YOLOv5={vividcerise}},
    scatter, scatter src=explicit symbolic,
    ]
    table [x=data_p_mean, y=data_q_mean, meta=meta
    ] {./figures/bbox_prediction/average_l2_loss.csv};
\legend{{Faster R-CNN}, {Mask R-CNN}, {Keypoint R-CNN}, RetinaNet, SSD, YOLOv5, linear fit, $y = x$}
\addplot+ [no markers, harvardcrimson, semithick, domain=0.0:0.02] {3.112849*x + 0.012949};
\addplot+ [no markers, black, semithick, dashed, domain=0.0:0.02] {x};
\end{axis}
\end{tikzpicture}
& 
\begin{tikzpicture}
\begin{semilogyaxis} [
    xmin=-10, xmax=522, ymin=0, ymax=360,
    xtick distance=100,
    yticklabel style={/pgf/number format/precision=3,
  /pgf/number format/fixed},
    every tick label/.append style={font=\tiny},
    xmajorgrids, ymajorgrids,
    major grid style={lightgray!25},
    width=0.3\textwidth,
    height=0.3\textwidth,
    x label style={at={(axis description cs:0.5,0.1)}, anchor=north},
    y label style={at={(axis description cs:0.25,0.5)},  anchor=south},
    xlabel={Index}, 
    ylabel={Singular value},
    label style={font=\scriptsize},
    legend entries={$\text{COCO 2017}$, $\text{VOC 2012}$},
    legend style={font=\tiny},
]
\addplot [
    no marks, steelblue, semithick, mark options={solid, steelblue, fill=steelblue}]
    table [x=index, y=singular_value_p] {./figures/bbox_prediction/subspace_analysis.csv};
\addplot [
    no marks, darkseagreen, semithick, mark options={solid, darkseagreen, fill=darkseagreen}]
    table [x=index, y=singular_value_q] {./figures/bbox_prediction/subspace_analysis.csv};
\end{semilogyaxis}
\end{tikzpicture}
& 
\begin{tikzpicture}
\begin{axis} [
    xmin=-10, xmax=522, ymin=0.65, ymax=1.0,
    xtick distance=100,
    ytick distance=0.1,
    yticklabel style={/pgf/number format/precision=3,
  /pgf/number format/fixed},
    every tick label/.append style={font=\tiny},
    grid=both,
    major grid style={lightgray!25},
    width=0.3\textwidth,
    height=0.3\textwidth,
    x label style={at={(axis description cs:0.5,0.1)}, anchor=north},
    y label style={at={(axis description cs:0.28,0.5)},  anchor=south},
    xlabel={$k$}, 
    ylabel={Subspace similarity},
    label style={font=\scriptsize},
]
\addplot [
    no marks, steelblue, semithick, mark options={solid, steelblue, fill=steelblue}]
    table [x=index, y=subspace_similarity] {./figures/bbox_prediction/subspace_analysis.csv};
\end{axis}
\end{tikzpicture}
\end{tabular}
\caption{
Bounding box prediction on COCO 2017 and VOC 2012 datasets. \textbf{Left:} There is an approximate linear relation of mean squared error (MSE) for models trained COCO 2017. \textbf{Middle:} The spectrum of the feature spaces of YOLOv5 on the two datasets decays quickly, which suggests that a feature subspace model could be a reasonable approximation. \textbf{Right:} A principal-angle-based similarity between subspaces spanned by the top $k$ principal components on the two datasets. The subspaces are well-aligned, which is a sufficient condition for a linear relation as stated in Theorem~\ref{thm:sufficient_conditions_for_linear_shift_in_regression}. 
}
\label{fig:bounding_box_prediction}
\end{figure}
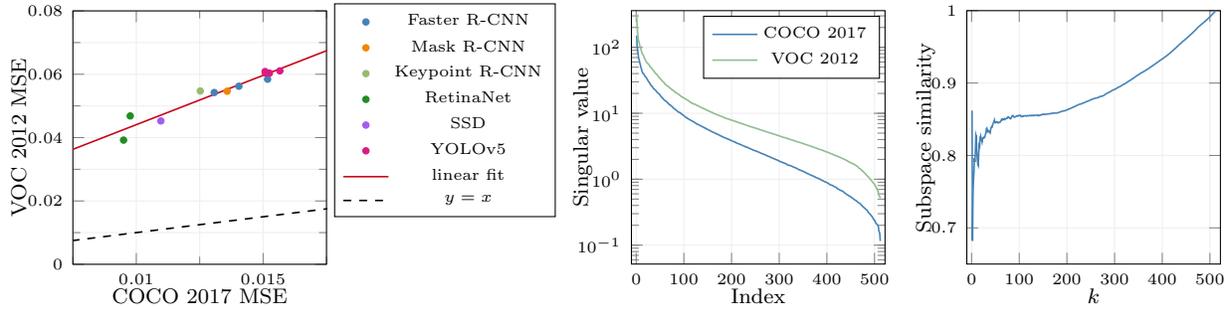

Prior work in the distribution shift literature for prediction tasks has focused on either classification or on problems with real-valued outputs but using discrete performance metrics (for example,  pose estimation~\citep{miller21} and object detection~\citep{caine21}). %
Here, we consider a real-valued squared error metric and show that linear relationships between in- and out-of-distribution performances also occur in a standard regression setup.

We evaluate a collection of neural network models for object detection, which are trained on the COCO 2017 training set~\citep{lin14}: Faster R-CNN~\citep{ren15}, Mask R-CNN~\citep{he17}, Keypoint R-CNN~\citep{he17}, SSD~\citep{liu16}, RetinaNet~\citep{lin17}, and YOLOv5~\citep{redmon16, jocher20}. 
See Figure~\ref{fig:bounding_box_prediction}~(left), where we compute their mean squared errors for bounding box coordinate prediction on the COCO 2017 validation set and the VOC 2012 training/validation set ~\citep{everingham10}.
The  models we consider all perform worse on the out-of-distribution data, and the in- and out-of-distribution performances are approximately linearly related.

It is in general difficult to model distribution shifts analytically. In this work, one aspect of distribution shifts that we model is the change in the subspaces where the feature vectors lie.
To motivate this model, we next examine the feature space of the YOLOv5 model on the in- and out-of-distribution data.

The YOLOv5 model, and all other models considered, can be viewed to make a prediction for an image by generating features through several layers, and then aggregating those features with a linear layer (or a very shallow neural network) to make a prediction. 
We consider the $512$-dimensional feature vectors from the penultimate layer of YOLOv5 as the features. 
Let 
$\{\vz_j^{(i)} \in \reals^{512}: i \in [N_{\text{in}}], j \in [K_{\text{in}}^{(i)}]\}$ and 
$\{\vz_j^{(i)} \in \reals^{512}: i \in [N_{\text{out}}], j \in [K_{\text{out}}^{(i)}]\}$ be the set of feature vectors of the in- and out-of-distribution data, respectively, where $\vz_j^{(i)}$ is the feature vector of the $j$\textsuperscript{th} true positive prediction on image $i$, $N_{\text{in}}$ and $N_{\text{out}}$ are the numbers of images of the respective datasets, and $K_{\text{in}}^{(i)}$ and $K_{\text{out}}^{(i)}$ are the number of true positive predictions on the $i$\textsuperscript{th} images of the respective datasets. 
We perform principal component analysis on these two sets of feature vectors and plot the spectra in Figure~\ref{fig:bounding_box_prediction}~(middle). 
We observe that approximating the feature space by the top $100$ principal components explains $96.0\%$ and $95.6\%$ of the variances of COCO 2017 and VOC 2012 respectively. This observation demonstrates that the feature vectors approximately lie in low-dimensional subspaces. 

Moreover, Figure~\ref{fig:bounding_box_prediction}~(right) shows that the feature subspaces for the two distributions are  overlapping substantially. 
Specifically, Figure~\ref{fig:bounding_box_prediction}~(right) shows the subspace similarity defined as $\sqrt{\norm[2]{\cos(\vtheta)}^2 / k}$ \citep{soltanolkotabi12, heckel15}, where $\vtheta$ is the vector of principal angles between the subspaces spanned by the top $k$ principal components of the individual feature vector sets. %
The subspaces spanned by the top $100$ principal components, which account for over $95\%$ of the variance, have a $0.855$ subspace similarity (note that the maximum value $1$ is achieved for $\vtheta = \vzero$).
More details on the experiment are in Appendix~\ref{sec:experiment_details_for_object_detection}.

Because the output of neural networks is simply a linear model applied to this feature space,
this observation suggests that the relationship between in- and out-of-distribution performances of even highly nonlinear models such as neural networks on data from highly nonlinear spaces may be modeled by a change in linear subspaces of a transformed feature space.  
Therefore, we theoretically study the effect of changes of subspace in linear models and the resulting performance relationships. Our results consider fixed feature spaces, while different deep learning models have different feature representations at the final layer. However, our study can shed light on performance changes of models from the same family that share similar feature representations under distribution shifts.

\section{Linear Relations in Regression in Finite Dimensions}
\label{sec:linear_shift_in_linear_regression_in_finite_dimension}

We begin our theoretical study by considering the linear regression setting under additive noise:
$ %
    y = \vx^\T \vbeta^* + z,
$ %
where $\vbeta^* \in \reals^d$ is a fixed parameter vector that determines the model, and $z$ is independent observation noise. We assume that the feature vector $\vx$ is drawn randomly from a subspace\editedinline{, also known as the hidden manifold model \citep{goldt2020hidden}}. 
\editedinline{Let $d_P, d_Q \le d$.}
For data from distribution $P$, the feature vector is given by \editedinline{$\vx = \mU_P \vc_P$}, where $\mU_P \in \reals^{d \times d_P}$ has orthonormal columns and \editedinline{$\vc_P \in \reals^{d_P}$} is \editedinline{zero-mean and has identity covariance}. The noise variable is zero-mean and has variance  $\sigma_P^2$. 
The data from distribution $Q$ is generated in the same manner, but the signal is from a different subspace with \editedinline{$\vx = \mU_Q \vc_Q$}, where $\mU_Q \in \reals^{d \times d_Q}$ has orthonormal columns,
\editedinline{$\vc_Q \in \reals^{d_Q}$ is zero-mean and has identity covariance},
and the noise is zero-mean and has variance $\sigma_Q^2$. 

For an estimate $\widehat{\vbeta}$ of $\vbeta^*$, define the risk on distribution $P$ with respect to the squared error metric as $\risk_P(\widehat{\vbeta}) = \EX[\vx \sim P]{(y - \vx^\transp \widehat{\vbeta})^2}$ (respectively $\risk_Q(\widehat{\vbeta})$ on distribution $Q$).
We are interested in the relation of those risks for a class of estimators. We consider an estimate of the model parameter $\vbeta^*$ assuming knowledge of the distribution for simplicity, equivalent to having large amounts of training data. The analysis can be extended readily to estimates based on finite samples. We consider the estimator
\begin{align}
    \widehat{\vbeta}_\lambda = \argmin_{\vbeta} \EX[P]{(\vbeta^\T \vx - y)^2} + \lambda \norm[2]{\vbeta}^2,
\end{align}
parameterized by the regularization parameter $\lambda$. 
It can be shown that 
$
    \widehat{\vbeta}_\lambda = \alpha \mU_P \mU_P^\T\vbeta^*
$ for $\alpha = 1 / (1 + \lambda)$, 
which is the projection of $\vbeta^*$ onto the subspace scaled by the factor $\alpha \in [0,1]$.

The following theorem provides sufficient conditions for a linear relation between the in- and out-of-distribution risks $\risk_P(\widehat{\vbeta}_\lambda)$ and $\risk_Q(\widehat{\vbeta}_\lambda)$ of this class of estimators parameterized by the regularization parameter $\lambda$. 
Theorem~\ref{thm:sufficient_conditions_for_linear_shift_in_regression} is a consequence of Theorem~\ref{thm:sufficient_necessary_conditions_for_linear_shift_in_regression} in Appendix~\ref{sec:proof_of_conditions_for_linear_shift_in_regression}, which also provides a necessary condition for a linear risk relation. 
\begin{theorem}[Sufficient conditions]
The out-of-distrubiton risk $\risk_Q(\widehat{\vbeta}_\lambda)$ is an affine function of the in-distribution risk  $\risk_P(\widehat{\vbeta}_\lambda)$ as a function of the regularization parameter $\lambda$ 
if one of the following conditions holds: 
\begin{enumerate}[label=(\alph*), ref=(\alph*)]
    \item $\text{range}(\mU_Q) \subseteq \text{range}(\mU_P)$, or %
    $\text{range}(\mU_P) \subseteq \text{range}(\mU_Q)$;
    \label{cond:subspace-align}
    
    \item $\vbeta^* \in \text{range}(\mU_P)$.
    \label{cond:beta-eigval}
\end{enumerate}
Moreover, for random $\vbeta^*$, the expected out-of-distribution risk, $\EX[\vbeta^*]{\risk_Q(\widehat{\vbeta}_\lambda)}$, is an affine function of the expected in-distribution risk $\EX[\vbeta^*]{\risk_P(\widehat{\vbeta}_\lambda)}$ if
\begin{edited}
\begin{enumerate}[label=(\alph*), ref=(\alph*)]
    \setcounter{enumi}{2}
    \item $\EX[]{\vbeta^* \vbeta^{*\T}} = \mI$. 
    \label{cond:random-beta}
\end{enumerate}
\end{edited}

\label{thm:sufficient_conditions_for_linear_shift_in_regression}
\end{theorem}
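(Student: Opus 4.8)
The plan is to reduce everything to explicit quadratic formulas for the two risks as functions of the single scalar that the regularization controls, and then read off when one is an affine function of the other. Since $y - \vx^\T \widehat{\vbeta}_\lambda = \vx^\T(\vbeta^* - \widehat{\vbeta}_\lambda) + z$ with $z$ independent of $\vx$ and zero-mean, and $\EX[P]{\vx\vx^\T} = \mU_P \mU_P^\T =: \Pi_P$ (similarly $\Pi_Q := \mU_Q\mU_Q^\T$ for $Q$), I would first record
\begin{align}
\risk_P(\widehat{\vbeta}_\lambda) &= (\vbeta^* - \widehat{\vbeta}_\lambda)^\T \Pi_P (\vbeta^* - \widehat{\vbeta}_\lambda) + \sigma_P^2, \\
\risk_Q(\widehat{\vbeta}_\lambda) &= (\vbeta^* - \widehat{\vbeta}_\lambda)^\T \Pi_Q (\vbeta^* - \widehat{\vbeta}_\lambda) + \sigma_Q^2.
\end{align}
Writing $s := 1 - \alpha = \lambda/(1+\lambda)$ and splitting $\vbeta^* = \vbeta_\parallel + \vbeta_\perp$ with $\vbeta_\parallel := \Pi_P \vbeta^*$ and $\vbeta_\perp := (\mI - \Pi_P)\vbeta^*$, the closed form $\widehat{\vbeta}_\lambda = \alpha \Pi_P \vbeta^*$ gives residual $\vbeta^* - \widehat{\vbeta}_\lambda = s\,\vbeta_\parallel + \vbeta_\perp$.

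Substituting this and using $\Pi_P \vbeta_\parallel = \vbeta_\parallel$, $\Pi_P \vbeta_\perp = \vzero$, I obtain
\begin{align}
\risk_P(\widehat{\vbeta}_\lambda) &= s^2 \norm[2]{\vbeta_\parallel}^2 + \sigma_P^2, \\
\risk_Q(\widehat{\vbeta}_\lambda) &= s^2\, \vbeta_\parallel^\T \Pi_Q \vbeta_\parallel + 2s\, \vbeta_\parallel^\T \Pi_Q \vbeta_\perp + \vbeta_\perp^\T \Pi_Q \vbeta_\perp + \sigma_Q^2.
\end{align}
As $\lambda$ sweeps $[0,\infty)$, $s$ sweeps $[0,1)$, so both risks trace out parabolas in the common parameter $s$, and crucially $\risk_P$ has no term linear in $s$. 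The one genuinely conceptual step — which I expect to be the main obstacle — is the observation that, since $\risk_P$ is affine in $s^2$ alone, $\risk_Q$ is an affine function of $\risk_P$ if and only if the linear-in-$s$ term of $\risk_Q$ vanishes, i.e. the cross term
\begin{align}
B := \vbeta_\parallel^\T \Pi_Q \vbeta_\perp = (\Pi_P \vbeta^*)^\T \Pi_Q (\mI - \Pi_P)\vbeta^*
\end{align}
is zero. This is exactly the necessary-and-sufficient condition recorded in Theorem~\ref{thm:sufficient_necessary_conditions_for_linear_shift_in_regression}, and everything downstream is verifying that each listed hypothesis forces $B = 0$.

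For the deterministic conditions this is short projection algebra. Under \ref{cond:subspace-align}, if $\mathrm{range}(\mU_Q) \subseteq \mathrm{range}(\mU_P)$ then $\Pi_Q \Pi_P = \Pi_Q$, so $\Pi_Q(\mI - \Pi_P) = \mathbf{0}$ and hence $\Pi_Q \vbeta_\perp = \vzero$, giving $B = 0$; if instead $\mathrm{range}(\mU_P) \subseteq \mathrm{range}(\mU_Q)$ then $\Pi_Q \vbeta_\parallel = \vbeta_\parallel$, whence $B = \vbeta_\parallel^\T \vbeta_\perp = 0$ by orthogonality of the decomposition. Under \ref{cond:beta-eigval}, $\vbeta^* \in \mathrm{range}(\mU_P)$ yields $\vbeta_\perp = \vzero$ directly, so $B = 0$.

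For the random case \ref{cond:random-beta} I would take expectations over $\vbeta^*$; the $\vbeta^*$-dependence enters $\EX[\vbeta^*]{\risk_P}$ and $\EX[\vbeta^*]{\risk_Q}$ only through expected quadratic forms, so the same reduction applies with $B$ replaced by $\EX[\vbeta^*]{B}$. Using $\EX[\vbeta^*]{\vbeta^*\vbeta^{*\T}} = \mI$ and $\EX[\vbeta^*]{\vbeta^{*\T} \mathbf{A}\, \vbeta^*} = \mathrm{tr}(\mathbf{A})$,
\begin{align}
\EX[\vbeta^*]{B} = \mathrm{tr}\bigl(\Pi_P \Pi_Q (\mI - \Pi_P)\bigr) = \mathrm{tr}(\Pi_P \Pi_Q) - \mathrm{tr}(\Pi_P \Pi_Q \Pi_P) = 0,
\end{align}
where the last equality uses the cyclic property of the trace together with idempotence $\Pi_P^2 = \Pi_P$. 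The only subtlety worth flagging is the degenerate case $\norm[2]{\vbeta_\parallel}^2 = 0$ (in expectation, $\mathrm{tr}(\Pi_P) = 0$), where $\risk_P$ is constant in $\lambda$ and the affine relation holds trivially; otherwise the slope and intercept of the affine map are read off as $\vbeta_\parallel^\T \Pi_Q \vbeta_\parallel / \norm[2]{\vbeta_\parallel}^2$ and the residual constant, respectively.
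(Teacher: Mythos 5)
Your proposal is correct and follows essentially the same route as the paper: both express the two risks as quadratics in a common scalar parameter (your $s = 1-\alpha$ versus the paper's $\alpha^2 - 2\alpha$, related by $s^2 = 1 + \alpha^2 - 2\alpha$), identify the cross term $\vbeta^{*\T} \mPi_P \mSigma_Q \mPi_{P^\perp} \vbeta^*$ as the sole obstruction to an affine relation (the paper's Theorem~\ref{thm:sufficient_necessary_conditions_for_linear_shift_in_regression}), and then verify that each of conditions (a), (b), (c) annihilates it by the same projection algebra and trace-cyclicity arguments. Your explicit handling of the degenerate case $\vbeta_\parallel = \vzero$ is a minor added care the paper omits, but the substance is identical.
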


\editedinline{Condition~\ref{cond:subspace-align} is a property of the distribution shift itself.} When the subspaces are aligned between the two distributions, we observe a linear risk relationship for the set of estimators parameterized by $\lambda$. 
Recall from the previous section, that the feature subspaces of the object detection model we evaluate roughly align, as shown in Figure~\ref{fig:bounding_box_prediction}~(right). 
Thus, our theorem suggests a linear relationship, which in turn sheds light on the linear relationship we observed in practice. 
We remark that the linear relationship guaranteed by Theorem~\ref{thm:sufficient_conditions_for_linear_shift_in_regression} is exact assuming full knowledge of the source distribution, but only approximate in the finite sample regime for an estimate that minimizes the regularized empirical risk.

\begin{edited}
Condition~\ref{cond:beta-eigval} is a property of the parameter vector $\vbeta^*$ and its learnability under distribution $P$. Under condition~\ref{cond:beta-eigval}, $\widehat{\vbeta}_\lambda = \alpha \vbeta^*$, which greatly simplifies the risks:
\begin{align}
    \risk_P(\widehat{\vbeta}_\lambda) = (1 - \alpha)^2 \vbeta^{*\transp} \vbeta^* + \sigma_P^2
    \quad
    \text{and}
    \quad
    \risk_Q(\widehat{\vbeta}_\lambda) = (1 - \alpha)^2 \vbeta^{*\transp} \mU_Q \mU_Q^\transp \vbeta^* + \sigma_Q^2.
\end{align}
It is thus very clear that there is a monotonic relation, as both are affine in $(1 - \alpha)^2$.

Condition~\ref{cond:random-beta} meanwhile is a property of randomness in $\vbeta^*$ that leads to the elimination of interaction terms that would prevent a monotonic relation. While the above result is given for the expectation, the same effect would also occur for single problem instances in high dimensions due to concentration of measure.
 
The intuition behind these three conditions all carry over to our more general results.
\end{edited}

\section{Asymptotic Monotonic Relations for General Linear Models}
\label{sec:asymptotic}

In the previous section, we demonstrated a linear risk relationship under a subspace shift for linear regression models. In this section, we provide a much more general result that holds for a larger class of distribution shifts, setups (i.e., regression and classification), and estimators, specifically for a class of estimators based on regularized empirical risk minimization.

\subsection{Linear Model Framework}

We consider a general framework of linear models $f(\vx) = \phi(\vx^\transp \vbeta)$ for some $\vbeta \in \reals^d$, labeling function $\phi \colon \reals \to \reals$, and centralized Gaussian data under two distributions
\begin{align}
P\colon \vx \sim \mc N(\vzero, \tfrac{1}{d} \mSigma_P) \quad\text{and}\quad
Q\colon \vx \sim \mc N(\vzero, \tfrac{1}{d} \mSigma_Q),
\end{align}
where $\mSigma_P$ and $\mSigma_Q$ are positive semidefinite covariance matrices. Given a ground truth model $f^*(\vx) = \phi(\vx^\transp \vbeta^*)$, the \emph{risk} of a model $f(\vx) = \phi(\vx^\transp \vbeta)$ with respect to an error metric $\psi \colon \reals^2 \to \reals$ on distributions $P$ and $Q$ as
\begin{align}
    \risk_P(\vbeta) \defeq \expect[\vx \sim P]{\psi(\vx^\transp \vbeta^*, \vx^\transp \vbeta)}
    \quad \text{and} \quad
    \risk_Q(\vbeta) \defeq \expect[\vx \sim Q]{\psi(\vx^\transp \vbeta^*, \vx^\transp \vbeta)}.
\end{align}
We consider the squared error $\psi(z^*, z) = (z^* - z)^2$ and misclassification error $\psi(z^*, z) = \ind \set{z^* z < 0}$ as error metrics for regression and classification, respectively. Now define the random variables, often referred to as the decision functions,
\begin{align}
    (Z_P^*, Z_P) = (\vx^\transp \vbeta^*, \vx^\transp \vbeta) : \vx \sim P
    \quad \text{and} \quad
    (Z_Q^*, Z_Q) = (\vx^\transp \vbeta^*, \vx^\transp \vbeta) : \vx \sim Q.
\end{align}
As we capture in the following proposition, the risks for any linear model $f(\vx) = \phi(\vx^\transp \vbeta)$ depend only on a few parameters defining the covariances of the decision functions.
\begin{proposition}
    The vectors $(Z_P^*, Z_P)$ and $(Z_Q^*, Z_Q)$ are zero-mean bivariate normal random vectors.
    Furthermore, $\risk_P(\vbeta)$ and $\risk_Q(\vbeta)$ are functions only of the covariance matrices $\mathrm{Cov}(Z_P^*, Z_P) \in \reals^{2\times 2}$ and $\mathrm{Cov}(Z_Q^*, Z_Q) \in \reals^{2\times 2}$, respectively.
\end{proposition}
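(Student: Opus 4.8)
The plan is to establish the two assertions in turn, both of which follow from standard properties of Gaussian vectors once the decision functions are recognized as linear images of $\vx$.

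First, for the normality claim, I would stack the two decision functions into $(Z_P^*, Z_P)^\transp = \mB \vx$, where $\mB \in \reals^{2 \times d}$ is the matrix whose two rows are $\vbeta^{*\transp}$ and $\vbeta^\transp$. Since $\vx \sim \mc N(\vzero, \tfrac{1}{d}\mSigma_P)$ and linear images of a multivariate Gaussian are again Gaussian, the vector $\mB\vx$ is bivariate normal; and because $\vx$ is centered, so is $\mB\vx$. This immediately yields the first sentence of the proposition, and the identical argument under $Q$ handles $(Z_Q^*, Z_Q)$.

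Second, for the dependence-on-covariance claim, I would note that the covariance of the decision functions is read off directly from the transformation law, $\mathrm{Cov}(Z_P^*, Z_P) = \mB (\tfrac{1}{d}\mSigma_P) \mB^\transp \in \reals^{2\times 2}$. The key fact is that a zero-mean bivariate normal distribution is completely determined by its covariance matrix; hence the law of $(Z_P^*, Z_P)$ is itself a function of this single $2\times 2$ matrix. Since $\risk_P(\vbeta) = \expect[\vx \sim P]{\psi(Z_P^*, Z_P)}$ is the expectation of the fixed metric $\psi$ against this law, it too depends only on $\mathrm{Cov}(Z_P^*, Z_P)$, and the same reasoning applies to $\risk_Q$.

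The main (and only minor) obstacle is bookkeeping around well-definedness and degeneracy rather than any genuine difficulty. For the misclassification metric $\psi(z^*, z) = \ind\set{z^* z < 0}$ the integrand is bounded, so the expectation is automatically finite; for the squared-error metric one needs finite second moments, which hold because all moments of a Gaussian exist. I would also remark that the $2\times 2$ covariance may be singular --- for instance when $\vbeta$ and $\vbeta^*$ are collinear with respect to $\mSigma_P$ --- so that $(Z_P^*, Z_P)$ is a degenerate Gaussian supported on a line; the characterization of a centered Gaussian law by its (possibly singular) covariance matrix still holds, so the conclusion is unaffected.
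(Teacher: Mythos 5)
Your proof is correct and follows exactly the argument the paper implicitly relies on: the paper states this proposition without proof, treating it as an immediate consequence of the facts that linear images of Gaussians are Gaussian and that a zero-mean Gaussian law is determined by its covariance, which is precisely your two-step argument. Your additional remarks on integrability and possibly singular covariance are careful bookkeeping that does not change the route.
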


Thus, while the covariance matrices $\mSigma_P$, $\mSigma_Q$, and the model parameters $\vbeta^*$ and $\vbeta$ in general comprise on the order of $d^2$ parameters, the risks $\risk_P(\vbeta)$ and $\risk_Q(\vbeta)$ are characterized by no more than 6  parameters of the covariance matrices $\mathrm{Cov}(Z_P^*, Z_P)$ and $\mathrm{Cov}(Z_Q^*, Z_Q)$. 
In order to have a monotonic relation between the risks $\risk_P(\vbeta)$ and $\risk_Q(\vbeta)$ the dependency needs to be reduced to a single parameter, which requires additional assumptions on the class of models and the distribution shifts, which we state in the next subsection.

\subsection{Asympototic Estimation with Regularized Empirical Risk Minimzation}

We consider predictors $\hat{f} = \phi(\vx^\transp \widehat{\vbeta})$ where the parameter $\widehat{\vbeta}$ is the ridge-regularized empirical risk minimization (ERM) estimate 
\begin{align}
    \label{eq:erm-ridge}
    \widehat{\vbeta}(\setD, \ell, \lambda) = \argmin_\vbeta \sum_{i=1}^n \ell(y_i, \vx_i^\transp \vbeta) + \frac{\lambda}{2} \norm[2]{\vbeta}^2,
\end{align}
where $\setD = \{(\vx_1, y_1), \ldots, (\vx_n,y_n)\}$ is  a training set \editedinline{with covariates $\vx_i \sim P$}, $\ell \colon \reals^2 \to \reals$ a loss function, and $\lambda > 0$ a regularization parameter.

In finite dimensions, determining the in- and out-of distribution risk via determining the covariances $\mathrm{Cov}(Z_P^*, Z_P)$ and $\mathrm{Cov}(Z_Q^*, Z_Q)$ even for linear models with convex loss functions is not possible in general, making the task of identifying a monotonic risk relation difficult. Fortunately, however, it has recently been shown~\citep{thrampoulidis18, emami20, loureiro21} that as the problem dimensionality becomes large, thanks to concentration of measure effects, the solution to regularized ERM problems can be characterized by the solution of a system of scalar fixed point equations in only a few variables. Our result relies on such an asymptotic characterization by~\citet{loureiro21}.

In the following, we state the asymptotic setup, data generation process, and distribution-shift model that we consider as an assumption, so that we can refer to it later.

\begin{assumption}{A}[Setup]
    \label{assump:asymp-cgmt}
    \hfill    
    \begin{enumerate}[label=\normalfont{(A\arabic*)}, ref=A\arabic*]
        \item {\bf Asymptotically proportional regime}.  The training data set size $n$ and dimensionality $d$ converge with fixed finite ratio $d / n$.
        \label{assump:asymp-cgmt:proportional}

        \item {\bf Training data generation.} The training data $\setD_n = \{(\vx_1,y_1),\ldots, (\vx_n,y_n)\}$ is from distribution $P$, and independently generated as $\vx_i \overset{\iid}{\sim} \normal(\vzero, \tfrac{1}{d} \mSigma_P)$ and $y_i = \varphi(\vx_i^\transp \vbeta^*, \xi_i)$ for a labeling function $\varphi \colon \reals^2 \to \reals$, random noise $\xi_i$ independent of $\vx_i$ and ground truth coefficient vector $\vbeta^*$. Additionally, $\lim_{n \to \infty} \tfrac{1}{n} \expect{\norm[2]{\vy}^2} < \infty$.
        \label{assump:asymp-cgmt:data}

        \item {\bf Ground-truth coefficient vector and structure of the covariances.} The ground truth coefficient vector $\vbeta^*$ has elements drawn i.i.d.\ from a zero-mean sub-Gaussian distribution with variance $\sigma_\beta^2$ and is independent of $\setD$, and      $\mSigma_P = \mPi_P$ is a projection operator onto a subspace of dimension $d_P$ such that $d_P / d \to r_P \in (0, 1]$. Furthermore, the covariances $\mSigma_P$ and $\mSigma_Q$ are simultaneously diagonalizable.
        \label{assump:asymp-cgmt:gt-cov}

        \item {\bf Loss function of ERM.} The loss function $\ell$ is a proper, lower semi-continuous, convex function that is pseudo-Lipschitz of order 2 (see Definition~\ref{def:pl-continuous} in Appendix~\ref{sec:thm:monotonic-risks:proof} for a formal definition) such that for all $n$ and $c > 0$, 
        if $\norm[2]{\vz} \leq c \sqrt{n}$ then there exists a positive constant $C$ such that $\sup_{\vz' \in \partial_\vz \bar{\ell}(\vy, \vz)} \norm[2]{\vz'} \leq C \sqrt{n}$, where $\bar{\ell}(\vy, \vz) = \sum_{i=1}^n \ell(y_i, z_i)$.
        Furthermore, for the standard normal random vector $\vg \in \reals^n$, $\tfrac{1}{n} \expect{\bar{\ell}(\vy, \vg)}$ is uniformly bounded in $n$. 
        \label{assump:asymp-cgmt:loss}
    \end{enumerate}
\end{assumption}

The data generating process (\ref{assump:asymp-cgmt:data}) and the assumption on the loss function of ERM (\ref{assump:asymp-cgmt:loss}) are standard for most convex and linear ERM formulations used in machine learning for regression and classification. 

The assumptions on the ground truth coefficients and covariance matrices in Assumption \ref{assump:asymp-cgmt:gt-cov} are stronger than necessary;  our result can in fact even be proved for deterministic $\vbeta^*$ and essentially arbitrary $\mSigma_Q$ and non-isotropic $\mSigma_P$ (see Appendix~\ref{sec:thm:monotonic-risks:proof}). However, these assumptions greatly simplify the form of the results at little expense of generality. 

Assumption \ref{assump:asymp-cgmt:proportional} puts us in the proportional asymptotics regime, but the concentration effects are often realized at only modest data sizes; see Figure~\ref{fig:simulation-risk-shifts}. 

Under Assumption~\ref{assump:asymp-cgmt}, \editedinline{the ERM estimator has the form $\widehat{\vbeta} = \mPi_P(a \vbeta^* + c \vg)$ for some $\vg \sim \normal(\vzero, \mI_d)$ independent of $\vbeta^*$ (see Corollary~\ref{cor:asymp-cgmt}), extending the intuition from Theorem~\ref{thm:sufficient_conditions_for_linear_shift_in_regression}. Therefore,}
the covariances $\mathrm{Cov}(Z_P^*, Z_P)$ and $\mathrm{Cov}(Z_Q^*, Z_Q)$ \editedinline{have only two degrees of freedom ($a$ and $c$)} in the asymptotic limit for fixed $P$, $Q$, and $\vbeta^*$, even as we vary a number of different learning problem parameters such as loss function, noise level, labeling function, regularization strength, and number of training examples (see Lemma~\ref{lem:asymp-cgmt-cov}). 
\editedinline{Even with only two degrees of freedom, this is still not enough to imply a monotonic relation for general risks (see Section~\ref{sec:no-monotonic-relation}); however, remarkably, it turns out that this specific structure is sufficient for monotonicity for both squared error and misclasification error.}

For this Setup (\ref{assump:asymp-cgmt}),  %
\begin{edited}
the monotonic relations between in- and out-of-distribution risks
for distributions $P$ and $Q$ and ground truth $\vbeta^*$ are entirely described by only three limiting scalar parameters of the distribution shift, which we define next. Our assumption that these quantities converge is stronger than necessary; we only need that these quantities be almost surely uniformly bounded (e.g., by making $\mSigma_Q$ uniformly bounded in operator norm and have non-vanishing subspace overlap with $\mPi_P$), in which case we can simply apply this assumption and our results to each convergence subsequence. However, to keep the statements of our results clean, we assume convergence of these parameters.%
\end{edited}
\begin{assumption}{B}[Parameters]
    \label{assump:asymp-simple}
    The following limits exist for $\vbeta_P^* \defeq \mPi_P \vbeta^*$ almost surely:
    \begin{gather}
        \gamma = \lim_{d \to \infty} \frac{\vbeta_P^{*\transp} \mSigma_Q \vbeta_P^*}{d_P \sigma_\beta^2},
        \qquad
        \mu = \lim_{d \to \infty} \frac{\vbeta^{*\transp} \mSigma_Q \vbeta^*}{\vbeta_P^{*\transp} \mSigma_Q \vbeta_P^*} \geq 1, 
        \qquad
        \kappa = \lim_{d \to \infty} \frac{ \tr[\mSigma_Q \mPi_P]}{d_P}.
    \end{gather}
\end{assumption}

The parameters $\gamma$ and $\mu$ are straightforward to interpret.
The parameter $\gamma$ captures the ratio of the energy of $\vbeta_P^*$ as measured by $\mSigma_Q$ compared to $\mSigma_P$. If $\mSigma_Q$ is a scaled projection operator $\tau \mPi_Q$ \editedinline{for some $\tau > 0$} with $d_{PQ}$ dimensions overlapping with $\mPi_P$, then $\gamma = \tau d_{PQ} / d_P$. The parameter $\mu$ captures the ratio of the total energy of $\vbeta^*$ as measured by $\mSigma_Q$ compared to its restriction to the subspace determined by $\mPi_P$. For the same scaled projection operator example, if $d_Q$ is the dimension of the subspace of $\mPi_Q$, then $\mu = d_Q / d_{PQ}$.

The parameter $\kappa$ introduces the nuance of \emph{task dependence} of the distribution shift. 
Note that the ground-truth parameter $\vbeta^*$ and the covariance matrix $\mSigma_Q$ might be  statistically correlated. 
(We might like to consider $\mSigma_Q$ to be deterministic, whereas $\vbeta^*$ is a random variable. However, our results in Appendix~\ref{sec:thm:monotonic-risks:proof} hold almost surely for a fixed, deterministic, covariance--ground-truth pair $(\mSigma_Q, \vbeta^*)$; so we can think about this pair as deterministic or correlated). 
As an example of such a correlation, $\mSigma_Q$ may have larger eigenvalues in the directions where $\vbeta^*$ is larger in magnitude, and therefore $\gamma > \kappa$. Intuitively, since we assume $\mSigma_P$ to be isotropic on the subspace, this means that at test time, the prediction depends more on coefficients that were learned better during training, making the problem easier. Conversely, if $\gamma < \kappa$, $\mSigma_Q$ and $\vbeta^*$ are anti-correlated, and the prediction becomes more difficult since features that were learned poorly are emphasized more highly. 
This can be summarized with the ratio $\kappa / \gamma$, which when 
less than 1 implies an easier distribution shift, and when
greater than 1 implies a harder one. When $\gamma = \kappa$, we say the shift is \emph{task-independent}. The case of \emph{task-dependent} shifts where $\kappa \neq \gamma$ cannot be captured by the $\mSigma_Q = \tau \mPi_Q$ we used to explain $\gamma$ and $\mu$, as it does not allow $\mSigma_Q$ and $\vbeta^*$ to be correlated.

\begin{figure}[t]
    \centering
    \includegraphics[width=5.5in]{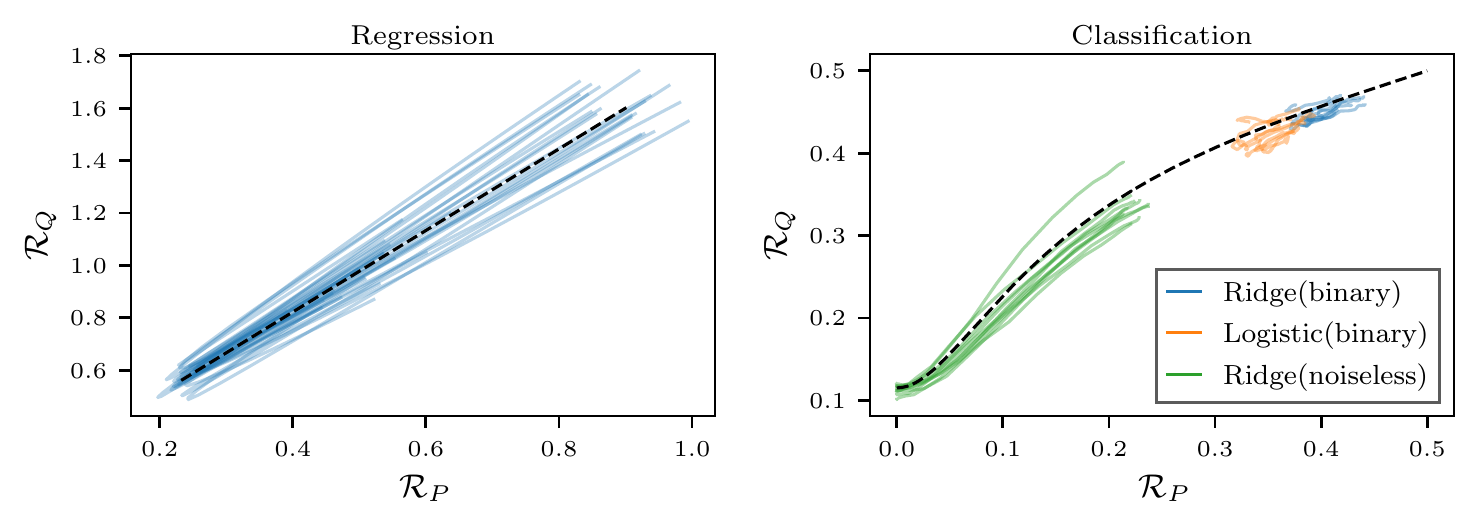}
    \caption{The risk relationships for data generated according to our distribution shift model match our theoretical results (dashed). Each colored curve corresponds to a sweep of the regularization strength of a single model on a single random trial. For both plots, we consider a subspace shift model with $\mSigma_Q = \tau \mPi_Q$ having $d_P/d = 0.9$, $d_Q/d = 0.8$, $d_{PQ}/d = 0.7$, and $\tau = 2$.
    We use $n = 1000$, $d = 800$,  $\sigma_\beta^2 = 1$, and have $\gamma \approx 1.56$, and $\mu \approx 1.14$. 
    \textbf{Left:} Mean squared error for ridge regression models (blue) trained on $y_i = \vx_i^\transp \beta^* + \sigma \xi_i$ for $\sigma^2 = 0.2$ and $\kappa = \gamma$. Although the tuning parameter overshoots the minimizer in the parameter sweep, it still always lies approximately on the line. \textbf{Right:} Misclassification error for ridge regression (blue) and logistic regression (orange) models with ridge penalty trained on corrupted binary labels generated as $\Pr(y_i = \mathrm{sign}(\vx_i^\transp \vbeta^*)) = 0.8$ with $\kappa = 5 \gamma$. We also plot ridge regression trained on noiseless labels $y_i = \vx_i^\transp \vbeta^*$ (green) to illustrate that the result is independent of the labeling function, depending only on the feature distribution shift.}
    \label{fig:simulation-risk-shifts}
\end{figure}

\subsection{Main Result}
\label{sec:mainresoult-assymptotic}

We are now ready to state and discuss our main result. For the proof as well as a more general result without Assumption~\ref{assump:asymp-cgmt:gt-cov} that covers arbitrary deterministic $(\mSigma_P, \mSigma_Q, \vbeta^*)$, see Theorems~\ref{thm:monotonic-risk-squared-error} and \ref{thm:monotonic-risk-classification-error} in Appendix~\ref{sec:thm:monotonic-risks:proof}.

\begin{theorem}[Monotonic risk relations]
    \label{thm:monotonic-risks}
    Under Assumption~\ref{assump:asymp-cgmt}, the following hold with probability 1 in the limit as $d \to \infty$ for all $\widehat{\vbeta} = \widehat{\vbeta}(\setD_n, \ell, \lambda)$ solving \eqref{eq:erm-ridge}.
    \begin{enumerate}[label=\normalfont{(\alph*)}]
    
        \item Regression. For $\psi(z^*, z) = (z^* - z)^2$, there exists a monotonic relation between $\risk_Q(\widehat{\vbeta})$ and $\risk_P(\widehat{\vbeta})$ that depends only on $(P, Q, \vbeta^*)$ if and only if Assumption~\ref{assump:asymp-simple} holds with $\gamma = \kappa$. If this relation exists, it is
        \begin{align}
            \risk_Q(\widehat{\vbeta}) = \gamma \risk_P(\widehat{\vbeta}) + \gamma r_P \sigma_\beta^2 (\mu - 1).
        \end{align}

        \item Classification. For $\psi(z^*, z) = \ind \set{z^* z < 0}$, there exists a monotonic relation between $\risk_Q(\widehat{\vbeta})$ and $\risk_P(\widehat{\vbeta})$ that depends only on $(P, Q, \vbeta^*)$ if and only if Assumption~\ref{assump:asymp-simple} holds.
        If this relation exists, it is
        \begin{align}
            \sec^2 (\pi \risk_Q(\widehat{\vbeta})) = \tfrac{\kappa \mu}{\gamma} \paren{\sec^2(\pi \risk_P(\widehat{\vbeta})) - 1} + \mu,
        \end{align}
        where $\sec(t) = \tfrac{1}{\cos(t)}$. %
        Furthermore, if $\mu = 1$,
        then
        \begin{align}
            \log ( \tan (\pi \risk_Q(\widehat{\vbeta}))) = \log ( \tan (\pi \risk_P(\widehat{\vbeta}))) + \tfrac{1}{2} \log \tfrac{\kappa}{\gamma}.
        \end{align}
    \end{enumerate}
\end{theorem}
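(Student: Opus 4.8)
The plan is to reduce everything to the two scalars $a$ and $c$ supplied by Corollary~\ref{cor:asymp-cgmt}, which gives $\widehat{\vbeta} = \mPi_P(a\vbeta^* + c\vg)$ with $\vg \sim \normal(\vzero, \mI_d)$ independent of $\vbeta^*$. First I would compute the four distinct entries of each covariance matrix $\mathrm{Cov}(Z_P^*, Z_P)$ and $\mathrm{Cov}(Z_Q^*, Z_Q)$ explicitly in terms of $a$, $c$, and the parameters of Assumption~\ref{assump:asymp-simple}. Since $\vx \sim \normal(\vzero, \tfrac1d\mSigma)$, each entry is a normalized quadratic form such as $\tfrac1d\vbeta^{*\transp}\mSigma\widehat{\vbeta}$. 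The three ingredients needed to evaluate their almost-sure limits are: the convergence $\tfrac1d\norm[2]{\vbeta_P^*}^2 \to r_P\sigma_\beta^2$ together with the definitions of $\gamma,\mu,\kappa$ for the quadratic forms in $\vbeta^*$; the fact that $\mSigma_P = \mPi_P$ commutes with $\mSigma_Q$ (simultaneous diagonalizability), which annihilates the off-subspace cross terms and turns $\tfrac1d\vg^\transp\mPi_P\mSigma_Q\mPi_P\vg$ into $\tfrac1d\tr(\mSigma_Q\mPi_P) \to r_P\kappa$; and the vanishing of every term linear in $\vg$, since $\vg$ is independent of $\vbeta^*$ and the resulting inner products concentrate at $0$. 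This yields, for $Q$, the entries $\mathrm{Var}(Z_Q^*) = \mu\gamma r_P\sigma_\beta^2$, $\mathrm{Cov}(Z_Q^*,Z_Q) = a\gamma r_P\sigma_\beta^2$, and $\mathrm{Var}(Z_Q) = a^2\gamma r_P\sigma_\beta^2 + c^2\kappa r_P$, and analogously for $P$ with $\gamma=\mu=\kappa=1$.

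For the regression part I would substitute these into $\risk = \mathrm{Var}(Z^*) - 2\mathrm{Cov}(Z^*,Z) + \mathrm{Var}(Z)$, writing both risks in the variables $u \defeq r_P\sigma_\beta^2(1-a)^2$ and $v \defeq c^2 r_P$, which gives $\risk_P = u + v$ and $\risk_Q = \gamma u + \kappa v + \gamma r_P\sigma_\beta^2(\mu-1)$. Because $(a,c)$, and hence $(u,v)$, genuinely vary over a two-dimensional set as the learning parameters (loss, noise level, $\lambda$, ratio $d/n$) change, $\risk_Q$ is a function of $\risk_P = u+v$ alone precisely when the coefficients of $u$ and $v$ coincide, i.e.\ $\gamma=\kappa$; in that case $\risk_Q = \gamma\risk_P + \gamma r_P\sigma_\beta^2(\mu-1)$, whereas otherwise two parameter choices with equal $u+v$ but different $v$ produce different $\risk_Q$, ruling out any monotonic relation depending only on $(P,Q,\vbeta^*)$. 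This gives both directions of the stated equivalence.

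For the classification part I would exploit the scale invariance of the $0$--$1$ loss: the misclassification error of a zero-mean bivariate normal depends only on the correlation $\rho$ through the standard Gaussian sign formula $\Pr(Z^* Z < 0) = \tfrac1\pi\arccos\rho$, so $\cos(\pi\risk) = \rho$ and $\sec^2(\pi\risk) = 1/\rho^2$. Computing $\rho_P^2 = a^2\sigma_\beta^2/(a^2\sigma_\beta^2+c^2)$ and $\rho_Q^2 = a^2\gamma\sigma_\beta^2/[\mu(a^2\gamma\sigma_\beta^2+c^2\kappa)]$ from the covariance entries, both risks now depend on $(a,c)$ only through the single scale-invariant ratio $t \defeq c^2/(a^2\sigma_\beta^2) = \sec^2(\pi\risk_P)-1$. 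Eliminating $t$ yields $\sec^2(\pi\risk_Q) = \tfrac{\kappa\mu}{\gamma}(\sec^2(\pi\risk_P)-1) + \mu$; because classification collapses the two degrees of freedom to one regardless of whether $\gamma=\kappa$, a monotonic relation exists exactly when the three parameters converge, i.e.\ when Assumption~\ref{assump:asymp-simple} holds, which establishes the equivalence. Specializing to $\mu=1$, using $\sec^2-1=\tan^2$, and taking logarithms produces the final log-tan identity.

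The step I expect to be the main obstacle is the rigorous ``only if'' argument in both parts: one must show that, as the learning-problem parameters range over admissible values, $(a,c)$ actually sweeps out enough of the parameter space -- a genuinely two-dimensional set for regression, and a full interval of ratios $t$ for classification -- so that no accidental functional collapse can masquerade as a monotonic relation, and conversely that a relation failing to depend only on $(P,Q,\vbeta^*)$ is excluded. Establishing the requisite range of $(a,c)$ rests on the fixed-point characterization underlying Corollary~\ref{cor:asymp-cgmt}; the covariance computation and the algebraic eliminations themselves are routine once that structure and the concentration limits above are in hand.
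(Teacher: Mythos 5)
Your sufficiency direction is correct, and every computation in it matches the paper's own final step: under Assumption~\ref{assump:asymp-cgmt} the estimator collapses to $\widehat{\vbeta} = \mPi_P(a\vbeta^* + c\vg)$, the limiting covariance entries are exactly the ones you list, the regression risks become affine in $(u,v) = (r_P\sigma_\beta^2(1-a)^2,\, c^2 r_P)$, and the classification risks depend on $(a,c)$ only through $t = c^2/(a^2\sigma_\beta^2)$; eliminating the parameters then yields precisely the stated relations. This is the same reduction the paper performs (its Step~4, after Lemma~\ref{lem:asymp-cgmt-cov}), so the ``if'' half of the theorem is in good shape.

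The genuine gap is the one you flag yourself: the ``only if'' direction, which is half of an if-and-only-if theorem, and the route you propose for closing it is not the paper's and would be substantially harder. You plan to show that, as the loss, labeling function, noise, $\lambda$, and $n$ vary, the realized pairs $(a,c)$ sweep out a two-dimensional open set (resp.\ a full interval of ratios $t$), by analyzing the fixed-point equations behind Corollary~\ref{cor:asymp-cgmt}. The paper never proves such a coverage statement, and it does not need to. Instead it proves a standalone device, Lemma~\ref{lem:linear-risk-monotonic}: treating the risks as functions $R_P(\va,\vb) = h(\vw(\va)^\transp \vv_P(\vb) + v_P^0(\vb))$ and $R_Q(\va,\vb)$ on open sets $\setA \times \setB$, it shows via linear independence of the components of $(\vw(\va),1)$ and gradient arguments in $\vb$ that a monotonic $u$ with $R_Q = u \circ R_P$ on all of $\setA \times \setB$ exists if and only if $\vv_Q = \rho\, \vv_P$ and $v_Q^0 = \rho\, v_P^0 + u_0$. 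Necessity is thus formalized as a functional identity over the parameter domain supplied by the asymptotic theory --- in the general Theorems~\ref{thm:monotonic-risk-squared-error} and~\ref{thm:monotonic-risk-classification-error} this is three parameters $(a,b,c)$, with $b$ entering through the resolvent $\inv{\mSigma_P + b \mI_d}$ and only collapsing into $(a,c)$ in the projection case you work in --- so one never has to determine which parameter values actual ERM problems realize. Relatedly, your classification ``only if'' (that a monotonic relation forces Assumption~\ref{assump:asymp-simple}) is asserted rather than argued; the paper's position is that without convergence of $\gamma,\mu,\kappa$ the limiting conditions are only defined along subsequences, and the relation can differ across subsequences. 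In short: the computations and the reduction are right, but the necessity half needs something like Lemma~\ref{lem:linear-risk-monotonic} rather than the parameter-coverage argument you sketch.
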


Our result states that we have a monotonic relation between in- and out-of-distribution risks under our distribution shift model, for \emph{all estimates} $\widehat{\vbeta}(\setD_n, \ell, \lambda)$ that solve a problem of the form \eqref{eq:erm-ridge}, including, e.g., as we vary the training set size $n$, the regularization parameter $\lambda$, or even the labeling $\varphi$ or loss function $\ell$.

Figure~\ref{fig:simulation-risk-shifts} illustrates this behavior approximately in finite dimensions; there we plot the prediction of our theory along with realizations of data and estimates $\widehat{\vbeta}(\setD_n, \ell, \lambda)$. %
We see effects described by Theorem~\ref{thm:monotonic-risks} in action: two models with the same risk on the distribution that generated the training data have the same risk on the new distribution, regardless of whether they were trained using regression or classification labels, of which particular loss function was used in training, of the training sample size, or of the level of label noise. As we can see in the figure, in finite dimensions individual models can have locally non-monotonic relationships, and it is only as the system becomes asymptotically large and concentration of measure phenomena are realized that the monotonic relation emerges.

For both regression and classification, the risk relations are \emph{linear}, with classification requiring the transformation $\risk \mapsto \sec^2(\pi \risk)$ first before it becomes linear. This linearity is no coincidence; 
as we prove, whenever the risk depends linearly (after a fixed transformation) on some of the parameters of the covariances $\mathrm{Cov}(Z_P^*, Z_P)$ and $\mathrm{Cov}(Z_Q^*, Z_Q)$, as is the case for both squared error and misclassification error, the only monotonic relation that can exist is a linear one. We refer reader to 
Appendix~\ref{sec:thm:monotonic-risks:proof} for proof details.

The risk relations are similar in that for both regression and classification, $\mu > 1$ indicates irreducible error due to a new subspace in $Q$ that was unseen during training on $P$.  
However, the regression and classification risk relations also have a key difference: the squared error risk relation for regression only holds when $\gamma = \kappa$---i.e., only for task-independent shifts. This means that the subspace shift model with $\mSigma_Q = \tau \mPi_Q$ captures all aspects of the regression risk relation. 

The classification risk relation, on the other hand, holds for task-dependent shifts with $\gamma \neq \kappa$. In particular, if we let $\mu \to 1$, then we find that the risk relation is remarkably similar to the empirical observation by \citet{miller21} that the risk relation is linear after applying an inverse Gaussian cumulative distribution function transformation $\Phi^{-1}(\cdot)$. Note that the $\log(\tan(\pi \cdot))$ transformation in Theorem~\ref{thm:monotonic-risks} is strikingly similar to $\Phi^{-1}(\cdot)$; in fact, $\sup_{u \in \reals} |\frac{1}{2}\Phi(u / \sqrt{2}) - \frac{1}{\pi} \tan^{-1}(e^u)| \leq 0.01$.
This suggests that such ``natural'' distribution shifts formed by repeated dataset collection may have no subspace shift component ($\mu \to 1$), but rather only a task-dependent shift ($\gamma \neq \kappa$).
We illustrate the behavior of the classification risk shift for different values of $\mu$ and $\kappa / \gamma$ in Figure~\ref{fig:classification-risk-shifts} (left).

\begin{figure}[t]
    \centering
    \includegraphics[width=\textwidth]{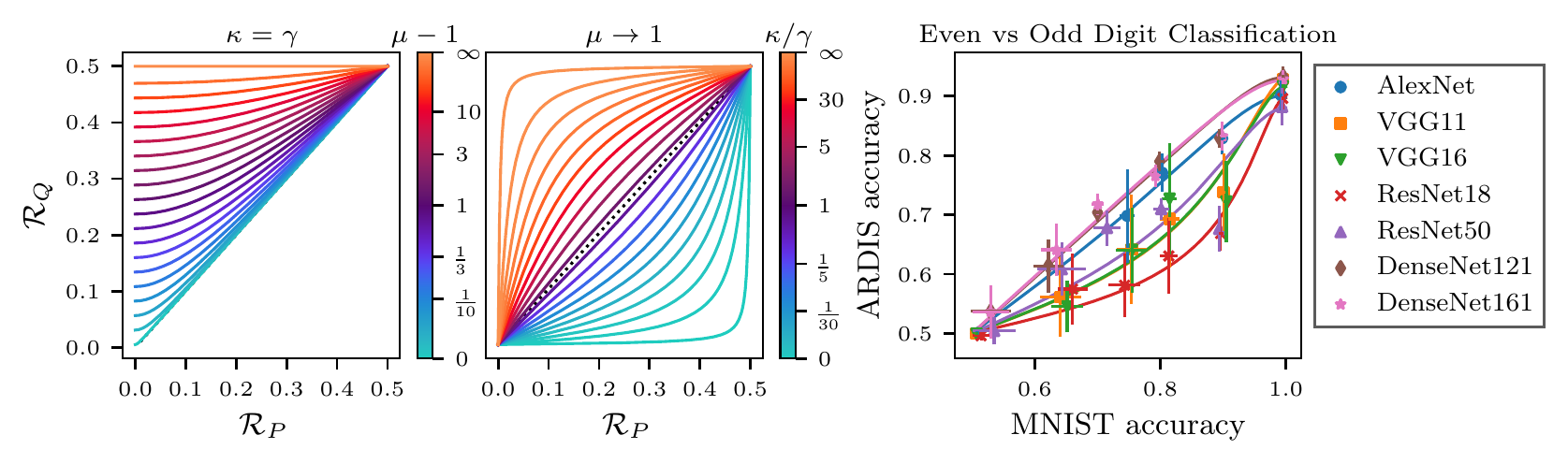}
    \caption{\textbf{Left/middle:} We plot the theoretical risk relation curves for misclassification error. General behavior of the risk relation is a combination of the two behaviors we demonstrate here. \emph{Left:} $\risk_Q$ converges to a nonzero limiting value as $\risk_P \to 0$, which is determined by $\mu$. \emph{Middle:} While keeping the same limiting value of $\risk_Q$ as $\risk_P \to 0$, the shift is harder as $\kappa / \gamma$ gets larger (red) and easier as $\kappa / \gamma$ gets smaller (blue). 
    \textbf{Right:} We train deep network models on classifying even vs. odd handwritten digits from the MNIST and ARDIS datasets, evaluating test performance during training as validation accuracy milestones are reached (dots with errorbars over 8 trials). We also plot our theoretical risk relation with $\mu$ and $\kappa / \gamma$ chosen to minimize squared error of the fit for each model. 
    }
    \label{fig:classification-risk-shifts}
\end{figure}

For different feature spaces, our theory predicts different monotonic relations. This is also observed in practice: in Figure~\ref{fig:classification-risk-shifts} (right), we show that except for the ResNet50 model, our theory predicts well the risk relation as a function of early stopping for deep network models trained on MNIST~\citep{lecun10}, an easy handwritten digit classification task, and applied to ARDIS~\citep{kusetogullari20}, a more difficult handwritten digits dataset. 
See Appendix~\ref{sec:motivation-feature-scaling} for a discussion of how this distribution shift fits the task-dependence shift model.
The fits in Figure~\ref{fig:classification-risk-shifts} show that different neural network models, which have their own respective implicit feature spaces, 
result in different monotonic risk relations.
Because the shift is from an easy task to a hard one, we expect to see similar behavior to the case $\gamma < \kappa$, which matches the general trend of the fits, with some fits tending toward more or less task dependence based on model class.
The tendency of models to dip in performance on ARDIS around 0.9 accuracy on MNIST is, we believe, a result of the change in the learned features of the networks during training, and is worst for ResNet50.

\begin{edited}
\subsection{Settings without Monotonic Relations}
\label{sec:no-monotonic-relation}

Given the generality of the result in Theorem~\ref{thm:monotonic-risks} across essentially any labeling function, training loss, and regularization strength, one might conjecture that the result holds for any risk and for any regularized ERM estimator. This is not the case, however, as the monotonic risk relations only arise due to the special structure of the risks and of ridge regularization.

As mentioned in the previous section, and as we elaborate on in the proof in Appendix~\ref{sec:thm:monotonic-risks:proof}, monotonic risk relations arise when the metric $\psi$ depends linearly on some one-dimensional function of the decision function covariances $\mathrm{Cov}(Z_P^*, Z_P)$ and $\mathrm{Cov}(Z_Q^*, Z_Q)$. The fact that squared error and misclassification error depend on different functions of the covariances is the first clue that the monotonicity of risk relations might not be universal. Indeed, the risk relations that arise are substantially distinct, as the misclassification relation captures task dependent shifts while squared error does not. 

As important counterexamples, popular convex losses used to train classification models such as the hinge loss and logistic loss do not exhibit monotonic risk relations. By Lemma~\ref{lem:asymp-cgmt-cov}, we know that the decision function covariances have only three degrees of freedom $a, b, c$ (for general $\mSigma_P$), but that the monotonic relation should hold regardless of how these are varied. In 
Figure~\ref{fig:logistic-hinge-losses}, however, we show that as we vary even only a single parameter (here $a$), the hinge loss and logistic loss do \emph{not} exhibit monotonic relations, while the misclassification error does. 
In general, monotonicity is further destroyed as we vary more degrees of freedom. This counterexample suggests that practitioners should be careful in their choice of validation metric: optimization of the in-distribution validation loss may not coincide with optimization of the out-of-distribution loss. Choosing a validation metric for which we expect monotonicity, such as misclassification error, is the better choice.

\begin{figure}[t]
    \centering
    \includegraphics[width=5.5in]{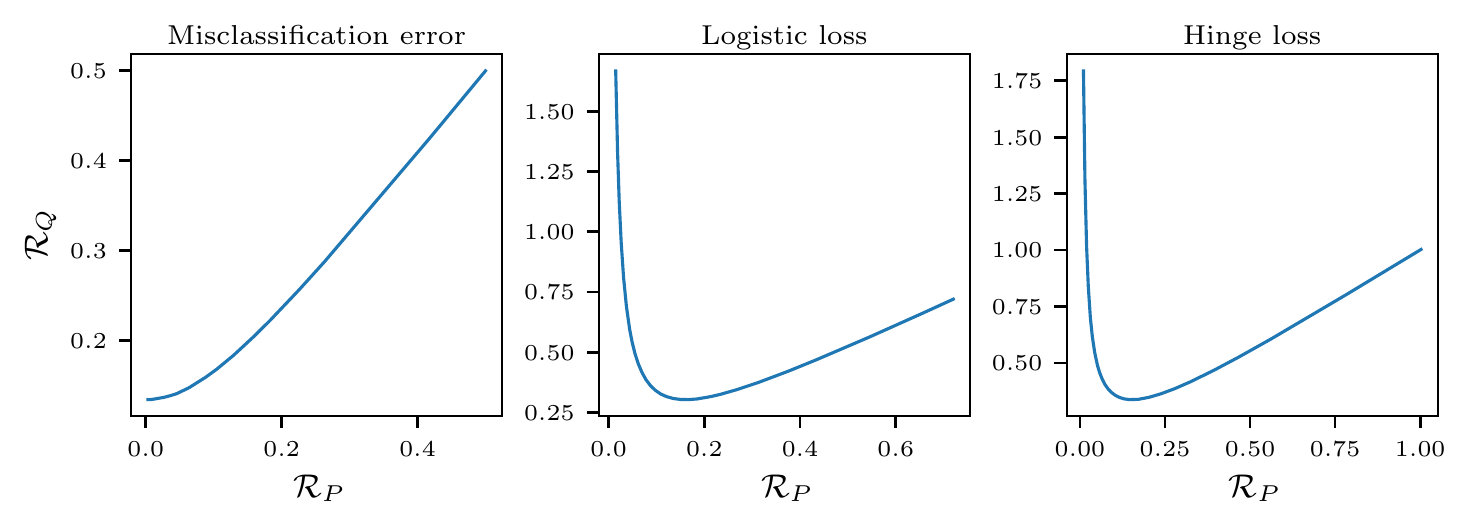}
    \caption{\editedinline{Using Monte Carlo simulation with $10^6$ random draws of $(Z^*, Z)$ from both $P$ and $Q$ under Assumption~\ref{assump:asymp-cgmt}, we compute the risk relationships for misclassification error (\textbf{left}) alongside the logistic loss $\psi(z^*, z) = \log(1 + \exp(-\mathrm{sign}(z^*) z))$ (\textbf{middle}) and the hinge loss $\psi(z^*, z) = \max \set{1 - \mathrm{sign}(z^*) z}$ (\textbf{right}). Here we consider a subspace shift model with $d_P/d = 0.9$, $\sigma_\beta = 1$, $\gamma = 1$, $\kappa = 1$, $\mu = 1.2$. We fix degrees of freedom $b = c = 1$ and vary $a$. Unlike the misclassification error, these losses do not exhibit monotonic risk relationships as a function of $a$.}}
    \label{fig:logistic-hinge-losses}
\end{figure}

Another way that monotonicity can be broken is by changing the dependence of the decision function covariance on the underlying free parameters $a, b, c$. This occurs, for example, if we change the regularizer from the ridge penalty $\tfrac{1}{2}\norm[2]{\cdot}^2$ to some other regularizer such as the $\ell_1$ norm $\norm[1]{\cdot}$. As we show in Appendix~\ref{sec:extend-penalty}, for separable regularizers, we still have monotonic relations, but now for only a restricted class of distributions shifts. Specifically, we only have monotonic relations in the task-independent setting $(\gamma = \kappa)$, since in this case the covariances still admit similar linear decompositions. Otherwise, the nonlinearity due to the regularization penalty destroys monotonicity.

\end{edited}

\section{Linear Relations in Linear Inverse Problems}
\label{sec:linear_shift_in_linear_inverse_problems}

In this section, we switch to signal reconstruction problems, where linear relationships are also observed for signal reconstruction methods under distribution shifts \citep{darestani21}.

We consider a linear inverse problem setup where the measurement $\vy$ is generated by a linear transform of the signal $\vx$ plus some additive noise $\vz$ independent of $\vx$, i.e., $\vy = \mA\vx + \vz$,
where $\mA \in \mathbbm{R}^{n \times d}$ with $n \le d$, $\vx \in \mathbbm{R}^d$ and $\vz \in \mathbbm{R}^n$. 
We assume a similar signal subspace model as in Section~\ref{sec:linear_shift_in_linear_regression_in_finite_dimension}: for data from distribution $P$, the signal is given by \editedinline{$\vx = \mU_P \vc_P$}, where $\mU_P \in \mathbbm{R}^{d \times d_P}$ has orthonormal columns and $\vc_P \in \reals^{d_P}$ is \editedinline{zero-mean and has identity covariance}. The noise variable $\vz$ is independent of \editedinline{$\vc_P$} and has mean zero and covariance matrix $\sigma_P^2\mI$. The data from distribution $Q$ is generated in the same manner, but the signal is from a different subspace, i.e., \editedinline{$\vx = \mU_Q \vc_Q$}, where $\mU_Q \in \mathbbm{R}^{d \times d_Q}$ is orthonormal, \editedinline{$\vc_Q \in \reals^{d_Q}$ is zero-mean and has identity covariance}, and the covariance matrix of the independent noise $\vz$ is $\sigma_Q^2\mI$. We assume that the number of measurements is larger than the subspace dimension, i.e., \editedinline{$d_P, d_Q \le n$}.

We consider the class of signal estimates given by
\begin{align}
    \widehat{\vx}_\lambda(\vy) = \mW^* \vy,\quad \mW^* = \argmin_{\mW} \EX[P]{\norm[2]{\vx - \mW \vy}^2} + \lambda \norm[F]{\mW}^2.
\end{align}
Define the risk of an estimate $\widehat{\vx}$ on distribution $P$ with respect to the normalized squared error as $\risk_P(\widehat{\vx}) = \EX[P]{\norm[2]{(\vx - \widehat{\vx})/\sqrt{d_P}}^2}$ and likewise for distribution $Q$.  %
We show that the relationship between $\risk_P(\widehat{\vx}_\lambda)$ and $\risk_Q(\widehat{\vx}_\lambda)$ is captured by a similarity between subspaces $\mU_P$ and $\mU_Q$ that is determined by the principal angles between them. Let $\vtheta \in \reals^{\min\{d_P, d_Q\}}$ be the principal angles between subspaces spanned by the columns of $\mU_P$ and $\mU_Q$, and define
$%
    a = \norm[2]{cos(\vtheta)}^2 / d_Q.
$%

\begin{figure}[t]
\centering
\begin{tabular}{cc}
\begin{tikzpicture}
\begin{axis} [
    title={SNR $= 1$},
    title style = {font=\scriptsize, yshift=-1.6ex},
    xmin=0.47, xmax=0.92, ymin=0.46, ymax=1.31,
    xtick distance = 0.1,
    ytick distance = 0.1,
    yticklabel style = {/pgf/number format/precision=3,
  /pgf/number format/fixed},
    every tick label/.append style={font=\tiny},
    grid = both,
    minor tick num = 1,
    major grid style = {lightgray!25},
    minor grid style = {lightgray!25},
    width = 0.3\textwidth,
    height = 0.3\textwidth,
    x label style={at={(axis description cs:0.5,0.1)}, anchor=north},
    y label style={at={(axis description cs:0.26,0.5)},  anchor=south},
    xlabel = {In-distribution risk}, 
    ylabel = {Out-of-distribution risk},
    label style = {font=\scriptsize},
]
\addplot [
    mark size=1.0pt, smooth, mark=*, brickred, mark options={solid, brickred, fill=brickred}]
    table [x=risk_p_snr1, y=risk_q_a0.0_snr1] {./figures/denoising/average_l2_loss.csv};
\addplot [
    mark size=1.0pt, smooth, mark=*, darkseagreen, mark options={solid, darkseagreen, fill=darkseagreen}]
    table [x=risk_p_snr1, y=risk_q_a0.5_snr1] {./figures/denoising/average_l2_loss.csv};
\addplot [
    mark size=1.0pt, smooth, mark=*, steelblue, mark options={solid, steelblue, fill=steelblue}]
    table [x=risk_p_snr1, y=risk_q_a1.0_snr1] {./figures/denoising/average_l2_loss.csv};
\end{axis}
\end{tikzpicture}
&
\begin{tikzpicture}
\begin{axis} [
    title={SNR $= 100$},
    title style = {font=\scriptsize, yshift=-1.6ex},
    xmin=-0.05, xmax=0.94, ymin=-0.05, ymax=1.11,
    xtick distance = 0.2,
    ytick distance = 0.2,
    yticklabel style = {/pgf/number format/precision=3,
  /pgf/number format/fixed},
    every tick label/.append style={font=\tiny},
    grid = both,
    minor tick num = 1,
    major grid style = {lightgray!25},
    minor grid style = {lightgray!25},
    width = 0.3\textwidth,
    height = 0.3\textwidth,
    x label style={at={(axis description cs:0.5,0.1)}, anchor=north},
    y label style={at={(axis description cs:0.26,0.5)},  anchor=south},
    xlabel = {In-distribution risk}, 
    ylabel = {Out-of-distribution risk},
    label style = {font=\scriptsize},
    legend entries = {$a=0.0$, $a=0.5$, $a=1.0$},
    legend style = {cells={anchor=east}, legend pos=outer north east, font=\scriptsize},
]
\addplot [
    mark size=1.0pt, smooth, mark=*, brickred, mark options={solid, brickred, fill=brickred}]
    table [x=risk_p_snr100, y=risk_q_a0.0_snr100] {./figures/denoising/average_l2_loss.csv};
\addplot [
    mark size=1.0pt, smooth, mark=*, darkseagreen, mark options={solid, darkseagreen, fill=darkseagreen}]
    table [x=risk_p_snr100, y=risk_q_a0.5_snr100] {./figures/denoising/average_l2_loss.csv};
\addplot [
    mark size=1.0pt, smooth, mark=*, steelblue, mark options={solid, steelblue, fill=steelblue}]
    table [x=risk_p_snr100, y=risk_q_a1.0_snr100] {./figures/denoising/average_l2_loss.csv};
\end{axis}
\end{tikzpicture} 
\end{tabular}
\caption{Non-linear relationship between risks $\risk_P(\widehat{\vx})$ and $\risk_Q(\widehat{\vx})$ in the low SNR regime (\textbf{left}) and approximate linear relationship in the high SNR regime (\textbf{right}) in signal denoising. Each curve plots the risks of estimate $\widehat{\vx}_\lambda$ as the parameter $\lambda$ varies. The signal-to-noise ratio is defined as $\text{SNR} = 1/\sigma_P^2$ and we set $\sigma_Q^2 = \sigma_P^2$. Shifts in the subspace is captured by $a = \norm[2]{cos(\vtheta)}^2 / d_Q$.}
\label{fig:denoising_linear_shift}
\end{figure}
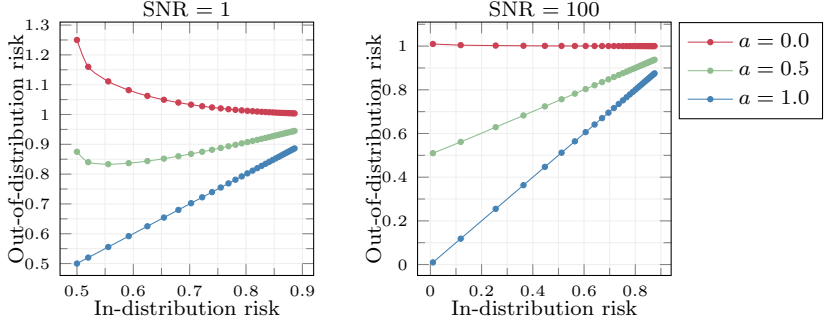

\paragraph{Denoising.}

We start with denoising where the measurement matrix is the identity, i.e., $\mA = \mI$. 
It can be shown that 
$%
    \widehat{\vx}_\lambda(\vy) = \alpha \mU_P \mU_P^\T \vy,
$%
where $\alpha = 1/(1 + \sigma_P^2 + \lambda)$. The following relationship between the risks 
$\risk_P(\widehat{\vx}_\lambda)$ and $\risk_Q(\widehat{\vx}_\lambda)$ holds.

\begin{theorem}
The risks $\risk_P(\widehat{\vx}_\lambda)$ and $\risk_Q(\widehat{\vx}_\lambda)$ of the signal estimate $\widehat{\vx}_\lambda$ obey
\begin{align}
    \risk_Q(\widehat{\vx}_\lambda) = a \risk_P(\widehat{\vx}_\lambda) + (1 - a) + \alpha^2 \left( \frac{d_P}{d_Q} \sigma_Q^2 - a \sigma_P^2 \right),
\end{align}
where $\alpha = 1/(1 + \sigma_P^2 + \lambda)$.
\label{thm:l2_risk_relationship_in_denoising}
\end{theorem}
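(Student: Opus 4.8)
The plan is to compute each risk in closed form as a function of the single free parameter $\alpha = 1/(1+\sigma_P^2+\lambda)$ together with the fixed problem data, and then eliminate $\alpha$ to expose the affine relationship. The key tool is the orthogonal decomposition of the reconstruction error into a signal (bias) term and an independent noise term; this is clean because $\vc_P$, $\vc_Q$, and $\vz$ are mutually independent and zero-mean, so only their second moments enter and all cross terms vanish in expectation.

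First I would compute the in-distribution risk. On $P$ we have $\vy = \mU_P \vc_P + \vz$, so $\widehat{\vx}_\lambda = \alpha \mU_P \vc_P + \alpha \mU_P \mU_P^\T \vz$ and the residual $\vx - \widehat{\vx}_\lambda = (1-\alpha)\mU_P\vc_P - \alpha \mU_P\mU_P^\T\vz$ lies entirely in $\mathrm{range}(\mU_P)$. Using $\mU_P^\T \mU_P = \mI$, independence, $\EX[]{\norm[2]{\vc_P}^2} = d_P$, and $\EX[]{\norm[2]{\mU_P\mU_P^\T \vz}^2} = \sigma_P^2 \tr(\mU_P \mU_P^\T) = \sigma_P^2 d_P$, I obtain $\risk_P(\widehat{\vx}_\lambda) = (1-\alpha)^2 + \alpha^2\sigma_P^2$.

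Next I would compute the out-of-distribution risk in the same way, now with $\vx = \mU_Q\vc_Q$ and noise covariance $\sigma_Q^2\mI$. Writing $\vx - \widehat{\vx}_\lambda = (\mI - \alpha\mU_P\mU_P^\T)\mU_Q\vc_Q - \alpha\mU_P\mU_P^\T\vz$ and again separating the independent signal and noise contributions, the noise term contributes $\alpha^2 \sigma_Q^2 \tr(\mU_P\mU_P^\T) = \alpha^2\sigma_Q^2 d_P$, while the signal term is $\tr\!\big(\mU_Q^\T(\mI - \alpha\mU_P\mU_P^\T)^2\mU_Q\big)$. Expanding with the idempotence of $\mU_P\mU_P^\T$ gives $(\mI - \alpha\mU_P\mU_P^\T)^2 = \mI - (2\alpha - \alpha^2)\mU_P\mU_P^\T$, so the signal term equals $d_Q - (2\alpha - \alpha^2)\norm[F]{\mU_P^\T\mU_Q}^2$.

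The crux of the proof is the identity $\norm[F]{\mU_P^\T \mU_Q}^2 = \norm[2]{\cos\vtheta}^2 = a\,d_Q$: the singular values of $\mU_P^\T\mU_Q$ are exactly the cosines of the principal angles $\vtheta$ between $\mathrm{range}(\mU_P)$ and $\mathrm{range}(\mU_Q)$ (the defining characterization via the CS decomposition), so their squared Frobenius norm is $\sum_i \cos^2\theta_i = \norm[2]{\cos\vtheta}^2$, which equals $a\,d_Q$ by the definition $a = \norm[2]{\cos\vtheta}^2/d_Q$. Substituting this and dividing by $d_Q$, and using $2\alpha - \alpha^2 = 1 - (1-\alpha)^2$, yields $\risk_Q(\widehat{\vx}_\lambda) = 1 - a + a(1-\alpha)^2 + \alpha^2\sigma_Q^2 d_P/d_Q$. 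Finally I would solve the in-distribution expression for $(1-\alpha)^2 = \risk_P(\widehat{\vx}_\lambda) - \alpha^2\sigma_P^2$ and substitute, collecting terms into $\risk_Q(\widehat{\vx}_\lambda) = a\,\risk_P(\widehat{\vx}_\lambda) + (1-a) + \alpha^2(\tfrac{d_P}{d_Q}\sigma_Q^2 - a\sigma_P^2)$, exactly the claim. I expect the only genuine subtlety to be the principal-angle identity; the rest is routine bias--variance bookkeeping relying on orthonormality of the columns and independence of signal and noise.
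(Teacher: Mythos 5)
Your proposal is correct and follows essentially the same route as the paper's proof: compute $\risk_P(\widehat{\vx}_\lambda) = (1-\alpha)^2 + \alpha^2\sigma_P^2$ and $\risk_Q(\widehat{\vx}_\lambda) = 1 - a + a(1-\alpha)^2 + \alpha^2 \sigma_Q^2 d_P/d_Q$ via the bias--variance split and the identity $\norm[F]{\mU_P^\T \mU_Q}^2 = \norm[2]{\cos(\vtheta)}^2$ (the singular values of $\mU_P^\T\mU_Q$ being the principal-angle cosines), then eliminate $\alpha$; your use of $(1-\alpha)^2$ in place of the paper's $\alpha^2 - 2\alpha$ is the same algebra. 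The only difference is that you take the closed form $\widehat{\vx}_\lambda = \alpha \mU_P\mU_P^\T \vy$ as given (as asserted in the setup), whereas the paper's proof also rederives it from the normal equations $\mW^* = \EX[P]{\vx\vy^\T}\bigl(\EX[P]{\vy\vy^\T} + \lambda\mI\bigr)^{-1}$ --- a routine step, so this is not a substantive gap.
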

In general, the relationship between the risks $\risk_P(\widehat{\vx}_\lambda)$ and $\risk_Q(\widehat{\vx}_\lambda)$ is non-linear: it can be shown that $\risk_P(\widehat{\vx}_\lambda) = (1 - \alpha)^2 + \alpha^2 \sigma_P^2$ (see the proof of Theorem~\ref{thm:l2_risk_relationship_in_denoising}), hence the term $\alpha^2 \left( (d_P/d_Q) \sigma_Q^2 - a \sigma_P^2 \right)$ is not a linear function of the risk $\risk_P(\widehat{\vx}_\lambda)$. However, if the noise variances $\sigma_P^2, \sigma_Q^2 \ll 1$, then an approximate linear relation $\risk_Q(\widehat{\vx}_\lambda) \approx a \risk_P(\widehat{\vx}_\lambda) + (1 - a)$ holds.

We illustrate Theorem~\ref{thm:l2_risk_relationship_in_denoising} through a denoising simulation. %
In Figure~\ref{fig:denoising_linear_shift}, we plot the trajectory $(\risk_P(\widehat{\vx}_\lambda), \risk_Q(\widehat{\vx}_\lambda))$, as the parameter $\lambda$ of the estimate $\widehat{\vx}_\lambda$ varies. %
For high SNR the relationship between $\risk_P(\widehat{\vx}_\lambda)$ and $\risk_Q(\widehat{\vx}_\lambda)$ is approximately linear, and for low SNR it is highly nonlinear. 

\paragraph{Compressed sensing.}
We continue with compressed sensing, where the matrix $\mA$ is a random matrix that down-samples the signal $\vx$.
Now the estimate $\widehat{\vx}_\lambda(\vy)$ is only approximately $\alpha \mU_P \mU_P^\T \vy$ due to the random measurement process. However, a similar relationship still holds between the risks.

\begin{theorem}
Let $\mA \in \reals^{n \times d}$ be a random Gaussian matrix with independent entries drawn from the distribution $\mathcal{N}(0, 1/n)$. There exists a constant $c > 0$ such that, for any $0 < \epsilon < 1/d_P$, with probability at least $1 - 4(d_P(d_P + d_Q))\exp(-n\epsilon^2/8)$, 
it holds that 
\begin{align}
    \left| \risk_Q(\widehat{\vx}_\lambda) - a \risk_P(\widehat{\vx}_\lambda) - (1 - a) - \alpha^2 \left( \frac{d_P}{d_Q} \sigma_Q^2 - a \sigma_P^2 \right)  
    \right| \le c \epsilon,
\end{align}
where $\alpha = 1/(1 + \sigma_P^2 + \lambda)$.
\label{thm:l2_risk_relationship_in_compressed_sensing}
\end{theorem}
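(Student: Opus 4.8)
The plan is to reduce the compressed-sensing estimator to the denoising one up to a perturbation controlled by concentration of the random matrix $\mA$, and then transport the exact identity of Theorem~\ref{thm:l2_risk_relationship_in_denoising} through this perturbation. First I would compute the minimizer $\mW^*$ in closed form: the objective is a ridge-regularized quadratic in $\mW$, so setting the gradient to zero gives $\mW^* = \EX[P]{\vx \vy^\transp}\,(\EX[P]{\vy \vy^\transp} + \lambda \mI)^{-1}$. Using $\EX[P]{\vx \vx^\transp} = \mU_P \mU_P^\transp$, the model $\vy = \mA \vx + \vz$, and independence of the zero-mean noise, this becomes
\begin{align}
    \mW^* = \mU_P \mU_P^\transp \mA^\transp \left( \mA \mU_P \mU_P^\transp \mA^\transp + (\sigma_P^2 + \lambda) \mI \right)^{-1}.
\end{align}
Writing $\widetilde{\mA} \defeq \mA \mU_P \in \reals^{n \times d_P}$ and applying the push-through identity yields the convenient form $\mW^* = \mU_P \mathbf{M} \widetilde{\mA}^\transp$ with $\mathbf{M} \defeq (\widetilde{\mA}^\transp \widetilde{\mA} + (\sigma_P^2 + \lambda)\mI)^{-1}$. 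The crucial observation is that because $\mU_P$ has orthonormal columns, $\widetilde{\mA}$ again has i.i.d.\ $\normal(0, 1/n)$ entries, so all randomness enters through Gaussian Gram matrices and $\mathbf{M}$ is deterministically bounded, $\norm[\mathrm{op}]{\mathbf{M}} \le (\sigma_P^2 + \lambda)^{-1}$.

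Next I would express both risks as functions of $\mW^*$. A bias--variance decomposition, using independence of $\vz$ and $\vx$, gives
\begin{align}
    \risk_P(\widehat{\vx}_\lambda) = \tfrac{1}{d_P} \mathrm{tr}\!\left( (\mI - \mW^* \mA) \mU_P \mU_P^\transp (\mI - \mW^* \mA)^\transp \right) + \tfrac{\sigma_P^2}{d_P} \norm[F]{\mW^*}^2,
\end{align}
and the analogous expression for $\risk_Q$ with $\mU_P \mU_P^\transp$ replaced by $\mU_Q \mU_Q^\transp$, $d_P$ by $d_Q$, and $\sigma_P^2$ by $\sigma_Q^2$. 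Substituting $\mW^* = \mU_P \mathbf{M} \widetilde{\mA}^\transp$ and simplifying, every term reduces to a trace involving only the two random Gram matrices $\mathbf{G}_{PP} \defeq \mU_P^\transp \mA^\transp \mA \mU_P$ and $\mathbf{G}_{PQ} \defeq \mU_P^\transp \mA^\transp \mA \mU_Q$ (through $\mathbf{M}$, which depends only on $\mathbf{G}_{PP}$) together with the deterministic quantities $\mU_P^\transp \mU_Q$, $d_P$, $d_Q$, $\sigma_P^2$, $\sigma_Q^2$. Notably no term depends on $\mU_Q^\transp \mA^\transp \mA \mU_Q$, since $\mA$ always appears sandwiched against a factor $\mU_P$ inherited from $\mW^*$.

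The concentration step generates the probability bound. Since $\EX{\mA^\transp \mA} = \mI_d$, I would establish the entrywise decompositions $\mathbf{G}_{PP} = \mI_{d_P} + \mathbf{E}_{PP}$ and $\mathbf{G}_{PQ} = \mU_P^\transp \mU_Q + \mathbf{E}_{PQ}$, where each entry of $\mathbf{E}_{PP}, \mathbf{E}_{PQ}$ is a normalized sum $\tfrac{1}{n}\sum_{i=1}^n g_i h_i$ of products of jointly Gaussian coordinates. A sub-exponential (Bernstein-type) tail bound controls each entry by $\epsilon$ except with probability at most $2 \exp(-n\epsilon^2/8)$; a union bound over the $d_P^2$ entries of $\mathbf{E}_{PP}$ and the $d_P d_Q$ entries of $\mathbf{E}_{PQ}$ yields exactly the failure probability $4\, d_P(d_P + d_Q) \exp(-n\epsilon^2/8)$. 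On this event the hypothesis $\epsilon < 1/d_P$ forces $\norm[\mathrm{op}]{\mathbf{E}_{PP}} \le d_P \epsilon < 1$, which validates the Neumann expansion $\mathbf{M} = \alpha \mI - \alpha^2 \mathbf{E}_{PP} + O(\norm[\mathrm{op}]{\mathbf{E}_{PP}}^2)$ with $\alpha = 1/(1 + \sigma_P^2 + \lambda)$.

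Finally I would substitute these expansions into the trace expressions. At leading order ($\mathbf{E} = \mathbf{0}$) one replaces $\mathbf{G}_{PP}$ by $\mI$, $\mathbf{M}$ by $\alpha \mI$, and $\mathbf{G}_{PQ}$ by $\mU_P^\transp \mU_Q$, at which point the risks collapse to the deterministic denoising quantities $\risk_P = (1-\alpha)^2 + \alpha^2 \sigma_P^2$ and $\risk_Q = (1-a) + a(1-\alpha)^2 + \alpha^2 \sigma_Q^2 d_P/d_Q$, where $a = \norm[2]{\cos(\vtheta)}^2/d_Q = \norm[F]{\mU_P^\transp \mU_Q}^2/d_Q$; eliminating $(1-\alpha)^2$ between them reproduces the affine relation of Theorem~\ref{thm:l2_risk_relationship_in_denoising}. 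The residual is bounded using two facts: a trace $\mathrm{tr}(\mathbf{B}\mathbf{E})/d_P$ against a bounded matrix $\mathbf{B}$ is $O(\epsilon)$ because $|\mathrm{tr}(\mathbf{E})| \le d_P \epsilon$, and quadratic error terms such as $\norm[F]{\mathbf{E}_{PQ}}^2 / d_Q \le d_P \epsilon^2 < \epsilon$ are $O(\epsilon)$ precisely by $\epsilon < 1/d_P$. Collecting all such contributions, with $c$ absorbing the fixed dimensions and noise levels, yields the claimed $c\epsilon$ bound. I expect the main obstacle to be exactly this bookkeeping: tracking how the $O(\epsilon)$ entrywise errors propagate through the inverse $\mathbf{M}$ and the several quadratic trace functionals without accumulating uncontrolled factors of $d_P$, and verifying that the normalizations together with $\epsilon < 1/d_P$ keep every residual at order $\epsilon$ rather than order $d_P \epsilon$.
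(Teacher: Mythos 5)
Your proposal follows essentially the same route as the paper's proof: the same closed form $\mW^* = \mU_P \mS \mU_P^\transp \mA^\transp$ (your $\mathbf{M}$ equals the paper's $\mS$ after the matrix inversion lemma), the same reduction of both risks to traces in $\mU_P^\transp \mA^\transp \mA \mU_P$ and $\mU_P^\transp \mA^\transp \mA \mU_Q$, the same entrywise perturbations $\mE, \mF$ controlled by a union bound, the same Neumann expansion justified by $\epsilon < 1/d_P$, and the same elimination of the $\alpha$-dependent term with residuals absorbed into $c$. The only cosmetic deviation is that you invoke a generic Bernstein bound per Gram entry (which would leave the exponent constant unspecified and makes your per-entry factor $2$ inconsistent with the claimed total $4 d_P(d_P+d_Q)$), whereas the paper gets the exact constants via the polarization identity $\langle \vu, \vv\rangle = \tfrac{1}{4}(\norm[2]{\vu+\vv}^2 - \norm[2]{\vu-\vv}^2)$ and $\chi^2$ norm concentration.
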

In the high SNR regime, if the number of measurements $n$ is large enough, then with high probability there is an approximate linear relationship between the risks $\risk_P(\widehat{\vx}_\lambda)$ and $\risk_Q(\widehat{\vx}_\lambda)$.

\section{Conclusion}

In this paper, we studied the performance of estimators based on regularized empirical risk minimization trained on a distribution $P$, quantifying how they perform under distribution shifts on a distribution $Q$ for regression, classification, and signal estimation problems. 
We identified conditions under which monotonic relations between the in-distribution risk $\risk_P$ and out-of-distribution risk $\risk_Q$ arise that hold for broad classes of regularized estimators, similarly to the linear risk relationships observed in practice. 

Our findings in this work suggest that the linear and monotonic relations under distribution shifts observed in practice are emergent phenomena that arise from concentration of measure effects in large systems, which reduce the dependence of the risks down to only a single parameter. By identifying necessary and sufficient conditions for monotonic risk relations to exist, and characterizing the form of the monotonic relations, our work enables the principled discussion and investigation of such risk relations in future work.

\section*{Acknowledgments}
This work was supported by the Institute of Advanced Studies at the Technical University of Munich, the Deutsche Forschungsgemeinschaft (DFG, German Research Foundation) --
456465471, 464123524, and the authors also received funding by the German Federal Ministry of Education and Research and the Bavarian State Ministry for Science and the Arts.
DL was also partially supported by 
the Office of Naval Research grant N00014-18-1-2047 and MURI grant N00014-20-1-2787 and the Army Research Office grant 2003514594.

RH would like to thank Ludwig Schmidt for helpful discussions on linear fits, and on feedback on the manuscript.

\appendix

\makeatletter
\renewcommand{\@seccntformat}[1]{Appendix \csname the#1\endcsname\quad}
\makeatother

\section{Details on the Experimental Results}

Here, we provide further details on the numerical experiments in the main body.

\subsection{Experimental Details for Object Detection}
\label{sec:experiment_details_for_object_detection}

In this section, we describe the details of the object detection experiment from 
Section~\ref{sec:linear_shift_in_regression_and_motivation_for_subspace_model}.

The models we evaluate are from \texttt{torchvision.models} and public github repositories:
\begin{itemize}
    \item RetinaNet~\citep{lin17}: RetinaNet ResNet-50 FPN
    \item Mask R-CNN~\citep{he17}: Mask R-CNN ResNet-50 FPN
    \item SSD~\citep{liu16}: SSD300 VGG16, SSDlite320 MobileNetV3-Large
    \item Faster R-CNN~\citep{ren15}: Faster R-CNN ResNet-50 FPN, Faster R-CNN MobileNetV3-Large FPN, Faster R-CNN MobileNetV3-Large 320 FPN
    \item Keypoint R-CNN~\citep{he17}: Keypoint R-CNN ResNet-50 FPN
    \item YOLOv5 ~\citep{redmon16, jocher20}: YOLOv5n, YOLOv5s, YOLOv5m, YOLOv5l, YOLOv5x
\end{itemize}
These model are trained on the COCO 2017 training set~\citep{lin14}. We take the trained models and evaluate their performances on the COCO 2017 validation set and the VOC 2012 training/validation set ~\citep{everingham10}. Instead of using the standard metric for object detection---the mean average precision (mAP), which is the area under the precision-recall curve averaged over all classes---we consider the mean squared error in bounding box coordinates and only the \texttt{person} class. The predicted and the ground truth bounding box coordinates are normalized by the height and width of individual image. All models are evaluated using an NVIDIA A40 GPU.

To analyze the spectrum of the feature space of YOLOv5, we collect feature vectors through the following procedure. For each image in each evaluation set, we record the ground truth \texttt{person} objects that are correctly detected by \emph{all} models listed above with an IOU threshold greater than or equal to $0.2$. Then for each commonly detected ground truth object, we consider the prediction that has the largest IOU with the ground truth bounding box as the true positive. We then extract the feature vectors corresponding to these true positive predictions from the $24$\textsuperscript{th} layer of YOLOv5. This procedure is illustrated in Figure~\ref{fig:yolov5_visualization}.

\begin{figure}[hb!]
\centering
\setlength\tabcolsep{1.5pt} %
\includegraphics[width=0.96\textwidth]{./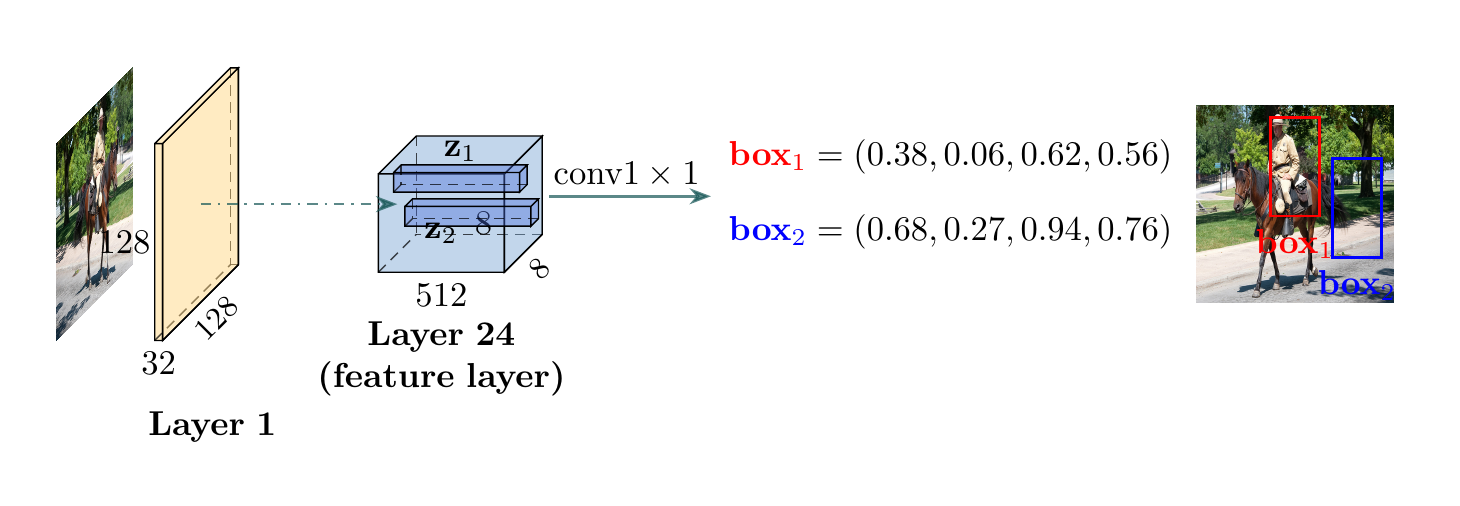}
\caption{Visualization of feature extraction from YOLOv5: only feature vectors that correspond to true positive predictions are recorded for feature space analysis. The prediction of $\textcolor{red}{\text{box}_1}$, which is based on the feature vector $\vz_1$, is true positive, while the prediction of $\textcolor{blue}{\text{box}_2}$, which is based on the feature vector $\vz_2$, is false positive. We record the feature vector $\vz_1$ and discard the feature vector $\vz_2$. Similarly, predictions in other grid positions in the $8 \times 8$ grid of the feature layer are not recorded if they do not correspond to a true positive prediction, since these feature vectors do not contain much information about the correct bounding box coordinates.}
\vspace{10pt}
\label{fig:yolov5_visualization}
\end{figure}

At the end, relevant feature vectors across the same evaluation set are stacked together and we obtain two sets of feature vectors
$\mathcal{Z}_{\text{in}} = \{\vz_j^{(i)} \in \reals^{512}: i \in [N_{\text{in}}], j \in [K_{\text{in}}^{(i)}]\}$ and $\mathcal{Z}_{\text{out}} = \{\vz_j^{(i)} \in \reals^{512}: i \in [N_{\text{out}}], j \in [K_{\text{out}}^{(i)}]\}$ for the COCO 2017 and VOC 2012 evaluation dataset respectively, where $\vz_j^{(i)}$ is the $j$\textsuperscript{th} true positive prediction on image $i$, $N_{\text{in}}$ and $N_{\text{out}}$ are the numbers of images of the respective dataset and $K_{\text{in}}^{(i)}$ and $K_{\text{out}}^{(i)}$ are the number of true positive predictions on the $i$\textsuperscript{th} image respectively.

We make a few comments:
\begin{itemize}
    \item[1] We only consider \emph{common} \emph{true positive} predictions: (1) for false positive and true negative predictions, there is no object to predict, hence the feature vectors contain no information for the regression task; (2) for false negative predictions, either the squared errors of the predicted coordinates are large since the IOU is lower than the $0.2$ threshold, or they have lower confidence than another prediction which is true positive, so we simply exclude the corresponding feature vectors as they do not provide much useful information; (3) only common true positive predictions are considered so that all models make predictions on the same set of feature vectors.
    \item[2] YOLOv5 uses multiple layers (the $18$\textsuperscript{th} and $21$\textsuperscript{st} layers in addition to the $24$\textsuperscript{th} layer) as input to the bounding box prediction layer, but we find that most common true positive predictions are based on the $24$\textsuperscript{th} layer, probably due to the fact that this layer has a spacial dimension $8 \times 8$, where most ground truth objects size fit into, while the other layers have special dimension $16 \times 16$ and $32 \times 32$ matching small and tiny objects, which are relatively harder to predict.
\end{itemize}

\subsection{Experimental Details for Digit Classification}
\label{sec:experiment_details_for_digit_classification}

In this section, we describe the details of the even vs odd handwritten digit classification experiment in Figure~\ref{fig:classification-risk-shifts} (right).

We consider a binary classification task of classifying even versus odd digits on the MNIST~\citep{lecun10} dataset and ARDIS~\citep{kusetogullari20} dataset IV. The ARDIS dataset is a new image-based handwritten historical digit dataset extracted from Swedish church records, which induces a natural distribution shift from the widely-used MNIST dataset. The ARDIS dataset IV has the same image size as the MNIST dataset with white digits in black background. The following figure shows examples of digits from both datasets.

\begin{figure}[!h]
    \centering
    \begin{tabular}{ccc}
        \includegraphics[width=0.85\textwidth]{./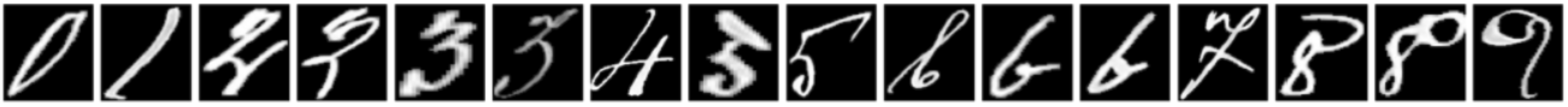} \\
        (a) \\
        \includegraphics[width=0.85\textwidth]{./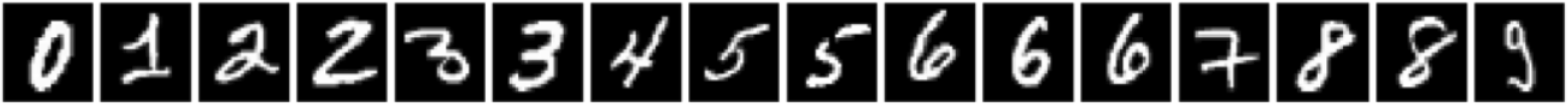} \\
        (b)
    \end{tabular}
    \caption{Examples of digits: (a) ARDIS and (b) MNIST.}
    \label{fig:examples_of_ardis_and_mnist_digits}
\end{figure}

The models we evaluate are from \texttt{torchvision.models}:
\begin{itemize}
    \item AlexNet~\citep{krizhevsky12}
    \item VGG~\citep{simonyan15}: VGG11, VGG16
    \item ResNet~\citep{he16}: ResNet18, ResNet50
    \item DenseNet~\citep{huang17}: DenseNet121, DenseNet161
\end{itemize}

Since the models we evaluate are originally designed for ImageNet~\citep{deng09} classification where the image sizes are larger, we resize the MNIST and ARDIS digits from $28 \times 28$ to $75 \times 75$. We train the model listed above on MNIST training set using the Adam optimizer with an initial learning rate $10^{-4}$ and a batch size $10$ and a learning rate scheduler with a step size $10$ epochs and a learning rate decay factor $0.1$. The models at the top right corner of Figure~\ref{fig:classification-risk-shifts}(right) are trained for $20$ epochs. Intermediate models are obtained by early stopping when validation accuracy first reaches $0.5, 0.6, 0.7, 0.8$ and $0.9$. Each model is trained eight times with random initialization and with random shuffling of the training data, using different random seeds. All models are trained on an NVIDIA A40 GPU.

\section{Intuition Regarding Feature Scaling}
\label{sec:motivation-feature-scaling}

In both regression and classification, we find that the risk relations depend on the scaling of the features. We give intuition regarding where these can be seen in practice for two settings of real data that we consider in this paper.

\paragraph{Task-independent feature scaling.} The scaling of the features may be uncorrelated with the ground truth coefficients $\vbeta^*$, such as in the subspace shift case $\mSigma = \rho \mPi_Q$. An example in real data is that the principal components of the learned feature space of YOLOv5 model on VOC 2012 have uniformly larger magnitudes than those on COCO 2017, as can be seen from Figure~\ref{fig:bounding_box_prediction}.

\paragraph{Task-dependent feature scaling.} The scaling of the features may be correlated with the ground truth coefficients $\vbeta^*$. A motivating example in real data is the MNIST and ARDIS handwritten digit datasets. The universal ground truth labeling function for both of these datasets is the same (humans can classify digits from both datasets very well) and conceivably relies on a complex combination of features involving stroke and loop placement. Such features are for example the types of features found by nonlinear embedding techniques such as Isomap~\cite{tenenbaum00}. While both strokes and loops are present in both datasets, we observe that some of these occur with more frequency and intensity in one dataset versus the other. For example, italics and embellishments are much more common in ARDIS than in MNIST, as can be seen from Figure~\ref{fig:examples_of_ardis_and_mnist_digits}. We imagine that in feature space, this corresponds to a larger scaling of these features. 

\section{Proof of Theorem~\ref{thm:sufficient_conditions_for_linear_shift_in_regression}}

\label{sec:proof_of_conditions_for_linear_shift_in_regression}

In this section, we prove the main results on linear relations in linear regression in finite dimensions.

The proof of Theorem~\ref{thm:sufficient_conditions_for_linear_shift_in_regression} is based on the following sufficient and necessary condition for a linear relationship between risks $\risk_P(\widehat{\vbeta}_\lambda)$ and $\risk_Q(\widehat{\vbeta}_\lambda)$.
\begin{theorem}[Sufficient and necessary condition]
The risk 
$\risk_Q(\widehat{\vbeta}_\lambda)$ of estimator $\widehat{\vbeta}_\lambda$ is an affine function of $\risk_P(\widehat{\vbeta}_\lambda)$, i.e., there exist $a$ and $b$ such that $\risk_Q(\widehat{\vbeta}_\lambda) = a \risk_P(\widehat{\vbeta}_\lambda) + b$, if and only if 
\begin{align}
    \vbeta^{*\T} \mPi_P \mSigma_Q \mPi_{P^\perp} \vbeta^* = 0,
\end{align}
where $\mPi_P = \mU_P \mU_P^\T$ is a orthonormal projection onto the subspace spanned by the orthonormal matrix $\mU_P$, $\mPi_{P^\perp} = \mI - \mPi_P$ is a projection onto the orthogonal complement, and $\mSigma_Q = \EX[Q]{\vx \vx^\T}$.
\label{thm:sufficient_necessary_conditions_for_linear_shift_in_regression}
\end{theorem}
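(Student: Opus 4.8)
The plan is to reduce the claimed equivalence to a statement about the coefficients of a polynomial in the single scalar $\alpha = 1/(1+\lambda)$ that parameterizes the estimator family $\widehat{\vbeta}_\lambda = \alpha \mPi_P \vbeta^*$. First I would put both risks in closed form. Using $\mSigma_P = \EX[P]{\vx\vx^\T} = \mPi_P$ together with the independence and zero mean of the noise, expanding $\risk_P(\widehat{\vbeta}_\lambda) = \EX[P]{(\vx^\T(\vbeta^* - \widehat{\vbeta}_\lambda) + z)^2}$ yields
\begin{align}
    \risk_P(\widehat{\vbeta}_\lambda) = (\vbeta^* - \widehat{\vbeta}_\lambda)^\T \mPi_P (\vbeta^* - \widehat{\vbeta}_\lambda) + \sigma_P^2,
\end{align}
and the same computation on $Q$ gives the analogous expression with $\mPi_P$ replaced by $\mSigma_Q$ and $\sigma_P^2$ by $\sigma_Q^2$.

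Next I would substitute $\widehat{\vbeta}_\lambda = \alpha \mPi_P \vbeta^*$ and split $\vbeta^* = \mPi_P \vbeta^* + \mPi_{P^\perp}\vbeta^*$, so that $\vbeta^* - \widehat{\vbeta}_\lambda = (1-\alpha)\mPi_P\vbeta^* + \mPi_{P^\perp}\vbeta^*$. Setting $u = 1 - \alpha$, which traverses the nondegenerate interval $[0,1)$ as $\lambda$ ranges over $[0,\infty)$, the two risks become polynomials in $u$,
\begin{align}
    \risk_P(\widehat{\vbeta}_\lambda) &= u^2\, \vbeta^{*\T}\mPi_P\vbeta^* + \sigma_P^2, \\
    \risk_Q(\widehat{\vbeta}_\lambda) &= u^2\, \vbeta^{*\T}\mPi_P\mSigma_Q\mPi_P\vbeta^* + 2u\, \vbeta^{*\T}\mPi_P\mSigma_Q\mPi_{P^\perp}\vbeta^* + \vbeta^{*\T}\mPi_{P^\perp}\mSigma_Q\mPi_{P^\perp}\vbeta^* + \sigma_Q^2,
\end{align}
where symmetry of $\mPi_P$, $\mPi_{P^\perp}$, and $\mSigma_Q$ collapses the two cross terms into the single coefficient $2\,\vbeta^{*\T}\mPi_P\mSigma_Q\mPi_{P^\perp}\vbeta^*$. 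The structural point is that $\risk_P$ contains only even powers of $u$, while $\risk_Q$ carries a linear-in-$u$ term whose coefficient is precisely twice the quantity appearing in the theorem.

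Finally, an identity $\risk_Q = a\,\risk_P + b$ holding for all $\lambda$ is equivalent to the polynomial $\risk_Q(\widehat{\vbeta}_\lambda) - a\,\risk_P(\widehat{\vbeta}_\lambda)$ being constant on the interval swept by $u$, hence constant as a polynomial. Because $\risk_P$ has no linear term, subtracting any multiple of it leaves the linear coefficient of $\risk_Q$ untouched, so constancy forces $\vbeta^{*\T}\mPi_P\mSigma_Q\mPi_{P^\perp}\vbeta^* = 0$; conversely, when this vanishes both risks are affine in $u^2$, and choosing $a = \vbeta^{*\T}\mPi_P\mSigma_Q\mPi_P\vbeta^* / (\vbeta^{*\T}\mPi_P\vbeta^*)$ with $b$ determined by the constant terms produces the affine relation. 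I expect the only real care to be the degenerate case $\mPi_P\vbeta^* = \vzero$: there $\risk_P \equiv \sigma_P^2$ is constant, the quadratic and cross coefficients of $\risk_Q$ vanish automatically so that $\vbeta^{*\T}\mPi_P\mSigma_Q\mPi_{P^\perp}\vbeta^* = 0$ and $\risk_Q$ is also constant, whence an affine relation holds trivially and the equivalence is preserved. The two points needing attention are thus verifying that $u$ genuinely ranges over an interval, so that agreement on it upgrades to equality of coefficients, and disposing of this degeneracy; neither constitutes a genuine obstacle.
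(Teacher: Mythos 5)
Your proposal is correct and follows essentially the same route as the paper's proof: both express the two risks in closed form as functions of the scalar parameter ($u = 1-\alpha$ in your writeup, $\alpha^2 - 2\alpha$ in the paper's), and both identify the cross term $\vbeta^{*\T}\mPi_P\mSigma_Q\mPi_{P^\perp}\vbeta^*$ as the sole obstruction to an affine relation. If anything, your explicit polynomial coefficient-matching and treatment of the degenerate case $\mPi_P\vbeta^* = \vzero$ are slightly more careful than the paper's informal closing step, which asserts the equivalence between an affine risk relation and linear dependence of $\risk_Q(\widehat{\vbeta}_\lambda)$ on $\alpha^2-2\alpha$ without spelling out either point.
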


If condition $(a)$ holds, then $\mSigma_Q$ and $\mPi_P$ commute, and therefore $\mPi_P \mSigma_Q \mPi_{P^\perp} = \mSigma_Q \mPi_P \mPi_{P^\perp} = \vzero$. If condition $(b)$ holds, then $\mPi_{P^\perp} \vbeta^{*\T} = \vzero$. In both cases, the term marked with $(*)$ in the proof of Theorem~\ref{thm:sufficient_necessary_conditions_for_linear_shift_in_regression} becomes zero and the linear relationship $\risk_Q(\vbeta) = a \risk_P(\vbeta) + b$ holds with slope and intercept 
\begin{align}
    a &= \frac{\vbeta^{*\T} \mPi_P \mSigma_Q \mPi_P \vbeta^*}{\vbeta^{*\T} \mPi_P \vbeta^*}, \\
    b &= \vbeta^{*\T} \left( \mSigma_Q  - a  \mPi_P \right) \vbeta^* + \sigma_Q^2 - a \sigma_P^2.
\end{align}
If condition $(c)$ holds, then expectation of the term marked with $(*)$ is zero:
\begin{align}
    \EX[\vbeta^*]{\vbeta^{*\T} \mPi_P \mSigma_Q \mPi_{P^\perp} \vbeta^*}
    = \tr\left( \mPi_P \mSigma_Q \mPi_{P^\perp} \EX[\vbeta^*]{\vbeta^* \vbeta^{*\T}} \right) 
    = \tr\left( \mSigma_Q \mPi_{P^\perp} \mPi_P \right)
    = 0,
\end{align}
and $\EX[\vbeta^*]{\risk_Q(\widehat{\vbeta}_\lambda)} = a \EX[\vbeta^*]{\risk_P(\widehat{\vbeta}_\lambda)} + b$ with slope and intercept
\begin{align}
    a &= \frac{\tr\left( \mPi_P \mSigma_Q \mPi_P \right)}{ \tr\left( \mPi_P \right)}, \\
    b &= \tr \left( \mSigma_Q  - a  \mPi_P \right) + \sigma_Q^2 - a \sigma_P^2.
\end{align}
This concludes the proof of Theorem~\ref{thm:sufficient_conditions_for_linear_shift_in_regression}. It remains to prove Theorem~\ref{thm:sufficient_necessary_conditions_for_linear_shift_in_regression}.

\paragraph{Proof of Theorem~\ref{thm:sufficient_necessary_conditions_for_linear_shift_in_regression}.}
The idea is to relate the risks $\risk_P(\vbeta)$ and $\risk_Q(\vbeta)$ with the help of the parameter $\alpha$. We start by expressing the risk of $\vbeta = \alpha \mU_P \mU_P^\T \vbeta^*$ on distribution $P$ as a function of $\alpha$
\begin{align}
    \risk_P(\vbeta) 
    &= \EX[P]{(y - \vx^\T \vbeta)^2} \\
    &= (\vbeta^* - \vbeta)^\T \EX[P]{\vx \vx^\T} (\vbeta^* - \vbeta) + \sigma_P^2 \\
    &= \vbeta^{*\T} (\mI - \alpha \mU_P \mU_P^\T) \mU_P \EX[P]{\editedinline{\vc_P \vc_P^\T}} \mU_P^\T (\mI - \alpha \mU_P \mU_P^\T) \vbeta^* + \sigma_P^2 \\
    &= \vbeta^{*\T} \mU_P \EX[P]{\editedinline{\vc_P \vc_P^\T}} \mU_P^\T \vbeta^* + (\alpha^2 - 2\alpha) \vbeta^{*\T} \mU_P \EX[P]{\editedinline{\vc_P \vc_P^\T}} \mU_P^\T \vbeta^* + \sigma_P^2 \\
    &= \vbeta^{*\T} \mPi_P \vbeta^* + (\alpha^2 - 2\alpha) \vbeta^{*\T} \mPi_P \vbeta^* + \sigma_P^2.
\end{align}
Similarly, on distribution $Q$
\begin{align}
    \risk_Q(\vbeta) 
    &= \EX[Q]{(y - \vx^\T \vbeta)^2} \\
    &= (\vbeta^* - \vbeta)^\T \EX[Q]{\vx \vx^\T} ((\vbeta^* - \vbeta)) + \sigma_Q^2 \\
    &= \vbeta^{*\T} (\mI - \alpha \mU_P \mU_P^\T) \mSigma_Q (\mI - \alpha \mU_P \mU_P^\T) \vbeta^* + \sigma_Q^2 \\
    &= \vbeta^{*\T} \mSigma_Q \vbeta^* + \alpha^2 \vbeta^{*\T} \mU_P \mU_P^\T \mSigma_Q \mU_P \mU_P^\T \vbeta^* \\
    & \quad\quad\quad - 2\alpha \vbeta^{*\T} \mU_P \mU_P^\T \mSigma_Q (\mU_P \mU_P^\T + \mU_{P^\perp} \mU_{P^\perp}^\T) \vbeta^* + \sigma_Q^2 \\
    &= \vbeta^{*\T} \mSigma_Q \vbeta^* + (\alpha^2-2\alpha) \vbeta^{*\T} \mU_P \mU_P^\T \mSigma_Q \mU_P \mU_P^\T \vbeta^* \\
    & \quad\quad\quad - 2\alpha \vbeta^{*\T} \mU_P \mU_P^\T \mSigma_Q \mU_{P^\perp} \mU_{P^\perp}^\T \vbeta^* + \sigma_Q^2 \\
    &= \vbeta^{*\T} \mSigma_Q \vbeta^* + (\alpha^2-2\alpha) \vbeta^{*\T} \mPi_P^\T \mSigma_Q \mPi_P \vbeta^* \\
    & \quad\quad\quad - 2\alpha \underbrace{\vbeta^{*\T} \mPi_P \mSigma_Q \mPi_{P^\perp} \vbeta^*}_{(*)} + \sigma_Q^2.
\end{align}
Since the risk $\risk_P(\vbeta)$ depends linearly on $\alpha^2 - 2\alpha$, a linear relationship between $\risk_Q(\vbeta)$ and $\risk_P(\vbeta)$ is equivalent to $\risk_Q(\vbeta)$ being also linearly dependent on $\alpha^2 - 2\alpha$. Hence, it is sufficient and necessary that the term marked with $(*)$ is zero.

\section{Proof of Theorem~\ref{thm:monotonic-risks}}
\label{sec:thm:monotonic-risks:proof}

The proof of Theorem~\ref{thm:monotonic-risks} involves the following steps:

\begin{enumerate}[label=(Step~\arabic*), ref=Step~\arabic*]
    \item \textit{Asymptotics.} We invoke the result of \citet{loureiro21} (Theorem~\ref{thm:gerbelot-asymp-cgmt}) to characterize the covariances of the decision functions of the estimator and ground truth in terms of three parameters (Lemma~\ref{lem:asymp-cgmt-cov}).
    \label{proof-step:asymptotics}
    \item \textit{Monotonicity for linear risks.} We prove a generic result for any risk that is parameterized as an affine function of some of its parameters, providing necessary and sufficient conditions for a monotonic relation (Lemma~\ref{lem:linear-risk-monotonic}).
    \label{proof-step:monotonicity}
    \item \textit{Specific metrics.} We apply the generic result in Lemma~\ref{lem:linear-risk-monotonic} to squared error and misclassification error to obtain the most general results (Theorems~\ref{thm:monotonic-risk-squared-error} and \ref{thm:monotonic-risk-classification-error}).
    \label{proof-step:specific-metrics}
    \item \textit{Simplifying assumptions.} To aid in interpretability, we apply Assumption~\ref{assump:asymp-cgmt:gt-cov} to simplify the necessary and sufficient conditions.
    \label{proof-step:simplifying}
\end{enumerate}

\subsection{\ref{proof-step:asymptotics}: Asymptotics}

Assumption~\ref{assump:asymp-cgmt:loss} states that the loss function is pseudo-Lipschitz continuous of order 2, which is defined as follows.
\begin{definition}[Pseudo-Lipschitz continuity]
    \label{def:pl-continuous}
    For a given $p \geq 1$, a function $\vf \colon \reals^r \to \reals^s$ is called pseudo-Lipschitz of order $p$ if there exists a constant $C > 0$ such that for all $\vx_1, \vx_2 \in \reals^r$,
    \begin{align}
        \norm{\vf(\vx_1) - \vf(\vx_2)} \leq C \norm{\vx_1 - \vx_2} \big(1 + \norm{\vx_1}^{p-1} + \norm{\vx_2}^{p-1} \big).
    \end{align}
\end{definition}
We also need the definition of the proximal operator of a function.
\begin{definition}[Proximal operator]
    The proximal operator of a function $f \colon \reals^r \to \reals$ is the unique minimizer of the following objective:
    \begin{align}
         \prox{f}{\vz} \defeq \argmin_\vx f(\vx) + \tfrac{1}{2} \norm[2]{\vx - \vz}^2.
    \end{align}
\end{definition}
We finally replace Assumption~\ref{assump:asymp-cgmt} with a slightly more general version, that implies Assumption~\ref{assump:asymp-cgmt}.

\begin{assumption}{A*}[General setup]
    \label{assump:asymp-cgmt-general}
    Assumption~\ref{assump:asymp-cgmt} holds with  Assumption~\ref{assump:asymp-cgmt:gt-cov} replaced as follows.
    \begin{enumerate}[label=\normalfont{(A{\arabic*}*)}, ref=A{\arabic*}*]
        \setcounter{enumi}{2}
        \item The ground truth coefficients $\vbeta^*$ are deterministic, or they are random with sub-Gaussian one-dimensional marginals independent of $\setD$, and the spectral distribution of $\mSigma_P$ converges with bounded eigenvalues, such that $\tfrac{1}{d}\vbeta^{*\transp} \mSigma_P \vbeta^*$ and $\tfrac{1}{d} \norm[2]{\vbeta^*}^2$ converge to finite nonzero limits as $d \to \infty$.
        \label{assump:asymp-cgmt:gt-cov-general}
    \end{enumerate}
\end{assumption}
Armed with Assumption~\ref{assump:asymp-cgmt-general}, we are now ready to re-state Theorem 5 of \citet{loureiro21} in our notation.
\begin{theorem}
    \label{thm:gerbelot-asymp-cgmt}
    Under Assumption~\ref{assump:asymp-cgmt-general}, there exist scalar coefficients $a \in \reals$, $b, c, C_1, C_2, C_3 > 0$ such that for any pseudo-Lipschitz function $h \colon \reals^d \to \reals$ of order 2 and any $0 < \epsilon < C_1$, with probability at least $1 - \tfrac{C_2}{\epsilon^2} e^{-C_3 n \epsilon^4}$, the estimator $\widehat{\vbeta}$ in \eqref{eq:erm-ridge} satisfies
    \begin{align}
        \Bigg| h \paren{\tfrac{1}{\sqrt{d}} \widehat{\vbeta}} - h \paren{ \tfrac{1}{\sqrt{d}} \mSigma_P^{-1/2} \prox{\tfrac{1}{b} \tfrac{1}{2}\norm[2]{\mSigma_P^{-1/2} \cdot}^2 }{\tfrac{a}{b} \mSigma_P^{1/2} \vbeta^* + \tfrac{\sqrt{c}}{b} \vg} } \Bigg| < \epsilon,
    \end{align}
    where $\vg \sim \normal(0, \mI_d)$ is independent of $\vbeta^*$.
\end{theorem}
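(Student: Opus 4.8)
The plan is to obtain Theorem~\ref{thm:gerbelot-asymp-cgmt} as a direct specialization of Theorem~5 of \citet{loureiro21}, whose general ``Gaussian covariate'' characterization of regularized ERM already furnishes a proximal-form description of the minimizer together with a self-consistent system of scalar fixed-point equations. The entire task therefore reduces to two bookkeeping steps: (i) checking that our setting, as encoded in Assumption~\ref{assump:asymp-cgmt-general}, meets the hypotheses of their result, and (ii) translating their parameterization into the scalars $a, b, c$ and the concentration constants $C_1, C_2, C_3$ appearing in our statement.

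First I would match the optimization problem \eqref{eq:erm-ridge} to their template. The data model in Assumption~\ref{assump:asymp-cgmt:data}---Gaussian covariates with covariance $\tfrac{1}{d}\mSigma_P$, a single-index labeling $y_i = \varphi(\vx_i^\transp \vbeta^*, \xi_i)$, and the proportional regime of Assumption~\ref{assump:asymp-cgmt:proportional}---is exactly the teacher--student setup they analyze, and the ridge penalty $\tfrac{\lambda}{2}\norm[2]{\cdot}^2$ is the canonical separable, strongly convex regularizer in their framework. I would then verify the regularity conditions: Assumption~\ref{assump:asymp-cgmt:loss} supplies precisely the convexity, lower semicontinuity, pseudo-Lipschitz order-$2$ growth, and subgradient and boundedness controls (the uniform bound on $\tfrac{1}{n}\expect{\bar{\ell}(\vy, \vg)}$) that their CGMT-based argument requires for the loss, while Assumption~\ref{assump:asymp-cgmt:gt-cov-general} supplies the sub-Gaussian marginals of $\vbeta^*$, a limiting spectral distribution for $\mSigma_P$ with bounded eigenvalues, and convergence of the overlaps $\tfrac{1}{d}\vbeta^{*\transp}\mSigma_P\vbeta^*$ and $\tfrac{1}{d}\norm[2]{\vbeta^*}^2$---the only scalar statistics of the ground truth on which their fixed-point system depends.

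With the hypotheses verified, their theorem yields scalars solving the self-consistent equations, which I would identify with $a, b, c$ so that for every pseudo-Lipschitz-order-$2$ test function $h$ the minimizer concentrates around
\begin{align}
    \tfrac{1}{\sqrt{d}}\, \mSigma_P^{-1/2}\, \prox{\tfrac{1}{b}\tfrac{1}{2}\norm[2]{\mSigma_P^{-1/2}\cdot}^2}{\tfrac{a}{b}\mSigma_P^{1/2}\vbeta^* + \tfrac{\sqrt{c}}{b}\vg},
\end{align}
with $\vg \sim \normal(\vzero, \mI_d)$ independent of $\vbeta^*$. The ridge regularizer is what forces the inner proximal operator to be that of the whitened quadratic, and the $\tfrac{1}{b}$ scaling, together with the effective signal and noise levels $\tfrac{a}{b}$ and $\tfrac{\sqrt{c}}{b}$, is read off from their fixed-point map. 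The probability bound $1 - \tfrac{C_2}{\epsilon^2} e^{-C_3 n \epsilon^4}$ and the range $0 < \epsilon < C_1$ are inherited verbatim from their concentration statement.

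The main obstacle is technical care rather than new mathematics, and two points deserve attention. First, the degeneracy of the covariance: under Assumption~\ref{assump:asymp-cgmt:gt-cov} we take $\mSigma_P = \mPi_P$, a rank-deficient projection, so the symbol $\mSigma_P^{-1/2}$ must be read as the pseudoinverse on $\mathrm{range}(\mPi_P)$; this is consistent because the minimizer of the strongly convex ridge problem with projected covariates lies in that range, so the proximal expression is well defined there and vanishes on the orthogonal complement. Second, one must confirm that our hypotheses are no weaker than theirs on exactly the points their proof exploits---in particular that order-$2$ pseudo-Lipschitz growth together with the uniform bound on $\tfrac{1}{n}\expect{\bar{\ell}(\vy, \vg)}$ suffices to control the Moreau envelopes and Gaussian-width terms in the CGMT reduction---after which the stated result follows by direct substitution.
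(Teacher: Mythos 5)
Your proposal matches the paper exactly: the paper offers no independent proof of Theorem~\ref{thm:gerbelot-asymp-cgmt}, but simply states it as a re-statement of Theorem~5 of \citet{loureiro21} in the paper's notation, after setting up Assumption~\ref{assump:asymp-cgmt-general} and the definitions of pseudo-Lipschitz continuity and the proximal operator to make the hypotheses line up---precisely the citation-plus-translation route you describe. Your additional care about reading $\mSigma_P^{-1/2}$ as a pseudoinverse on $\mathrm{range}(\mPi_P)$ when the covariance is rank-deficient is a sensible point the paper leaves implicit, but it does not change the argument.
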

Combining this theorem with
\begin{align}
\prox{\tfrac{1}{2b} \norm[2]{\mSigma_P^{-1/2} \cdot}^2 }{\vz} = \inv{\mI_d + \tfrac{1}{b} \mSigma_P^{-1}} \vz
\end{align}
and using the Borel-Cantelli lemma, extending from a single function $h$ to a sequence of functions that are uniformly pseudo-Lipschitz of order 2, we obtain the following corollary.
\begin{corollary}
\label{cor:asymp-cgmt}
Under Assumption~\ref{assump:asymp-cgmt-general}, there exist $a \in \reals$, $b, c > 0$ such that for any pseudo-Lipschitz functions $h_d \colon \reals^d \to \reals$ of order 2 with uniform constant $C > 0$, the following holds almost surely for the estimator $\widehat{\vbeta}$ in \eqref{eq:erm-ridge}:
\begin{align}
    \lim_{d \to \infty} h_d \paren{\tfrac{1}{\sqrt{d}} \widehat{\vbeta}} = \lim_{d \to \infty} h_d \paren{\tfrac{1}{\sqrt{d}} \mSigma_P^{1/2} \inv{\mSigma_P + b \mI_d} (a \mSigma_P^{1/2} \vbeta^* + \sqrt{c} \vg)},
\end{align}
where $\vg \sim \normal(0, \mI_d)$ is independent of $\vbeta^*$.
\end{corollary}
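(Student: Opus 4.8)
The plan is to take the single-function, high-probability guarantee of Theorem~\ref{thm:gerbelot-asymp-cgmt} and upgrade it to an almost-sure statement holding for an entire sequence of test functions, after first rewriting the proximal-operator argument in closed form. There are three moves: (i) evaluate the proximal operator explicitly and absorb the resulting scalars into a relabeling of the constants $a, b, c$; (ii) convert the probability bound into almost-sure convergence via Borel--Cantelli along a slowly vanishing tolerance $\epsilon_d$; and (iii) observe that the concentration constants in Theorem~\ref{thm:gerbelot-asymp-cgmt} depend on the test function only through its pseudo-Lipschitz order and constant, so one triple of constants serves a whole sequence $h_d$ with a shared constant $C$.

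For step (i), I would first verify the identity stated just before the corollary. The function $f(\vx) = \tfrac{1}{2b}\norm[2]{\mSigma_P^{-1/2}\vx}^2 = \tfrac{1}{2b}\vx^\transp \mSigma_P^{-1}\vx$ is a strictly convex quadratic, so its proximal operator is the unique stationary point of $f(\vx) + \tfrac{1}{2}\norm[2]{\vx - \vz}^2$; setting the gradient to zero gives $(\tfrac{1}{b}\mSigma_P^{-1} + \mI_d)\vx = \vz$, i.e. $\prox{f}{\vz} = \inv{\mI_d + \tfrac{1}{b}\mSigma_P^{-1}}\vz$. Substituting this into the argument of $h$ in Theorem~\ref{thm:gerbelot-asymp-cgmt}, using the identity $\inv{\mI_d + \tfrac{1}{b}\mSigma_P^{-1}} = b\,\mSigma_P\inv{b\mSigma_P + \mI_d}$, and using that $\mSigma_P^{-1/2}$, $\mSigma_P$, and $\inv{b\mSigma_P + \mI_d}$ all commute, the surrogate collapses to $\tfrac{1}{\sqrt{d}}\mSigma_P^{1/2}\inv{b\mSigma_P + \mI_d}(a\mSigma_P^{1/2}\vbeta^* + \sqrt{c}\,\vg)$, where the factor $b$ has cleared the denominators $\tfrac{a}{b}$ and $\tfrac{\sqrt{c}}{b}$ in the prox input. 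Writing $b\mSigma_P + \mI_d = b(\mSigma_P + \tfrac{1}{b}\mI_d)$ and pulling the scalar $\tfrac{1}{b}$ through, this equals $\tfrac{1}{\sqrt{d}}\mSigma_P^{1/2}\inv{\mSigma_P + \tfrac{1}{b}\mI_d}(\tfrac{a}{b}\mSigma_P^{1/2}\vbeta^* + \tfrac{\sqrt{c}}{b}\vg)$, which is exactly the form claimed in the corollary after renaming $(\tfrac{a}{b}, \tfrac{1}{b}, \tfrac{c}{b^2})$ as the new $(a, b, c)$. Since the corollary only asserts the \emph{existence} of such constants, this relabeling is harmless.

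For steps (ii) and (iii), I would fix a sequence $\epsilon_d \downarrow 0$ that decays slowly, for instance $\epsilon_d = d^{-1/8}$, so that with $n \asymp d$ forced by Assumption~\ref{assump:asymp-cgmt:proportional} the failure probability $\tfrac{C_2}{\epsilon_d^2}e^{-C_3 n \epsilon_d^4} \asymp C_2\, d^{1/4} e^{-C_3' \sqrt{d}}$ is summable in $d$, while still $\epsilon_d \to 0$ and $\epsilon_d < C_1$ eventually. The key point enabling the passage from a single $h$ to a sequence $h_d$ is that the constants $C_1, C_2, C_3$ furnished by Theorem~\ref{thm:gerbelot-asymp-cgmt} depend on the test function only through the pseudo-Lipschitz order (here fixed at $2$) and the pseudo-Lipschitz constant; under the uniform constant $C$ assumed in the corollary, one common triple works for every term of the sequence at once. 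Applying the theorem to each $h_d$ with tolerance $\epsilon_d$ and invoking Borel--Cantelli then shows that, almost surely, $\bigl|h_d(\tfrac{1}{\sqrt{d}}\widehat{\vbeta}) - h_d(\tfrac{1}{\sqrt{d}}\mSigma_P^{1/2}\inv{\mSigma_P + b\mI_d}(a\mSigma_P^{1/2}\vbeta^* + \sqrt{c}\,\vg))\bigr| < \epsilon_d$ for all sufficiently large $d$, so the difference of the two quantities tends to $0$ almost surely, which is the assertion of the corollary.

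The main obstacle I anticipate is step (iii): justifying that a single triple of concentration constants is valid across the entire sequence $\{h_d\}$. This hinges on reading off from the proof of Theorem 5 of \citet{loureiro21} the precise dependence of $C_1, C_2, C_3$ on the test function and confirming that it enters only through the pseudo-Lipschitz constant, so that uniformity of that constant transfers to uniformity of the bound. By comparison, the algebra in step (i) and the Borel--Cantelli bookkeeping in step (ii) are routine.
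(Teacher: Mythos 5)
Your proposal is correct and follows essentially the same route as the paper, whose proof is only a one-sentence sketch combining the prox identity $\prox{\tfrac{1}{2b}\norm[2]{\mSigma_P^{-1/2}\cdot}^2}{\vz} = \inv{\mI_d + \tfrac{1}{b}\mSigma_P^{-1}}\vz$ with the Borel--Cantelli lemma and the extension to a sequence of uniformly pseudo-Lipschitz functions. Your three steps---the explicit prox computation with the constant relabeling $(a,b,c) \mapsto (\tfrac{a}{b}, \tfrac{1}{b}, \tfrac{c}{b^2})$, the summable-tolerance Borel--Cantelli argument, and the observation that the constants $C_1, C_2, C_3$ of Theorem~\ref{thm:gerbelot-asymp-cgmt} depend on the test function only through its pseudo-Lipschitz constant---are exactly the ingredients the paper's sketch invokes, worked out in full and with correct algebra.
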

Finally, we obtain the form of the limiting covariances that we need for our proof.
\begin{lemma}
    \label{lem:asymp-cgmt-cov}
    Under Assumption~\ref{assump:asymp-cgmt-general}, as $n, d \to \infty$, and assuming the limits below exist for any $a \in \reals$, $b, c > 0$, there exist $a \in \reals$, $b, c > 0$ such that the estimator in \eqref{eq:erm-ridge} has decision functions converging almost surely to $\widehat{Z}_P$ and $\widehat{Z}_Q$ that satisfy
    \begin{gather}
        \expect{Z_P^{*2}} = \lim_{d \to \infty} \tfrac{1}{d} \vbeta^{*\transp} \mSigma_P \vbeta^*,\\
        \expect{Z_P^* \widehat{Z}_P} = \lim_{d \to \infty} \tfrac{a}{d} \vbeta^{*\transp} \mSigma_P^2 \inv{\mSigma_P + b \mI_d} \vbeta^*, \\
        \expect{\widehat{Z}_P^2} = \lim_{d \to \infty} \tfrac{a^2}{d} \vbeta^{*\transp} \mSigma_P^3 \inv[2]{\mSigma_P + b \mI_d} \vbeta^* + \tfrac{c}{d} \tr \bracket{\mSigma_P^2 \inv[2]{\mSigma_P + b \mI_d}}
        \\
        \expect{Z_Q^{*2}} = \lim_{d \to \infty} \tfrac{1}{d} \vbeta^{*\transp} \mSigma_Q \vbeta^*,\\
        \expect{Z_Q^* \widehat{Z}_Q} = \lim_{d \to \infty} \tfrac{a}{d} \vbeta^{*\transp} \mSigma_Q \mSigma_P \inv{\mSigma_P + b \mI_d} \vbeta^*, \\
        \expect{\widehat{Z}_Q^2} = \lim_{d \to \infty} \tfrac{a^2}{d} \vbeta^{*\transp} \mSigma_P \inv{\mSigma_P + b \mI_d} \mSigma_Q \mSigma_P \inv{\mSigma_P + b \mI_d} \vbeta^* + \tfrac{c}{d} \tr \bracket{\mSigma_Q \mSigma_P \inv[2]{\mSigma_P + b \mI_d}}.
    \end{gather}
\end{lemma}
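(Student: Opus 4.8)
The plan is to compute each of the six limiting second moments by conditioning on the training data, so that $\widehat{\vbeta}$ is fixed and the test covariate $\vx$ is the only remaining source of randomness. By the proposition that the decision functions are zero-mean bivariate normal, for a fixed $\widehat{\vbeta}$ it suffices to track the conditional second moments, which, using $\vx \sim \normal(\vzero, \tfrac{1}{d}\mSigma_P)$ under $P$, are the bilinear and quadratic forms $\expect{Z_P^{*2}} = \tfrac{1}{d}\vbeta^{*\transp}\mSigma_P\vbeta^*$, $\expect{Z_P^*\widehat{Z}_P} = \tfrac{1}{d}\vbeta^{*\transp}\mSigma_P\widehat{\vbeta}$, and $\expect{\widehat{Z}_P^2} = \tfrac{1}{d}\widehat{\vbeta}^\transp\mSigma_P\widehat{\vbeta}$, with the analogous expressions under $Q$ obtained by replacing $\mSigma_P$ with $\mSigma_Q$. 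The moments $\expect{Z_P^{*2}}$ and $\expect{Z_Q^{*2}}$ do not involve $\widehat{\vbeta}$ and already match the claim, so the task reduces to passing to the almost-sure limit in the four moments that are linear or quadratic in $\widehat{\vbeta}$.

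First I would realize each of these conditional forms as $h_d(\tfrac{1}{\sqrt{d}}\widehat{\vbeta})$ for an appropriate test function and invoke Corollary~\ref{cor:asymp-cgmt}. For the linear moments I use $h_d(\vz) = \tfrac{1}{\sqrt{d}}(\mSigma_P\vbeta^*)^\transp\vz$ (and $\mSigma_Q\vbeta^*$ for the $Q$ moment), which is linear and hence pseudo-Lipschitz of order $2$ (Definition~\ref{def:pl-continuous}) with uniform constant governed by $\tfrac{1}{d}\norm[2]{\mSigma_P\vbeta^*}^2$, bounded under the convergence and bounded-spectrum hypotheses of Assumption~\ref{assump:asymp-cgmt-general}. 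For the quadratic moments I use $h_d(\vz) = \vz^\transp\mSigma_P\vz$ (resp. $\vz^\transp\mSigma_Q\vz$); although a quadratic is not globally Lipschitz, the identity $\vz_1^\transp\mSigma_P\vz_1 - \vz_2^\transp\mSigma_P\vz_2 = (\vz_1 - \vz_2)^\transp\mSigma_P(\vz_1 + \vz_2)$ shows it is pseudo-Lipschitz of order $2$ with constant equal to the uniformly bounded operator norm of $\mSigma_P$, so it is admissible. Verifying this admissibility is the main subtlety of the argument, since it is precisely the order-$2$ pseudo-Lipschitz class that lets these second-moment functionals through the asymptotic characterization.

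With admissibility established, Corollary~\ref{cor:asymp-cgmt} replaces the functionals of $\tfrac{1}{\sqrt{d}}\widehat{\vbeta}$ by the same functionals of $\tfrac{1}{\sqrt{d}}\mSigma_P^{1/2}\inv{\mSigma_P + b\mI_d}(a\mSigma_P^{1/2}\vbeta^* + \sqrt{c}\,\vg)$. Expanding, each limit splits into a term quadratic in $\vbeta^*$, a cross term linear in $\vg$, and a term quadratic in $\vg$: the cross terms have zero mean and $O(1/d)$ variance over $\vg$, so they vanish almost surely, while the $\vg$-quadratic terms concentrate to their traces by standard Gaussian concentration (Hanson--Wright), yielding exactly the $\tfrac{c}{d}\tr[\mSigma_P^2\inv[2]{\mSigma_P + b\mI_d}]$ and $\tfrac{c}{d}\tr[\mSigma_Q\mSigma_P\inv[2]{\mSigma_P + b\mI_d}]$ contributions. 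The $\vbeta^*$ terms collapse using only that functions of $\mSigma_P$ commute, in particular $\mSigma_P^{1/2}\inv{\mSigma_P + b\mI_d}\mSigma_P^{1/2} = \mSigma_P\inv{\mSigma_P + b\mI_d}$, which removes the sandwiched $\mSigma_P^{1/2}$ factors while leaving $\mSigma_Q$ in place; notably no commutation between $\mSigma_P$ and $\mSigma_Q$ is required, consistent with the generality of Assumption~\ref{assump:asymp-cgmt-general}. Reading off the six resulting expressions then gives the stated moments, where the hypothesis that the relevant limits exist supplies the final convergence.
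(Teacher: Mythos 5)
Your proposal is correct and follows essentially the same route as the paper: both realize each moment as a pseudo-Lipschitz functional of $\tfrac{1}{\sqrt{d}}\widehat{\vbeta}$ (linear functionals $\tfrac{1}{\sqrt{d}}\vbeta^{*\transp}\mSigma_{P/Q}\,\vu$ for the cross-correlations, quadratics $\vu^\transp\mSigma_{P/Q}\,\vu$ for the variances) and invoke Corollary~\ref{cor:asymp-cgmt} to replace $\widehat{\vbeta}$ by the surrogate $\mSigma_P^{1/2}\inv{\mSigma_P+b\mI_d}(a\mSigma_P^{1/2}\vbeta^*+\sqrt{c}\,\vg)$. The only difference is one of explicitness: you spell out the vanishing of the $\vbeta^*$--$\vg$ cross terms and the Hanson--Wright concentration of the $\vg$-quadratic forms to their traces, steps the paper's proof leaves implicit in ``the calculation.''
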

\begin{proof}
The variances $\expect{Z_P^{*2}}$ and $ \expect{Z_Q^{*2}}$ are simply defined as stated. 
For $\expect{Z_P^* \widehat{Z}_P}$, observe that the decision functions $\vx^\transp \vbeta^*$ and $\vx^\transp \widehat{\vbeta}$ have correlation
\begin{align}
    \expect[\vx \sim P]{(\vx^\transp \vbeta^*) (\vx^\transp \widehat{\vbeta})} = \tfrac{1}{d} \vbeta^{*\transp} \mSigma_P \widehat{\vbeta}.
\end{align}
The functions in the sequence $h_d(\vu) = \tfrac{1}{\sqrt{d}} \vbeta^{*\transp} \mSigma_P \vu$ are uniformly Lipschitz since $\tfrac{1}{d} \vbeta^{*\transp} \mSigma_P \vbeta^*$ converges and $\mSigma_P$ has uniformly bounded eigenvalues almost surely, so we can apply Corollary~\ref{cor:asymp-cgmt} to obtain the stated result. Similarly, for $\expect{\widehat{Z}_P^2}$, the functions $h_d(\vu) = \vu^\transp \mSigma_P \vu$ are pseudo-Lipschitz continuous of order 2 with uniform constant $C$. 
The calculation is analogous for $\expect{Z_Q^* \widehat{Z}_Q}$ and $\expect{\widehat{Z}_Q^2}$, applying the functions
$h_d(\vu) = \tfrac{1}{\sqrt{d}} \vbeta^{*\transp} \mSigma_Q \vu$ and
$h_d(\vu) = \vu^{\transp} \mSigma_Q \vu$, respectively.
\end{proof}

\subsection{\ref{proof-step:monotonicity}: Monotonicity for Linear Risks}

Proving necessary and sufficient conditions for arbitrary risks is not a trivial task. However, if the risk has a \emph{linear} structure in some of the free parameters (perhaps after some invertible transformation), we can exploit this linearity to show that any risk relation must be affine. 
\begin{lemma}
    \label{lem:linear-risk-monotonic}
    Consider the following functions defined on $\setA \times \setB$ for open sets $\setA \subseteq \reals^{k_A}$ and $\setB \subseteq \reals^{k_B}$:
    \begin{align}
        R_P(\va, \vb) = h(\vw(\va)^\transp \vv_P(\vb) + v_P^0(\vb))
        \quad \text{and} \quad
        R_Q(\va, \vb) = h(\vw(\va)^\transp \vv_Q(\vb) + v_Q^0(\vb)),
    \end{align} where
    \begin{itemize}
        \item $h \colon \reals \to \reals$ is a monotonically increasing or decreasing function,
        \item $(\vw(\va), 1) \in \reals^{k_W + 1}$ is a vector of linearly independent scalar functions of $\va$ over $\setA$,
        \item $\vv_P$, $\vv_Q$, $v_P^0$, and $v_Q^0$ are differentiable functions of $\vb$, and $\vv_P(\vb) \neq \vzero$ for all $\vb \in \setB$.
    \end{itemize}
The following statements are equivalent: 
\begin{enumerate}[label=(\roman*),ref=\emph{(\roman*)}]
\item There exists a monotonically increasing
function $u \colon \reals \to \reals$ such that $R_Q(\va, \vb) = u(R_P(\va, \vb))$ for all $\va, \vb \in \setA \times \setB$. 
\label{cond:monotonic-exists}
\item
There exists $\rho > 0$, $u_0 \in \reals$ such that for all $\vb \in \setB$, $\vv_Q(\vb) = \rho \vv_P(\vb)$ and $v_Q^0(\vb) = \rho v_P^0(\vb) + u_0$.
\label{cond:scaling-exists}
\end{enumerate}
Furthermore, if $u$ exists, it has the form $u(t) = h \paren{\rho h^{-1}(t) + u_0}$.
\end{lemma}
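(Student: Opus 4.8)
The plan is to prove the equivalence in both directions and then read off the closed form of $u$. Throughout write $\ell_P(\va,\vb)=\vw(\va)^\transp\vv_P(\vb)+v_P^0(\vb)$ and $\ell_Q(\va,\vb)=\vw(\va)^\transp\vv_Q(\vb)+v_Q^0(\vb)$, so that $R_P=h(\ell_P)$ and $R_Q=h(\ell_Q)$, and note that since $h$ is strictly monotone it is invertible.

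For the easy direction \ref{cond:scaling-exists}$\Rightarrow$\ref{cond:monotonic-exists}, condition \ref{cond:scaling-exists} gives $\ell_Q=\rho\,\ell_P+u_0$ as an identity in $(\va,\vb)$. Hence $R_Q=h(\ell_Q)=h(\rho\,h^{-1}(R_P)+u_0)=:u(R_P)$, which already exhibits the claimed form $u(t)=h(\rho\,h^{-1}(t)+u_0)$. The map $u=h\circ(\rho(\cdot)+u_0)\circ h^{-1}$ is increasing whenever $\rho>0$, regardless of whether $h$ is increasing or decreasing: a decreasing $h$ contributes two order reversals (through $h^{-1}$ and through $h$) that cancel, leaving $u$ increasing.

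For the hard direction \ref{cond:monotonic-exists}$\Rightarrow$\ref{cond:scaling-exists}, I would apply $h^{-1}$ to $R_Q=u(R_P)$ to pass to the affine world: setting $\tilde u=h^{-1}\circ u\circ h$, which is strictly increasing, the hypothesis becomes $\ell_Q(\va,\vb)=\tilde u(\ell_P(\va,\vb))$ for all $(\va,\vb)$. Fix $\vb$; then $\ell_P(\cdot,\vb)$ and $\ell_Q(\cdot,\vb)$ are both affine in $\vw(\va)$ and are tied by the single monotone $\tilde u$. The first task is to show $\tilde u$ is affine, say $\tilde u(s)=\rho\,s+u_0$ with a fixed slope $\rho$ and intercept $u_0$. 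Granting this, the relation $\ell_Q=\rho\,\ell_P+u_0$ holds identically in $\va$ for each $\vb$, and now the linear independence of $(\vw(\va),1)$ does the decisive work: matching the coefficients of the independent functions $w_1,\dots,w_{k_W},1$ forces precisely $\vv_Q(\vb)=\rho\,\vv_P(\vb)$ and $v_Q^0(\vb)=\rho\,v_P^0(\vb)+u_0$, which is \ref{cond:scaling-exists}, while the monotonicity of $\tilde u$ yields $\rho>0$.

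Two points require care. First, to globalize from per-$\vb$ affine behavior to a \emph{single} pair $(\rho,u_0)$, I would use that $\tilde u$ is one fixed function together with differentiability of $\vv_P,v_P^0$ in $\vb$ and connectedness of $\setB$ (or overlap of the ranges): the intervals $\ell_P(\setA\times\{\vb\})$ vary continuously and overlap, so the locally constant slope is in fact globally constant. Second, and this is the main obstacle, is the step asserting that $\tilde u$ is affine. Mere linear independence, i.e.\ affine spanning of $\{\vw(\va)\}$, does not suffice by itself --- three non-collinear values of $(\ell_P,\ell_Q)$ can lie on the graph of a nonlinear increasing $\tilde u$ such as $s\mapsto s^2$ --- so the argument must genuinely exploit that, as $\va$ ranges over the open set $\setA$, the image $\vw(\setA)$ is rich (full-dimensional), so that $(\ell_P,\ell_Q)$ traces a two-dimensional set and cannot be confined to a nonlinear curve; this pins the graph of $\tilde u$ to a line. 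Establishing and correctly invoking this richness, and verifying it for the concrete $\vw$ arising from the covariance parametrization of Lemma~\ref{lem:asymp-cgmt-cov}, is where I expect the bulk of the technical effort to lie.
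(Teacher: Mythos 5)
Your forward direction matches the paper's, and your reduction of the converse to the existence of a monotone $\tilde u$ satisfying $\vw(\va)^\transp \vv_Q(\vb) + v_Q^0(\vb) = \tilde u(\vw(\va)^\transp \vv_P(\vb) + v_P^0(\vb))$ is exactly the paper's first move. The genuine gap is that you stop there: the step you yourself call ``the main obstacle'' --- showing that $\tilde u$ is affine with slope and intercept independent of $\vb$ --- is the entire content of the implication, and your proposal defers it rather than proves it. The paper closes this step by differentiation. Taking the gradient of the identity with respect to $\vw(\va)$ gives $\vv_Q(\vb) = \tilde u'(\vw(\va)^\transp\vv_P(\vb) + v_P^0(\vb))\,\vv_P(\vb)$; since the left side does not depend on $\va$, the derivative $\tilde u'$ can depend only on $\vb$, say $\rho(\vb)$. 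Taking the gradient of the same identity with respect to $\vb$ and rewriting it as $\bracket{\nabla_\vb (\vv_Q, v_Q^0) - \rho(\vb) \nabla_\vb (\vv_P, v_P^0)} (\vw(\va), 1) = \vzero$, linear independence of the components of $(\vw(\va),1)$ forces $\nabla_\vb (\vv_Q, v_Q^0) = \rho(\vb) \nabla_\vb (\vv_P, v_P^0)$. Differentiating $\vv_Q = \rho(\vb)\vv_P$ in $\vb$ and comparing then yields $(\nabla_\vb \rho)\,\vv_P(\vb)^\transp = \vzero$, hence $\nabla_\vb \rho = \vzero$ because $\vv_P(\vb) \neq \vzero$. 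So $\rho$ is one global constant, $\tilde u(t) = \rho t + u_0$, a final appeal to linear independence gives $v_Q^0 = \rho v_P^0 + u_0$, and monotonicity gives $\rho > 0$. Note that this disposes of both of your ``points requiring care'' simultaneously: no connectedness of $\setB$ or overlapping-range argument is needed, since constancy of $\rho$ comes from linear independence together with $\vv_P \neq \vzero$.

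That said, the worry that made you stop is legitimate, and it points at a soft spot in the paper's own argument rather than at a standard technique you merely failed to recall. The paper's gradient in $\vw(\va)$ treats $\vw$ as a free variable ranging over an open set (and $\tilde u$ as differentiable), and linear independence of the component functions alone does not license this. Concretely, take $\setA = (-\infty, 0)$, $\vw(a) = (-2a, a^2)$, $h = \mathrm{id}$, $\vv_P \equiv (1,0)$, $\vv_Q \equiv (0,1)$, $v_P^0 = v_Q^0 \equiv 0$: the components of $(\vw(a),1)$ are linearly independent and $\vv_P \neq \vzero$, yet $R_P = -2a$ ranges over $(0,\infty)$ and $R_Q = R_P^2/4$, which extends to a monotonically increasing $u$ on all of $\reals$, while statement (ii) fails. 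So under the hypotheses as literally written, (i) does not imply (ii); some richness condition on the image $\vw(\setA)$ is genuinely needed, and it must then be checked in the applications --- nontrivially so for squared error, where $\vw(a,c) = (-2a, a^2, c)$ in Theorem~\ref{thm:monotonic-risk-squared-error} also has a lower-dimensional image in $\reals^3$, and the rescue has to use structure such as $\Theta_P(b), \Theta_Q(b) > 0$ so that the free $c$-direction pins down the slope. In summary: as a reconstruction of the paper's proof, your proposal is missing its central (differentiation) step and is therefore incomplete; as mathematics, the obstruction you flagged is real, and neither your sketch nor the paper's proof fully resolves it.
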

\begin{proof}
We first show that condition \ref{cond:scaling-exists} implies \ref{cond:monotonic-exists}. 
Denote $t(\va,\vb) = \vw(\va)^\transp \vv_P(\vb) + v_P^0(\vb)$ and note that 
 condition \ref{cond:scaling-exists} implies that $R_P(\va,\vb) = h(t(\va,\vb))$ and $R_Q(\va,\vb) = h(\rho t(\va,\vb) + u_0 )$. 
Next, note that the function $\tilde{u}(t) = \rho t + u_0$, $\rho>0$ is monotonically increasing, and so is $u = h \circ \tilde{u} \circ h^{-1}$, since a composition of increasing and decreasing functions is increasing if the number of decreasing functions is even, and $h$ and $h^{-1}$ are either both increasing or both decreasing.
Thus, condition \ref{cond:scaling-exists} implies \ref{cond:monotonic-exists}. 

It remains to show that condition \ref{cond:monotonic-exists} implies \ref{cond:scaling-exists}. 
For this, we identify necessary conditions for \ref{cond:monotonic-exists} to hold. First note that by a similar argument to the \ref{cond:scaling-exists}$\implies$\ref{cond:monotonic-exists} case , \ref{cond:monotonic-exists} holds if and only if there is a monotonic $\tilde{u}$ such that
\begin{align}
    \label{eq:lem:monotonic-existence:u-tilde}
    \vw(\va)^\transp \vv_Q(\vb) + v_Q^0(\vb) = \tilde{u}(\vw(\va)^\transp \vv_P(\vb) + v_P^0(\vb)).
\end{align}
In the following, we show that for this equation to hold, the function $\tilde u$ must have the form 
$\tilde{u}(t) = \rho t + u_0$ for $\rho>0$.

We begin by taking the gradient of both sides of equation~\eqref{eq:lem:monotonic-existence:u-tilde} with respect to $\vw(\va)$, giving the condition
\begin{align}
    \label{eq:lem:monotonic-existence:v-scaled}
    \vv_Q(\vb) = \tilde{u}'(\vw(\va)^\transp \vv_P(\vb) + v_P^0(\vb)) \vv_P(\vb).
\end{align}
Since the above equation must hold for all $\va, \vb \in \setA \times \setB$, the derivative $\tilde{u}'\colon \reals \to \reals$ must be a function of $\vb$ only---let us write this as $\rho(\vb) \defeq \tilde{u}'(\vw(\va)^\transp \vv_P(\vb) + v_P^0(\vb))$. We can additionally take the gradients of equation~\eqref{eq:lem:monotonic-existence:u-tilde} with respect to $\vb$:
\begin{align}
    \nabla_\vb \vv_Q(\vb) \vw(\va) + \nabla_\vb v_Q^0(\vb) = \tilde{u}'(\vw(\va)^\transp \vv_P(\vb) + v_P^0(\vb)) \paren{\nabla_\vb \vv_P(\vb) \vw(\va) + \nabla_\vb v_P^0(\vb)}.
\end{align}
We can rewrite this equation as
\begin{align}
    \bracket{\nabla_\vb (\vv_Q(\vb), v_Q^0(\vb)) - \rho(\vb) \nabla_\vb (\vv_P(\vb), v_P^0(\vb))} (\vw(\va), 1) = \vzero.
\end{align}
In this form, we can see that because $(w(\va), 1)$ is a vector of linearly independent functions over $\va \in \setA$, the only solutions to this equation are the trivial solutions which satisfy 
\begin{align}
    \label{eq:lem:monotonic-existence:nabla-scaled}
    \nabla_\vb (\vv_Q(\vb), v_Q^0(\vb)) = \rho(\vb) \nabla_\vb (\vv_P(\vb), v_P^0(\vb)).
\end{align}
Returning to equation~\eqref{eq:lem:monotonic-existence:v-scaled}, we can now take its gradient with respect to $\vb$, yielding
\begin{align}
    \nabla_\vb \vv_Q(\vb) = \paren{\nabla_\vb \rho(\vb)} \vv_P(\vb)^\transp + \rho(\vb) \nabla_\vb \vv_P(\vb),
\end{align}
which, combined with equation~\eqref{eq:lem:monotonic-existence:nabla-scaled} implies that $\paren{ \nabla_\vb \rho(\vb)} \vv_P(\vb)^\transp = \vzero$, implying that $\nabla_\vb \rho(\vb) = \vzero$ since $\vv_P(\vb) \neq \vzero$ by assumption. Thus, $\rho(\vb)$ is constant as a function of $\vb$, implying that $\tilde{u}$ is an affine function; let us therefore write $\tilde{u}(t) = \rho t + u_0$. Then we can rewrite equation~\eqref{eq:lem:monotonic-existence:u-tilde} as
\begin{align}
    \bracket{(\vv_Q(\vb), v_Q^0(\vb)) - \rho (\vv_P(\vb), v_P^0(\vb)) - (\vzero, u_0)}^\transp (\vw(\va), 1) = 0.
\end{align}
By linear independence again, this equation can have only the trivial solution, implying that $v_Q^0(\vb) = \rho v_P^0(\vb) + u_0$. Lastly, this mapping is monotonically increasing only if $\rho > 0$.
\end{proof}

\subsection{\ref{proof-step:specific-metrics}: Squared Error}

We start with the simpler case of squared error. We first introduce notation to simplify expressions. Let
\begin{align}
    \expect{Z_P^{*2}} &= \Omega_P, \quad
    \expect{Z_P^* \widehat{Z}_P} = a \Gamma_P(b), \quad
    \expect{\widehat{Z}_P^2} = a^2 \Lambda_P(b) + c \Theta_P(b), \\
    \expect{Z_Q^{*2}} &= \Omega_Q, \quad
    \expect{Z_Q^* \widehat{Z}_Q} = a \Gamma_Q(b), \quad
    \expect{\widehat{Z}_Q^2} = a^2 \Lambda_Q(b) + c \Theta_Q(b),
\end{align}
where
\begin{equation}
\begin{gathered}
    \Omega_P \defeq \lim_{d \to \infty}     \tfrac{1}{d} \vbeta^{*\transp} \mSigma_P \vbeta^*, \quad 
    \Gamma_P(b) \defeq \lim_{d \to \infty} \tfrac{1}{d} \vbeta^{*\transp} \mSigma_P^2 \inv{\mSigma_P + b \mI_d} \vbeta^*, \\
    \Lambda_P(b) \defeq \lim_{d \to \infty} \tfrac{1}{d} \vbeta^{*\transp} \mSigma_P^3 \inv[2]{\mSigma_P + b \mI_d} \vbeta^*, \quad
    \Theta_P(b) \defeq \lim_{d \to \infty} \tfrac{1}{d} \tr \bracket{\mSigma_P^2 \inv[2]{\mSigma_P + b \mI_d}}, \\
    \Omega_Q \defeq \lim_{d \to \infty}     \tfrac{1}{d} \vbeta^{*\transp} \mSigma_Q \vbeta^*, \quad
    \Gamma_Q(b) \defeq \lim_{d \to \infty}
    \tfrac{1}{d} \vbeta^{*\transp} \mSigma_Q \mSigma_P \inv{\mSigma_P + b \mI_d} \vbeta^*, \\
    \Lambda_Q(b) \defeq \lim_{d \to \infty} \tfrac{1}{d} \vbeta^{*\transp} \mSigma_P \inv{\mSigma_P + b \mI_d} \mSigma_Q \mSigma_P \inv{\mSigma_P + b \mI_d} \vbeta^*, \\
    \Theta_Q(b) \defeq \lim_{d \to \infty} 
    \tfrac{1}{d} \tr \bracket{\mSigma_Q \mSigma_P \inv[2]{\mSigma_P + b \mI_d}}.
\end{gathered}
\label{eq:brief-notation}
\end{equation}
We now prove the squared error case in the following theorem.
\begin{theorem}
    \label{thm:monotonic-risk-squared-error}
    Under Assumption~\ref{assump:asymp-cgmt-general}, with probability 1, in the limit as $d \to \infty$ for $\hat{f}(\vx) = \phi(\vx, \widehat{\vbeta}(\setD, \ell, \lambda))$ solving \eqref{eq:erm-ridge},
    for $\psi(z^*, \hat{z}) = (z^* - \hat{z})^2$, there exists a monotonic relation between $\risk_Q(\hat{f})$ and $\risk_P(\hat{f})$ that depends only on $(P, Q, \vbeta^*)$ if and only if
    there exists $\rho > 0$ such that for all $b > 0$,
    \begin{align}
        \Gamma_Q(b) = \rho \Gamma_P(b), \quad 
        \Lambda_Q(b) = \rho \Lambda_P(b), \quad
        \Theta_Q(b) = \rho \Theta_P(b).
    \end{align}
    If this relation exists, it is
        \begin{align}
            \risk_Q(\hat{f}) = \rho ( \risk_P(\hat{f}) - \Omega_P) + \Omega_Q.
        \end{align}
\end{theorem}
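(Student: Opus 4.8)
The plan is to recognize Theorem~\ref{thm:monotonic-risk-squared-error} as essentially a direct application of the two results already in hand: the asymptotic covariance characterization (Lemma~\ref{lem:asymp-cgmt-cov}) and the abstract monotonicity criterion for linear risks (Lemma~\ref{lem:linear-risk-monotonic}). First I would write the squared-error risk explicitly. Since $(Z^*, \widehat{Z})$ is a zero-mean bivariate normal pair, $\risk = \expect{(Z^* - \widehat{Z})^2} = \expect{Z^{*2}} - 2\expect{Z^* \widehat{Z}} + \expect{\widehat{Z}^2}$. Substituting the asymptotic forms from Lemma~\ref{lem:asymp-cgmt-cov} together with the notation in \eqref{eq:brief-notation} gives
\begin{align}
\risk_P(\widehat{\vbeta}) = \Omega_P - 2a\Gamma_P(b) + a^2\Lambda_P(b) + c\Theta_P(b),
\end{align}
and the identical expression with $Q$-subscripts for $\risk_Q(\widehat{\vbeta})$. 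Here the three scalars $a \in \reals$, $b > 0$, $c > 0$ of the asymptotic characterization are the free parameters that vary as the learning problem (loss, regularization strength, sample size, labeling) varies, while the functions $\Gamma, \Lambda, \Theta, \Omega$ are fixed and depend only on $(P, Q, \vbeta^*)$.

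Next I would match this to the hypotheses of Lemma~\ref{lem:linear-risk-monotonic} with $h$ the identity. I would set $\va = (a,c)$ and $\vb = (b)$, and take $\vw(\va) = (-2a, a^2, c)$, $\vv_P(b) = (\Gamma_P(b), \Lambda_P(b), \Theta_P(b))$, $v_P^0(b) = \Omega_P$, and analogously for $Q$. Two hypotheses must be verified. The first is that $(\vw(\va), 1) = (-2a, a^2, c, 1)$ is a vector of linearly independent functions over an open $\setA$; this is immediate, since a vanishing affine combination forces every coefficient to zero upon comparing the constant, $a$, $a^2$, and $c$ terms. The second is that $\vv_P(b) \neq \vzero$, which holds because $\Theta_P(b) = \lim_{d \to \infty} \tfrac{1}{d} \tr \bracket{\mSigma_P^2 \inv[2]{\mSigma_P + b \mI_d}} > 0$ whenever $\mSigma_P$ is nonzero.

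Then the equivalence \ref{cond:monotonic-exists}$\Leftrightarrow$\ref{cond:scaling-exists} of Lemma~\ref{lem:linear-risk-monotonic} delivers the theorem directly. A monotonic relation depending only on $(P, Q, \vbeta^*)$ — that is, a single $u$ holding across all admissible $(a,b,c)$ and hence across all estimators — exists if and only if there are $\rho > 0$ and $u_0$ with $\vv_Q(b) = \rho \vv_P(b)$ and $v_Q^0(b) = \rho v_P^0(b) + u_0$ for all $b$. The first equation is precisely the three proportionality conditions $\Gamma_Q = \rho\Gamma_P$, $\Lambda_Q = \rho\Lambda_P$, $\Theta_Q = \rho\Theta_P$, and the second reads $\Omega_Q = \rho\Omega_P + u_0$. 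The closed form from the lemma, $u(t) = h(\rho h^{-1}(t) + u_0) = \rho t + u_0$, then yields $\risk_Q = \rho\risk_P + (\Omega_Q - \rho\Omega_P) = \rho(\risk_P - \Omega_P) + \Omega_Q$, as claimed.

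The main obstacle is not the algebra but the domain/modeling point underlying the application of Lemma~\ref{lem:linear-risk-monotonic}: the lemma requires $(a,c)$ and $b$ to range over genuine open sets, so I must argue (or posit, as the surrounding discussion does) that varying the learning problem sweeps $(a,b,c)$ through a set with nonempty interior. Granting this, the remainder is bookkeeping, since the real conceptual work — that linearity of the risk in the free parameters forces any monotonic relation to be affine — is entirely internal to Lemma~\ref{lem:linear-risk-monotonic} via its gradient and linear-independence argument. The two hypotheses I would take care to flag are exactly the non-vanishing of $\vv_P$ and the linear independence of $(\vw(\va), 1)$, as these are what permit invoking that lemma cleanly.
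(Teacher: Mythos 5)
Your proposal is correct and follows essentially the same route as the paper's proof: decompose the squared-error risk as $\expect{Z^{*2}} - 2\expect{Z^*\widehat{Z}} + \expect{\widehat{Z}^2}$, then invoke Lemma~\ref{lem:linear-risk-monotonic} with $h(t)=t$, $\vw(a,c)=(-2a,a^2,c)$, $\vv_P(b)=(\Gamma_P(b),\Lambda_P(b),\Theta_P(b))$, $v_P^0(b)=\Omega_P$, and the analogous $Q$-quantities, with the intercept condition absorbed into $u_0 = \Omega_Q - \rho\Omega_P$. Your explicit verification of the lemma's hypotheses (linear independence of $(-2a,a^2,c,1)$ and $\vv_P(b)\neq\vzero$ via $\Theta_P(b)>0$) is a detail the paper leaves implicit, but it does not change the argument.
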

\begin{proof}
We begin by observing that 
\begin{align}
    \risk_P(\hat{f}) = \expect{(Z_P^* - \widehat{Z}_P)^2} = \expect{Z_P^{*2}} - 2 \expect{Z_P^* \widehat{Z}_P} +  \expect{\widehat{Z}_P^2},
\end{align}
which means that we can apply Lemma~\ref{lem:linear-risk-monotonic} with $h(t) = t$, $\vw(a, c) = (-2a, a^2, c)$,
and
\begin{align}
    \vv_P(b) = (\Gamma_P(b), \Lambda_P(b), \Theta_P(b)), \quad v_P^0(b) = \Omega_P, \\
    \vv_Q(b) = (\Gamma_Q(b), \Lambda_Q(b), \Theta_Q(b)), \quad v_Q^0(b) = \Omega_Q.
\end{align}
Therefore, the condition that $\vv_Q(b) = \rho \vv_P(b)$ is equivalent to the stated condition. The condition that $v_Q^0(b) = \rho v_P^0(b) + u_0$ is trivially satisfied by $u_0 = v_Q^0(b) - \rho v_P^0(b)$ since $v_P^0$ and $v_Q^0$ are constant functions of $b$.
\end{proof}

\subsection{\ref{proof-step:specific-metrics}: Misclassification Error}

We now move to the slightly more difficult case of misclassification error. We first need a closed-form  expression for the risk, which we obtain from the following lemma.

\begin{lemma}
\label{lem:gaussian-cosine}
For two zero-mean jointly Gaussian random variables $X$ and $Y$, 
\begin{align}
    \Pr(XY < 0) = \frac{1}{\pi} \arccos \paren{\frac{\expect{XY}}{\sqrt{\expect{X^2} \expect{Y^2}}}}.
\end{align}
\end{lemma}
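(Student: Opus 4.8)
The plan is to reduce the statement to a rotational-invariance computation for an isotropic Gaussian. First I would normalize: setting $\rho \defeq \expect{XY} / \sqrt{\expect{X^2}\expect{Y^2}}$ and replacing $X, Y$ by $X / \sqrt{\expect{X^2}}$ and $Y/\sqrt{\expect{Y^2}}$ scales $XY$ by a positive constant and hence leaves the sign of the product unchanged, so $\Pr(XY < 0)$ depends only on $\rho$, and we may assume $X$ and $Y$ are standard Gaussians with correlation $\rho$. (The degenerate case of a vanishing variance is excluded by the normalization and may be handled separately by continuity.)

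Next I would introduce an explicit isotropic representation. Let $U, V \sim \normal(0,1)$ be independent, and write $X = U$ and $Y = \rho U + \sqrt{1 - \rho^2}\, V$; a direct covariance computation confirms this pair has the correct joint distribution. Passing to polar coordinates $(U, V) = r(\cos\theta, \sin\theta)$, the rotational invariance of the isotropic density $\tfrac{1}{2\pi} e^{-(u^2+v^2)/2}$ guarantees that $\theta$ is uniform on $[0, 2\pi)$ and independent of $r$. Using the angle-addition identity with $\phi \defeq \arccos\rho \in [0, \pi]$ (so that $\cos\phi = \rho$ and $\sin\phi = \sqrt{1-\rho^2}$), I can then write
\begin{align}
    X = r\cos\theta, \qquad Y = r\cos(\theta - \phi).
\end{align}

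The event $\set{XY < 0}$ then becomes $\set{\cos\theta \cos(\theta - \phi) < 0}$, i.e.\ the set of angles $\theta$ at which the sign patterns $\mathrm{sign}(\cos\theta)$ and $\mathrm{sign}(\cos(\theta-\phi))$ disagree. The final step is to compute the angular (Lebesgue) measure of this disagreement set. Each of these two $\pm 1$-valued square waves is positive on a half-circle and negative on the complementary half-circle, and the second is merely the first shifted by $\phi$; for a shift $\phi \in [0, \pi]$ the measure of disagreement over the full circle is exactly $2\phi$. One verifies this by locating the four sign changes, at $\theta = \pm\pi/2$ and $\theta = \phi \pm \pi/2$, and summing the lengths of the two sub-arcs on which exactly one factor is negative. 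Dividing by $2\pi$ gives $\Pr(XY < 0) = \tfrac{2\phi}{2\pi} = \tfrac{1}{\pi}\arccos\rho$, as claimed.

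The only delicate point is this last combinatorial bookkeeping of the disagreement measure, which I expect to be the main obstacle: one must track the four sign-change locations and confirm that the two arcs on which the factors have opposite signs have total length $2\phi$ uniformly for $\phi \in [0, \pi]$, with the endpoints $\phi = 0$ (perfect agreement, measure $0$) and $\phi = \pi$ (perfect anti-correlation, measure $2\pi$) serving as sanity checks. An alternative route is to invoke the classical orthant identity $\Pr(X>0, Y>0) = \tfrac14 + \tfrac{1}{2\pi}\arcsin\rho$, combine it with the complementary orthant, and use $\arccos\rho = \tfrac{\pi}{2} - \arcsin\rho$; but the self-contained polar argument avoids citing external formulas.
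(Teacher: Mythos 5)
Your proof is correct and takes essentially the same approach as the paper's: normalize to unit variances, decompose $Y$ as $\rho X$ plus an independent Gaussian component, use rotational/scale invariance to reduce to a uniform angle $\Theta$ on $[0,2\pi)$, and compute the measure ($2\arccos\rho$) of the arcs where the two factors disagree in sign. The only cosmetic difference is that you make the shifted-cosine structure $\cos\theta\cos(\theta-\phi)$ explicit via the angle-addition identity, whereas the paper characterizes the same disagreement set as intervals of length $\arccos\rho$ anchored at odd multiples of $\pi/2$.
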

\begin{proof}
First define $\widetilde{X} = X / \sqrt{\expect{X^2}}$ and $\widetilde{Y} = Y / \sqrt{\expect{Y^2}}$.
We can decompose $\widetilde{Y}$ as:
\begin{align}
    \widetilde{Y} = \expect{\widetilde{X}\widetilde{Y}} \widetilde{X} + \sqrt{1 - \expect{\widehat{X}\widetilde{Y}}^2} U_Y,
\end{align}
where $U_Y$ is a standard normal random variable. Observe that for any scalar $a > 0$, $\set{\widetilde{X} \widetilde{Y} < 0} = \set{a\widetilde{X} \widetilde{Y} < 0}$, so we can jointly scale $\widetilde{X}$ and $U_Y$ without affecting the event, even if this scalar is random. Because $\widetilde{X}$ and $U_Y$ are independent standard normal variables, this means we can choose a random variable $\Theta \sim \mathrm{Uniform}[0, 2\pi)$ such that
\begin{align}
    (\cos \Theta, \sin \Theta) = \paren{\frac{\widetilde{X}}{\sqrt{\widetilde{X}^2 + U_Y^2}}, \frac{U_Y}{\sqrt{\widetilde{X}^2 + U_Y^2}}}.
\end{align}
Now 
\[
\Pr(XY < 0) = \Pr \paren{\widetilde{X}\widetilde{Y} < 0} = \Pr \paren{\cos \Theta  \paren{\expect{\widetilde{X}\widetilde{Y}} \cos \Theta + \sqrt{1 - \expect{\widehat{X}\widetilde{Y}}^2} \sin \Theta} < 0}.
\]
This inequality is satisfied for
\begin{align}
    \Theta \in [0, 2 \pi) \cap \bigcup_{n = -\infty}^{\infty} \paren{\frac{(2n + 1)\pi}{2}, \frac{(2n + 1)\pi}{2} + \arccos \paren{\expect{\widetilde{X}\widetilde{Y}}}}.
\end{align}
The size of each of the intervals in the union is $\arccos\paren{\expect{\widetilde{X}\widetilde{Y}}}$, and twice the length of one such interval is included in $[0, 2\pi)$. Plugging in the definitions of $\widetilde{X}$ and $\widetilde{Y}$ therefore proves the claim.
\end{proof}
We are now ready to state and prove the classification error case.

\begin{theorem}
    \label{thm:monotonic-risk-classification-error}
    Under Assumption~\ref{assump:asymp-cgmt-general}, with probability 1, in the limit as $d \to \infty$ for $\hat{f}(\vx) = \phi(\vx, \widehat{\vbeta}(\setD, \ell, \lambda))$ solving \eqref{eq:erm-ridge},
    for $\psi(z^*, \hat{z}) = \ind\set{z^*\hat{z} < 0}$, there exists a monotonic relation between $\risk_Q(\hat{f})$ and $\risk_P(\hat{f})$ that depends only on $(P, Q, \vbeta^*)$ if and only there exist $\rho >0$ and $u_0 \in \reals$ such that \editedinline{for all $b > 0$}
    \begin{align}
        \frac{\Omega_Q \Theta_Q(b)}{\Gamma_Q(b)^2}
        = \frac{\rho\Omega_P \Theta_P(b)}{\Gamma_P(b)^2}, \qquad
        \frac{\Omega_Q \Lambda_Q(b)}{\Gamma_Q(b)^2}
        = \frac{\rho \Omega_P \Lambda_P(b)}{\Gamma_P(b)^2} + u_0.
    \end{align} 
    If this relation exists, it is
    \begin{align}
        \sec^2 (\pi \risk_Q(\hat{f})) = \rho \sec^2(\pi \risk_P(\hat{f})) + u_0,
    \end{align}
    where $\sec(t) = \tfrac{1}{\cos(t)}$.
\end{theorem}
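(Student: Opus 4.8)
The plan is to reduce the misclassification case to the generic monotonicity criterion of Lemma~\ref{lem:linear-risk-monotonic}, exactly as in the squared-error case, but with a nonlinear transformation of the risk playing the role of the linearizing map $h$. The starting point is Lemma~\ref{lem:gaussian-cosine}: since $(Z_P^*, \widehat{Z}_P)$ is zero-mean jointly Gaussian, the misclassification risk is $\risk_P(\hat f) = \tfrac{1}{\pi}\arccos(\varrho_P)$, where $\varrho_P$ is the correlation of the two decision functions, and likewise for $Q$. Substituting the limiting covariances from Lemma~\ref{lem:asymp-cgmt-cov} in the abbreviated notation of \eqref{eq:brief-notation} gives
\[
\varrho_P = \frac{a\,\Gamma_P(b)}{\sqrt{\Omega_P\,(a^2\Lambda_P(b) + c\,\Theta_P(b))}},
\]
with the analogous expression for $\varrho_Q$.

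First I would observe that $\varrho_P$ is invariant under the rescaling $(a,c) \mapsto (ta, t^2 c)$, so it depends on the free parameters $a, b, c$ only through $b$ and the single scalar $c/a^2$. This is the crucial collapse of degrees of freedom: whereas the squared-error risk was affine in the three quantities $(-2a, a^2, c)$, here the correlation, and hence the risk, depends only on two effective parameters. Applying the transformation $\risk \mapsto \sec^2(\pi \risk)$, which on $[0, \tfrac{1}{2})$ inverts $h(s) = \tfrac{1}{\pi}\arccos(1/\sqrt{s})$, I would compute
\[
\sec^2(\pi \risk_P(\hat f)) = \frac{1}{\varrho_P^2} = \frac{\Omega_P \Lambda_P(b)}{\Gamma_P(b)^2} + \frac{c}{a^2}\,\frac{\Omega_P \Theta_P(b)}{\Gamma_P(b)^2},
\]
exhibiting $\sec^2(\pi \risk_P)$ as an affine function of $c/a^2$ with $b$-dependent coefficients.

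This is precisely the structure required by Lemma~\ref{lem:linear-risk-monotonic}, which I would invoke with $h(s) = \tfrac{1}{\pi}\arccos(1/\sqrt{s})$ (increasing on its domain, with $h^{-1}(t) = \sec^2(\pi t)$; restricting to $\risk \in [0, \tfrac{1}{2})$ so that $h$ is a genuine inverse), with the scalar feature $\vw(\va) = c/a^2$ so that $(\vw(\va), 1)$ is a pair of linearly independent functions of $\va = (a,c)$ over an open set, and with
\[
\vv_P(b) = \frac{\Omega_P \Theta_P(b)}{\Gamma_P(b)^2}, \qquad v_P^0(b) = \frac{\Omega_P \Lambda_P(b)}{\Gamma_P(b)^2},
\]
together with $\vv_Q, v_Q^0$ defined from the corresponding $Q$-quantities. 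The positivity of $\Omega_P$, $\Theta_P(b)$, and $\Gamma_P(b)^2$ ensures $\vv_P(b) \neq 0$, and all four functions are differentiable in $b > 0$, so the hypotheses hold. The equivalence in the lemma then reads off as exactly the stated conditions $\vv_Q = \rho \vv_P$ and $v_Q^0 = \rho v_P^0 + u_0$, and its explicit form $u(t) = h(\rho\, h^{-1}(t) + u_0)$ yields $\sec^2(\pi \risk_Q) = \rho\, \sec^2(\pi \risk_P) + u_0$ after applying $h^{-1}$ to both sides.

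I expect the main obstacle to be the reduction step rather than any computation: recognizing that the correct linearizing transformation is $\sec^2(\pi\,\cdot)$ and that the scale invariance of the correlation collapses $(a,c)$ to the single parameter $c/a^2$. Once this is in hand, the problem fits the template of Lemma~\ref{lem:linear-risk-monotonic}, and the remaining work is the mechanical verification of its hypotheses (linear independence of $(\vw(\va), 1)$, nonvanishing and differentiability of the $\vv$ and $v^0$ functions) and the bookkeeping to match the lemma's output to the stated conditions and relation.
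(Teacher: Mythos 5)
Your proposal is correct and follows essentially the same route as the paper's proof: both use Lemma~\ref{lem:gaussian-cosine} to express the misclassification risk through the decision-function correlation, note that $\sec^2(\pi\,\cdot)$ linearizes it into the form $\tfrac{\Omega_P\Lambda_P(b)}{\Gamma_P(b)^2} + \tfrac{c}{a^2}\tfrac{\Omega_P\Theta_P(b)}{\Gamma_P(b)^2}$, and then apply Lemma~\ref{lem:linear-risk-monotonic} with $w(a,c) = c/a^2$ and exactly the same $v_P, v_Q, v_P^0, v_Q^0$. Your explicit remarks on the scale invariance $(a,c)\mapsto(ta,t^2c)$ and on restricting the risk to $[0,\tfrac12)$ so that $h$ is genuinely invertible are details the paper leaves implicit, but they do not change the argument.
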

\begin{proof}
Let $h$ have inverse $h^{-1}(t) = \sec^2(\pi t)$. Then applying Lemma~\ref{lem:gaussian-cosine} and the definitions in \eqref{eq:brief-notation}, the risk has the form 
\begin{align}
    h^{-1}(\risk_P(\hat{f})) = \frac{\expect{Z_P^{*2}} \expect{\widehat{Z}_P^2}}{\expect{Z_P^* \widehat{Z}_P}^2} = \Omega_P \frac{a^2\Lambda_P(b) + c \Theta_P(b)}{(a \Gamma_P(b))^2},
\end{align}
which means that we can apply Lemma~\ref{lem:linear-risk-monotonic} with $w(a, c) = \tfrac{c}{a^2}$ and
\begin{gather}
    v_P(b) = \frac{\Omega_P \Theta_P(b)}{\Gamma_P(b)^2}, \quad
    v_Q(b) = \frac{\Omega_Q \Theta_Q(b)}{\Gamma_Q(b)^2}, \quad
    v_P^0(b) = \frac{\Omega_P \Lambda_P(b)}{\Gamma_P(b)^2}, \quad
    v_Q^0(b) = \frac{\Omega_Q \Lambda_Q(b)}{\Gamma_Q(b)^2}.
\end{gather}
The condition from Lemma~\ref{lem:linear-risk-monotonic} is equivalent to the stated condition.
\end{proof}

\subsection{\ref{proof-step:simplifying}: Simplifying Assumptions}

The necessary and sufficient conditions in Theorems~\ref{thm:monotonic-risk-squared-error} and \ref{thm:monotonic-risk-classification-error} are rather difficult to interpret, and they do not simplify cleanly without additional assumptions. The strongest assumption we make is that $\mSigma_P = \mPi_P$ is a projection operator. The advantage of this is that it only has eigenvalues $0$ and $1$, which means that any term involving $(\mSigma_P + b \mI_d)^{-1}$ can have $\tfrac{1}{1 + b}$ factored out, allowing all of the terms to simplify greatly. Because $\vbeta^*$ has i.i.d.\ sub-Gaussian elements, we can without loss of generality assume it to be Gaussian having the same second moment and thus rotationally invariant. Combining these with the simultaneous diagonizability of $\mSigma_Q$ and $\mSigma_P$ gives us the following simplifications:
\begin{gather}
    \Omega_P = r_P \sigma_\beta^2, \quad 
    \Gamma_P(b) = \frac{r_P \sigma_\beta^2}{1 + b}, \quad
    \Lambda_P(b) = \frac{r_P \sigma_\beta^2}{(1 + b)^2}, \quad
    \Theta_P(b) = \frac{r_P}{(1 + b)^2}, \\
    \Gamma_Q(b) = \lim_{d \to \infty}
    \frac{\vbeta_P^{*\transp} \mSigma_Q \vbeta_P^*}{d (1 + b)}, \quad
    \Lambda_Q(b) = \lim_{d \to \infty} \frac{\vbeta_P^{*\transp} \mSigma_Q \vbeta_P^*}{d (1 + b)^2}, \quad
    \Theta_Q(b) = \lim_{d \to \infty} 
    \frac{\tr \bracket{\mSigma_Q \mPi_P}}{d (1 + b)^2}.
\end{gather}
For $\gamma$, $\kappa$, and $\mu$ from Assumption~\ref{assump:asymp-simple}, we therefore have the following relations:
\begin{gather}
    \Gamma_Q(b) = \gamma \Gamma_P(b), \quad
    \Lambda_Q(b) = \gamma \Lambda_P(b), \quad
    \Theta_Q(b) = \kappa \Theta_P(b), \quad \Omega_Q = \gamma \mu \Omega_P, \\
    \frac{\Omega_Q \Theta_Q(b)}{\Gamma_Q(b)^2} = \frac{\mu \kappa \Omega_P \Theta_P(b)}{\gamma \Gamma_P(b)^2}, \quad
    \frac{\Omega_Q \Lambda_Q(b)}{\Gamma_Q(b)^2} = \frac{\mu \Omega_P \Lambda_P(b)}{ \Gamma_P(b)^2} = \mu = \frac{\mu \kappa}{\gamma} + \mu \paren{1 - \frac{\kappa}{\gamma}}.
\end{gather}
For regression, this means that in Theorem~\ref{thm:monotonic-risk-squared-error}, $\rho = \gamma = \kappa$, and $\Omega_Q - \rho \Omega_P = \gamma r_P \sigma_\beta^2 (\mu - 1)$. For classification, this means that in Theorem~\ref{thm:monotonic-risk-classification-error}, $\rho = \tfrac{\mu \kappa}{\gamma}$ and $u_0 = \mu (1 - \tfrac{\kappa}{\gamma})$. These values give the stated claims in Theorem~\ref{thm:monotonic-risks}, and when specializing to $\mu = 1$ for classification, the relation follows by the fact that $\tan^2(\theta) = \sec^2(\theta) - 1$.

\subsection{General Regularization Penalties}
\label{sec:extend-penalty}

The above approach can be used to analyze general separable regularization penalties as well via linearization if $\mSigma_P$ is axis-aligned (that is, diagonal).
Under Assumption~\ref{assump:asymp-cgmt}, the equations in Lemma~\ref{lem:asymp-cgmt-cov} simplify to the forms shown in \ref{proof-step:simplifying} of the proof of Theorem~\ref{thm:monotonic-risks}. Upon closer inspection, we observe that instead of three free variables $a, b, c$, we now only have two degrees of freedom via $\tfrac{a}{1 + b}$ and $\tfrac{c}{1 + b}$. Meanwhile, let
\begin{align}
    \widehat{\vbeta}(\setD, \ell, \lambda) = \argmin_\vbeta \sum_{i=1}^n \ell(y_i, \vx_i^\transp \vbeta) + \lambda \sum_{j=1}^d r([\vbeta]_j)^2
\end{align}
for a convex regularization penalty $r \colon \reals \to \reals$.
Corollary~\ref{cor:asymp-cgmt} can be extended to general regularization penalties (see \citealp{loureiro21}), and our resulting estimator has the following form for each $j \in [d]$ such that $[\mSigma_P]_{jj} = 1$:
\begin{align}
    [\widehat{\vbeta}]_j \simeq \prox{r(\cdot)/b}{\tfrac{a}{b} [\vbeta^*]_j + \tfrac{\sqrt{c}}{b} [\vg]_j}
\end{align}
for some three parameters $a \in \reals$, $b, c > 0$. Assuming $r(u)$ is an increasing function of $|u|$, this implies that the remaining coefficients $\widehat{\vbeta}$ will be 0. 

As in the proof of Lemma~\ref{lem:asymp-cgmt-cov}, we only need to determine the following inner products:
\begin{align}
    \tfrac{1}{d} \vbeta^{*\transp} \mPi_P \widehat{\vbeta}, \quad
    \tfrac{1}{d} \widehat{\vbeta}^\transp \mPi_P \widehat{\vbeta}, \quad
    \tfrac{1}{d} \vbeta^{*\transp} \mSigma_Q \widehat{\vbeta}, \quad
    \tfrac{1}{d} \widehat{\vbeta}^\transp \mSigma_Q \widehat{\vbeta}.
\end{align}
For any $a \in \reals$, $b, c > 0$, we can linearize $\widehat{\vbeta}$ in the form $a' \vbeta^* + \sqrt{c'} \vg$ with respect to $\vbeta^*$ and $\mPi_P$ in the sense that we can find $a' \in \reals, c' > 0$ such that
\begin{align}
    \tfrac{1}{d} \vbeta^{*\transp} \mPi_P \widehat{\vbeta} \asconv a' r_P \sigma_\beta^2, \quad
    \tfrac{1}{d} \widehat{\vbeta}^\transp \mPi_P \widehat{\vbeta}
    \asconv {a'}^2 r_P \sigma_\beta^2 + c' r_P,
\end{align}
which is the same as we have in the ridge regularization case. Therefore, Theorem~\ref{thm:monotonic-risks} will also apply for an arbitrary separable regularizer if and only if 
\begin{gather}
    \tfrac{1}{d} \vbeta^{*\transp} \mSigma_Q \widehat{\vbeta}
    \asconv a' \tfrac{1}{d} \vbeta_P^{*\transp} \mSigma_Q \vbeta_P^*, \\
    \tfrac{1}{d} \widehat{\vbeta}^\transp \mSigma_Q \widehat{\vbeta}
    \asconv {a'}^2 \vbeta_P^{*\transp} \mSigma_Q \vbeta_P^* + c' \tfrac{1}{d} \tr [\mSigma_Q \mPi_P].
\end{gather}
Due to the nonlinearity of the proximal operator, we would not expect these to hold in general, as our linearization only holds for $\widehat{\vbeta}$ measured with respect to $\mPi_P$. However, for example, if $\kappa = \gamma$, then the linearization holds and we can apply Theorem~\ref{thm:monotonic-risks}.

\section{Proof of Theorem~\ref{thm:l2_risk_relationship_in_denoising} and Theorem~\ref{thm:l2_risk_relationship_in_compressed_sensing} }

In this section, we prove the main results on linear relations for linear inverse problems.

\subsection{Proof of Theorem~\ref{thm:l2_risk_relationship_in_denoising}}
\label{sec:proof_of_l2_risk_relationship_in_denoising}

The proof is similar to that of Theorem~\ref{thm:sufficient_necessary_conditions_for_linear_shift_in_regression}. We first express the risk of the signal estimate $\widehat{\vx}_\lambda$ on distribution $P$ as a function of $\alpha = 1/(1 + \sigma_P^2 + \lambda)$. Note that the solution $\mW^*$ to $\min_{\mW} \EX[P]{\norm[2]{\vx - \mW \vy}^2} + \lambda \norm[F]{\mW}^2$ can be computed as
\begin{align}
    \mW^* 
    &= \EX[P]{\vx \vy^\T} \left( \EX[P]{\vy \vy^\T} + \lambda \mI \right)^{-1} \\
    &\stackrel{(a)}{=} \EX[P]{\vx \vx^\T} \left( \EX[P]{\vx \vx^\T + \vz \vz^\T} + \lambda \mI \right)^{-1} \\
    &\stackrel{(b)}{=} \mU_P \mU_P^\T \left( \mU_P \mU_P^\T + (\sigma_P^2 + \lambda) \mI \right)^{-1} \\
    &= \frac{1}{1 + \sigma_P^2 + \lambda} \mU_P \mU_P^\T,
\end{align}
where $(a)$ follows from that $\vz$ is independent of $\vx$ and that $\EX[]{\vz} = 0$, and $(b)$ follows from the assumptions that \editedinline{$\EX[P]{\vc_P \vc_P^\T} = \mI$} and that $\EX[P]{\vz\vz^\T} = \sigma_P^2 \mI$.
Hence, $\widehat{\vx}_\lambda(\vy) = \alpha \mU_P \mU_P^\T \vy$. It holds that
\begin{edited}
\begin{align}
    \risk_P(\widehat{\vx}_\lambda) 
    &= \EX[P]{\norm[2]{(\mU_P \vc_P - \widehat{\vx}_\lambda) / \sqrt{d_P}}^2} \\
    &= \EX[P]{\norm[2]{\left( \mI -  \alpha \mU_P \mU_P^\T \right) \mU_P \vc_P / \sqrt{d_P}}^2} +
    \EX[P]{\norm[2]{\alpha \mU_P \mU_P^\T \vz / \sqrt{d_P}}^2} \\
    &= \EX[P]{\norm[2]{\left( 1 -  \alpha \right) \mU_P \vc_P}^2 / d_P} + 
    \tr\left( \alpha^2 \mU_P \mU_P^\T \EX[P]{\vz\vz^\T} / d_P \right) \\
    &= \tr\left[ \left( 1 -  \alpha \right)^2 \mU_P^\T \mU_P\, \EX[P]{\vc_P \vc_P^\T} / d_P \right] + \alpha^2 \sigma_P^2 \\
    &= (1 - \alpha)^2 + \alpha^2 \sigma_P^2,
\end{align}
\end{edited}
where we have used the assumptions that \editedinline{$\EX[P]{\vc_P \vc_P^\T} = \mI$} and that $\EX[P]{\vz\vz^\T} = \sigma_P^2 \mI$ again.
Similarly, on distribution $Q$,
\begin{edited}
\begin{align}
    \risk_Q(\widehat{\vx}_\lambda) 
    &= \EX[Q]{\norm[2]{(\mU_P\vc_Q - \widehat{\vx}_\lambda) / \sqrt{d_Q}}^2} \\
    &= \EX[Q]{\norm[2]{\left( \mI -  \alpha \mU_P \mU_P^\T \right) \mU_Q \vc_Q / \sqrt{d_Q}}^2} +
    \EX[Q]{\norm[2]{\alpha \mU_P \mU_P^\T \vz / \sqrt{d_Q}}^2},
\end{align}
\end{edited}
where the second term in the line above can be readily found to be $\alpha^2 \sigma_Q^2 d_P/d_Q$, and the first term can be computed as
\begin{edited}
\begin{align}
    \EX[Q]{\norm[2]{\left( \mI -  \alpha \mU_P \mU_P^\T \right) \mU_Q \vc_Q / \sqrt{d_Q}}^2}
    &= \tr\left[ \mU_Q^\T \left( \mI + \left( \alpha^2 - 2\alpha \right)\, \mU_P \mU_P^\T \right) \mU_Q \EX[Q]{\vc_Q \vc_Q^\T} / d_Q \right] \\
    &= \tr\left[ \left( \mI + \left( \alpha^2 - 2\alpha \right)\, \mU_P \mU_P^\T \right) \mU_Q \mU_Q^\T / d_Q \right] \\
    &= \tr\left[ \mU_Q \mU_Q^\T / d_Q \right] + \left( \alpha^2 - 2\alpha \right)\, \tr\left[ \mU^\T \mU_Q \left( \mU_P^\T \mU_Q \right)^\T / d_Q \right] \\
    &\stackrel{(c)}{=} 1 + \left( \alpha^2 - 2\alpha \right)\, \frac{1}{d_Q} \sum_{i=1}^{\min\{d_P, d_Q\}} cos^2(\theta_i) \\
    &= 1 + (\alpha^2 - 2\alpha) \frac{\norm[2]{cos(\vtheta)}^2}{d_Q},
\end{align}
\end{edited}
where $(c)$ follows from the fact that the singular values of $\mU_P^\T \mU_Q$ are the cosines of the principal angle $\theta_i, i\in[\min\{d_P, d_Q\}]$ between $\mU_P$ and $\mU_Q$. Hence, 
\begin{align}
    \risk_Q(\widehat{\vx}_\lambda) = 1 + (\alpha^2 - 2\alpha) \frac{\norm[2]{cos(\vtheta)}^2}{d_Q} + 
    \alpha^2 \sigma_Q^2 \frac{d_P}{d_Q}.
\end{align}
The expression of $\risk_P(\widehat{\vx}_\lambda)$ implies that
\begin{align}
    \alpha^2 - 2\alpha = \risk_P(\widehat{\vx}_\lambda) - 1 -
    \alpha^2 \sigma_P^2.
\end{align}
Plugging this expression into the expression of $\risk_Q(\widehat{\vx}_\lambda)$ yields the result. \hfill$\blacksquare$

\subsection{Proof of Theorem~\ref{thm:l2_risk_relationship_in_compressed_sensing}}
\label{sec:proof_of_l2_risk_relationship_in_compressed_sensing}

We first provide two lemmas which are used in the main proof. In the first lemma, the risks $\risk_P(\widehat{\vx}_\lambda)$ and $\risk_Q(\widehat{\vx}_\lambda)$ are expressed in terms of matrices $\mU_P^\T \mU_P$ and $\mU_P^\T \mU_Q$, and their approximations $\mU_P^\T \mA^\T \mA \mU_P$ and $\mU_P^\T \mA^\T \mA \mU_Q$ induced by the random measurement matrix $\mA$.
\begin{lemma}
The risks $\risk_P(\widehat{\vx}_\lambda)$ and $\risk_Q(\widehat{\vx}_\lambda)$ of $\widehat{\vx}_\lambda$ can be expressed as
\begin{align}
    \risk_P(\widehat{\vx}_\lambda)
    &= \left( \norm[F]{\mI - \mS \mU_P^\T \mA^\T \mA \mU_P}^2 + \sigma_P^2 \tr\left( \mS^\T \mS \mU_P^\T \mA^\T \mA \mU_P \right) \right) / d_P, \\
    \risk_Q(\widehat{\vx}_\lambda)
    &= \left( \norm[F]{\mU_P^\T \mU_Q - \mS \mU_P^\T \mA^\T \mA \mU_Q}^2 - \norm[F]{\mU_P^\T \mU_Q }^2 + \norm[F]{\mU_Q}^2 + \sigma_Q^2 \tr\left( \mS^\T \mS \mU_P^\T \mA^\T \mA \mU_P \right) \right) / d_Q, 
\end{align}
where $\mS = \eta \mI - \eta^2 \mU_P^\T \mA^\T \mA \mU_P \left( \mI + \eta \mU_P^\T \mA^\T \mA \mU_P \right)^{-1}$ and $\eta = 1/ (\sigma_P^2 + \lambda)$.
\label{lem:l2_risk_expressions_in_compressed_sensing}
\end{lemma}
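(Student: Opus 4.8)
The plan is to follow the template of the proof of Theorem~\ref{thm:sufficient_necessary_conditions_for_linear_shift_in_regression}: compute the optimal linear map $\mW^*$ in closed form, decompose each risk into independent signal and noise contributions, and then rewrite the signal contributions as the stated Frobenius norms. First I would compute $\mW^*$. Since $\EX[P]{\norm[2]{\vx - \mW\vy}^2} + \lambda\norm[F]{\mW}^2$ is a ridge-regularized least-squares problem in $\mW$, its minimizer is $\mW^* = \EX[P]{\vx\vy^\T}\bigl(\EX[P]{\vy\vy^\T} + \lambda\mI\bigr)^{-1}$. Substituting $\vy = \mA\vx + \vz$ and using that $\vz$ is zero-mean and independent of $\vx$, together with $\EX[P]{\vx\vx^\T} = \mU_P\mU_P^\T$ and $\EX[P]{\vz\vz^\T} = \sigma_P^2\mI$, gives
\[
    \mW^* = \mU_P\mU_P^\T\mA^\T\bigl(\mA\mU_P\mU_P^\T\mA^\T + (\sigma_P^2 + \lambda)\mI\bigr)^{-1}.
\]

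Next I would relate this to $\mS$. Writing $\mM = \mU_P^\T\mA^\T\mA\mU_P$ and $\eta = 1/(\sigma_P^2 + \lambda)$, a one-line simplification collapses the definition of $\mS$ to $\mS = \eta(\mI + \eta\mM)^{-1}$. Applying the push-through identity $\mN\bigl(\tfrac{1}{\eta}\mI + \mN^\T\mN\bigr)^{-1} = \bigl(\tfrac{1}{\eta}\mI + \mN\mN^\T\bigr)^{-1}\mN$ with $\mN = \mU_P^\T\mA^\T$ (so that $\mN\mN^\T = \mM$) then yields the compact form $\mW^* = \mU_P\mS\mU_P^\T\mA^\T$. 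This structural identity is the crux of the argument: it forces every relevant matrix to factor through $\mU_P$, after which orthonormality $\mU_P^\T\mU_P = \mI$ does the rest.

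I would then decompose the risks. Since $\widehat{\vx}_\lambda = \mW^*(\mA\vx + \vz)$, the reconstruction error is $(\mI - \mW^*\mA)\vx - \mW^*\vz$; because the signal coefficients and the noise are zero-mean and independent, each expected squared error splits into a signal term $\EX[]{\norm[2]{(\mI - \mW^*\mA)\vx}^2}$ and a noise term $\sigma_P^2\norm[F]{\mW^*}^2$ (resp.\ $\sigma_Q^2\norm[F]{\mW^*}^2$ under $Q$). For the noise term, $\mW^* = \mU_P\mS\mU_P^\T\mA^\T$ and $\mU_P^\T\mU_P = \mI$ give $\norm[F]{\mW^*}^2 = \tr(\mS^\T\mS\mM)$, matching the stated noise terms for both $P$ and $Q$. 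For the in-distribution signal term, $(\mI - \mW^*\mA)\mU_P = \mU_P(\mI - \mS\mM)$, so orthonormality immediately produces $\norm[F]{\mI - \mS\mM}^2 = \norm[F]{\mI - \mS\mU_P^\T\mA^\T\mA\mU_P}^2$.

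The last and most delicate step is the out-of-distribution signal term $\norm[F]{(\mI - \mW^*\mA)\mU_Q}^2 = \norm[F]{\mU_Q - \mU_P\mS\mU_P^\T\mA^\T\mA\mU_Q}^2$, where $\mU_Q$ no longer lies in the range of $\mU_P$, so the norm does not collapse. Abbreviating $\mK = \mS\mU_P^\T\mA^\T\mA\mU_Q$, I would expand the square using $\mU_P^\T\mU_P = \mI$ to get $\norm[F]{\mU_Q}^2 - 2\tr\bigl((\mU_P^\T\mU_Q)^\T\mK\bigr) + \norm[F]{\mK}^2$, and separately expand $\norm[F]{\mU_P^\T\mU_Q - \mK}^2 = \norm[F]{\mU_P^\T\mU_Q}^2 - 2\tr\bigl((\mU_P^\T\mU_Q)^\T\mK\bigr) + \norm[F]{\mK}^2$. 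The cross and quadratic terms coincide, so the $Q$ signal term equals $\norm[F]{\mU_P^\T\mU_Q - \mK}^2 - \norm[F]{\mU_P^\T\mU_Q}^2 + \norm[F]{\mU_Q}^2$, exactly the claimed expression. I expect this rewriting to be the main obstacle: the natural ambient-space form of the $Q$ term involves $\mU_Q$ directly, and one must add and subtract $\norm[F]{\mU_P^\T\mU_Q}^2$ to recast it in terms of the projected quantity $\mU_P^\T\mU_Q$---which is precisely the form whose entries the subsequent concentration arguments in Theorem~\ref{thm:l2_risk_relationship_in_compressed_sensing} can control.
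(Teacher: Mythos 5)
Your proposal is correct and takes essentially the same route as the paper's proof: a closed-form $\mW^*$, the factorization $\mW^* = \mU_P \mS \mU_P^\T \mA^\T$, a signal/noise split of each risk, and Frobenius-norm algebra exploiting $\mU_P^\T \mU_P = \mI$. The only differences are cosmetic: the paper obtains the factorization via the matrix inversion lemma where you use the push-through identity (together with the simplification $\mS = \eta(\mI + \eta\, \mU_P^\T\mA^\T\mA\mU_P)^{-1}$), and the paper handles the $Q$ signal term by an orthogonal decomposition plus the Pythagorean theorem rather than your direct term-by-term expansion --- identical algebra either way.
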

\begin{proof}
Similarly to the proof of Theorem~\ref{thm:l2_risk_relationship_in_denoising}, the solution $\mW^*$ to $\min_{\mW} \EX[P]{\norm[2]{\vx - \mW \vy}^2} + \lambda \norm[F]{\mW}^2$ can be computed as
\begin{align}
    \mW^* 
    &= \EX[P]{\vx \vy^\T} \left( \EX[P]{\vy \vy^\T} + \lambda \mI \right)^{-1} \\
    &= \EX[P]{\vx \vx^\T \mA^\T} \left( \EX[P]{\mA \vx \vx^\T \mA^\T + \vz \vz^\T} + \lambda \mI \right)^{-1} \\
    &= \mU_P \mU_P^\T \mA^\T \left( \mA \mU_P \mU_P^\T \mA^\T + (\sigma_P^2 + \lambda) \mI \right)^{-1} \\
    &\stackrel{(a)}{=} \mU_P \mU_P^\T \mA^\T \left( \eta \mI - \eta^2 \mA \mU_P \left( \mI + \eta \mU_P^\T \mA^\T \mA \mU_P \right)^{-1} \mU_P^\T \mA^\T \right) \\
    &= \mU_P \mS \mU_P^\T \mA^\T,
\end{align}
where $(a)$ follows from the matrix inversion lemma and $\eta = 1 / (\sigma_P^2 + \lambda)$. The risk $\risk_P(\widehat{\vx}_\lambda)$ can be computed as
\begin{edited}
\begin{align}
    \risk_P(\widehat{\vx}_\lambda) 
    &= \EX[P]{\norm[2]{(\mU_P \vc_P - \mW^* (\mA \mU_P \vc_P + \vz)) / \sqrt{d_P}}^2} \\
    &= \EX[P]{\norm[2]{\left( \mI - \mW^* \mA \right) \mU_P \vc_P / \sqrt{d_P}}^2} +
    \EX[P]{\norm[2]{\mW^* \vz / \sqrt{d_P}}^2} \\
    &= \Big( \tr\left( \mU_P^\T (\mI - \mW^* \mA)^\T (\mI - \mW^* \mA) \mU_P \EX[P]{\vc_P \vc_P^\T} \right) + \tr\left( \mW^{*\T} \mW \EX[P]{\vz \vz^\T} \right) \Big) / d_P\\
    &= \left( \norm[F]{(\mI - \mW^* \mA) \mU_P}^2 + \sigma_P^2 \tr\left( \mW^{*\T} \mW^* \right) \right) / d_P \\
    &= \left( \norm[F]{\mU_P (\mI - \mS \mU_P^\T \mA^\T \mA \mU_P)}^2 + \sigma_P^2 \tr\left( \mS^\T \mS \mU_P^\T \mA^\T \mA \mU_P \right) \right) / d_P \\
    &= \left( \norm[F]{\mI - \mS \mU_P^\T \mA^\T \mA \mU_P}^2 + \sigma_P^2 \tr\left( \mS^\T \mS \mU_P^\T \mA^\T \mA \mU_P \right) \right) / d_P. \\
\end{align}
\end{edited}
Similarly, the risk $\risk_Q(\widehat{\vx}_\lambda)$ can be computed as
\begin{edited}
\begin{align}
    \risk_Q(\widehat{\vx}_\lambda) 
    &= \EX[Q]{\norm[2]{(\mU_Q \vc_Q - \mW^* (\mA \mU_Q \vc_Q + \vz)) / \sqrt{d_Q}}^2} \\
    &= \EX[Q]{\norm[2]{\left( \mI - \mW^* \mA \right) \mU_Q \vc_Q / \sqrt{d_Q}}^2} +
    \EX[Q]{\norm[2]{\mW^* \vz / \sqrt{d_Q}}^2}, \\
\end{align}
\end{edited}
where the second term in the line above can be readily found to be $\sigma_Q^2 \tr\left( \mS^\T \mS \mU_P^\T \mA^\T \mA \mU_P \right) / d_Q$, and the first term can be computed as
\begin{edited}
\begin{align}
    &\EX[Q]{\norm[2]{\left( \mI - \mW^* \mA \right) \mU_Q \vc_Q / \sqrt{d_Q}}^2} \\
    &= \tr\left( \mU_Q^\T (\mI - \mW^* \mA)^\T (\mI - \mW^* \mA) \mU_Q \EX[Q]{\vc_Q \vc_Q^\T} \right) / d_Q\\
    &= \norm[F]{(\mI - \mW^* \mA) \mU_Q}^2 / d_Q \\
    &= \norm[F]{\mU_Q - \mU_P \mS \mU_P^\T \mA^\T \mA \mU_Q}^2 / d_Q \\
    &= \norm[F]{(\mU_P \mU_P^\T \mU_Q - \mU_P \mS \mU_P^\T \mA^\T \mA \mU_Q) + (\mI - \mU_P \mU_P^\T)\mU_Q}^2 / d_Q \\
    &\stackrel{(b)}{=} \left( \norm[F]{\mU_P \mU_P^\T \mU_Q - \mU_P \mS \mU_P^\T \mA^\T \mA \mU_Q}^2 + \norm[F]{(\mI - \mU_P \mU_P^\T)\mU_Q}^2 \right) / d_Q \\
    &= \left( \norm[F]{\mU_P^\T \mU_Q - \mS \mU_P^\T \mA^\T \mA \mU_Q}^2 + \norm[F]{\mU_Q - \mU_P \mU_P^\T \mU_Q}^2 \right) / d_Q \\
    &= \left( \norm[F]{\mU_P^\T \mU_Q - \mS \mU_P^\T \mA^\T \mA \mU_Q}^2 - \norm[F]{\mU_P^\T \mU_Q }^2 + \norm[F]{\mU_Q}^2 \right) / d_Q, \\
\end{align}
\end{edited}
where $(b)$ follows from the fact that matrices $\mU_P$ and $\mI - \mU_P \mU_P^\T$ are orthogonal under the Frobenius inner product.
\end{proof}

The second lemma below about inner product preservation is used to show that matrices $\mU_P^\T \mA^\T \mA \mU_P$ and $\mU_P^\T \mA^\T \mA \mU_Q$ are close to $\mU_P^\T \mU_P$ and $\mU_P^\T \mU_Q$ element-wise respectively.
\begin{lemma}
Let $\mA \in \reals^{n \times d}$ be a random Gaussian matrix with independent entries drawn from the distribution $\mathcal{N}(0, 1/n)$. For any $\vu, \vv \in \reals^d$ with $\norm[2]{\vu} \le 1, \norm[2]{\vv} \le 1$ and $0 < \epsilon < 1$, with probability at least $1 - 4\exp(-n \epsilon^2/8)$,
\begin{align}
    \left| \left\langle \mA \vu, \mA \vv \right\rangle - \left\langle \vu, \vv \right\rangle \right| \le \epsilon.
\end{align}
\label{lem:inner_product_preservation}
\end{lemma}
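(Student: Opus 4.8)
The plan is to reduce this inner-product preservation statement to the standard Johnson--Lindenstrauss norm-concentration bound via the polarization identity. First I would establish a single-vector norm-preservation estimate: for any fixed $\vw \in \reals^d$ with $\vw \neq \vzero$ and $0 < \epsilon < 1$,
\begin{align}
    \Pr\paren{\left| \norm[2]{\mA\vw}^2 - \norm[2]{\vw}^2 \right| > \epsilon \norm[2]{\vw}^2} \le 2\exp(-n\epsilon^2/8).
\end{align}
This follows because the entries $(\mA\vw)_j \sim \mathcal{N}(0, \norm[2]{\vw}^2/n)$ are independent across $j \in [n]$, so $n\norm[2]{\mA\vw}^2/\norm[2]{\vw}^2$ is a chi-squared random variable with $n$ degrees of freedom. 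A Chernoff bound applied to its moment generating function $(1 - 2s)^{-n/2}$, together with the elementary estimates controlling the upper and lower tails, yields the stated exponent for $0 < \epsilon < 1$; this is a classical computation.

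Next I would apply the polarization identity to both the transformed and the untransformed inner products, giving the deterministic identity
\begin{align}
    \left\langle \mA \vu, \mA \vv \right\rangle - \left\langle \vu, \vv \right\rangle
    = \tfrac14 \bracket{\paren{\norm[2]{\mA(\vu+\vv)}^2 - \norm[2]{\vu+\vv}^2} - \paren{\norm[2]{\mA(\vu-\vv)}^2 - \norm[2]{\vu-\vv}^2}}.
\end{align}
Applying the single-vector estimate to each of $\vu + \vv$ and $\vu - \vv$ and taking a union bound, with probability at least $1 - 4\exp(-n\epsilon^2/8)$ both deviations are simultaneously bounded by $\epsilon\norm[2]{\vu+\vv}^2$ and $\epsilon\norm[2]{\vu-\vv}^2$, respectively. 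On this event, the triangle inequality together with the parallelogram law and the hypotheses $\norm[2]{\vu}, \norm[2]{\vv} \le 1$ gives
\begin{align}
    \left| \left\langle \mA \vu, \mA \vv \right\rangle - \left\langle \vu, \vv \right\rangle \right|
    &\le \tfrac{\epsilon}{4}\paren{\norm[2]{\vu+\vv}^2 + \norm[2]{\vu-\vv}^2} \\
    &= \tfrac{\epsilon}{2}\paren{\norm[2]{\vu}^2 + \norm[2]{\vv}^2} \le \epsilon,
\end{align}
which is exactly the claimed bound. The degenerate cases $\vu = \pm\vv$, where one of $\vu \pm \vv$ vanishes, are handled trivially since the corresponding deviation is identically zero and the estimate holds vacuously.

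The only genuinely probabilistic ingredient is the single-vector chi-squared concentration; everything else is deterministic manipulation. Consequently, the main obstacle is not conceptual but one of bookkeeping: one must track constants carefully so that the per-vector failure probability is exactly $2\exp(-n\epsilon^2/8)$ and so that the polarization identity combines to yield precisely the multiplier $\epsilon$ rather than some larger constant multiple. Verifying the chi-squared tail exponent $1/8$ over the full range $0 < \epsilon < 1$ (as opposed to only for small $\epsilon$) is where I would spend the most care, since this fixes the constant in the final probability statement.
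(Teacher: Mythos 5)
Your proposal is correct and follows essentially the same route as the paper's proof: a chi-squared (sub-exponential) tail bound for single-vector norm preservation, followed by the polarization identity applied to $\vu+\vv$ and $\vu-\vv$, a union bound giving the $4\exp(-n\epsilon^2/8)$ failure probability, and the parallelogram law with $\norm[2]{\vu}, \norm[2]{\vv} \le 1$ to obtain the final multiplier $\epsilon$. Your phrasing of the argument as a single deterministic identity for the difference plus a triangle inequality (and your explicit handling of the degenerate cases $\vu = \pm\vv$) is a slightly cleaner bookkeeping of the same estimate the paper carries out as separate upper and lower bounds.
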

\begin{proof}
The proof relies on the result on norm preservation by random projection: for any $0 < t < 1$, it holds that
\begin{align}
    \Pr\left( \left| \frac{\norm[2]{\mA \vu}^2}{\norm[2]{\vu}^2} - 1 \right| \ge t \right) \le 2 e^{-\frac{n t^2}{8}},
\end{align}
which follows from the fact that, for any $\vu \in \reals^d$, the variable $\norm[2]{\mA \vu}^2 / \norm[2]{\vu}^2$ follows the same distribution as $(1/n) \chi^2(n)$ and that a $(1/n) \chi^2(n)$ distribution is sub-exponential with parameters $(2^2/n, 4)$.

Now consider any $\vu, \vv$ with $\norm[2]{\vu} \le 1, \norm[2]{\vv} \le 1$ and $0 < \epsilon < 1$. Using the fact that $\langle \vu, \vv \rangle = (1/4) \left( \norm[2]{\vu + \vv}^2 - \norm[2]{\vu - \vv}^2 \right)$, under the events that the norm squares $\norm[2]{\vu + \vv}^2$ and $\norm[2]{\vu - \vv}^2$ are approximately preserved, which occur with probability at least $1 - 4\exp(-n\epsilon^2/8)$, it holds that
\begin{align}
    \left\langle \mA \vu, \mA \vv \right\rangle
    &= \frac{1}{4} \left( \norm[2]{\vu + \vv}^2 - \norm[2]{\vu - \vv}^2 \right) \\
    &\le \frac{1}{4} \left( (1 + \epsilon) \norm[2]{\vu + \vv}^2 - (1 - \epsilon) \norm[2]{\vu - \vv}^2 \right) \\
    &= \frac{1}{4} \left( 4\langle \vu, \vv \rangle + 2\epsilon \norm[2]{\vu}^2 + 2\epsilon \norm[2]{\vv}^2 \right) \\
    &\le \langle \vu, \vv \rangle + \epsilon,
\end{align}
and that
\begin{align}
    \left\langle \mA \vu, \mA \vv \right\rangle
    \ge \langle \vu, \vv \rangle - \epsilon,
\end{align}
following a similar derivation.
\end{proof}

\subsubsection{Proof of Theorem~\ref{thm:l2_risk_relationship_in_compressed_sensing}}
The proof idea is essentially the same as the proof of Theorem~\ref{thm:l2_risk_relationship_in_denoising}: expressing the risks of the estimate $\widehat{\vx}_\lambda$ on distributions $P$ and $Q$ as functions of $\alpha = 1/(1 + \sigma_P^2 + \lambda)$ and then expressing the risk $\risk_Q(\widehat{\vx}_\lambda)$ in terms of $\risk_P(\widehat{\vx}_\lambda)$. The only technical issue is that $\mW^* \vy$ is only approximately $\alpha \mU_P \mU_P^\T \vy$ due to the random measurement by matrix $\mA$. We show that, under the event that the map $\vu \mapsto \mA \vu$ approximately preserves inner products of interest, as defined below, the risks $\risk_P(\widehat{\vx}_\lambda)$ and $\risk_Q(\widehat{\vx}_\lambda)$ can be expressed respectively as those in the proof of Theorem~\ref{thm:l2_risk_relationship_in_denoising} plus some error terms which converge to zero as the number of measurements $n \to \infty$ with high probability.

\textit{Step 1. Expressing matrices $\mU_P^\T \mA^\T \mA \mU_P$, $\mU_P^\T \mA^\T \mA \mU_Q$ and $\mS$ as perturbed matrices.} \\
For any $0 < \epsilon < 1/d_P$, consider the event
\begin{align}
    \mathcal{E}:\: \left| \left\langle \mA \vu, \mA \vv \right\rangle - \left\langle \vu, \vv \right\rangle \right| \le \epsilon,\quad &\text{for all pairs of columns $(\vu, \vv)$ of $\mU_P$ and} \\
    &\text{for all column $\vu$ of $\mU_P$ and column $\vv$ of $\mU_Q$},
\end{align}
which happens with probability at least $1 - 4(d_P^2 + d_P d_Q) \exp(-n\epsilon^2/8)$ by Lemma~\ref{lem:inner_product_preservation} and the union bound. Under event $\mathcal{E}$, matrices $\mU_P^\T \mA^\T \mA \mU_P$ and $\mU_P^\T \mA^\T \mA \mU_Q$ are perturbed versions of $\mI$ and $\mU_P^\T \mU_Q$, i.e.,  
\begin{align}
    \mU_P^\T \mA^\T \mA \mU_P
    &= \mI + \mE,\quad |\mE_{ij}| \le \epsilon,\quad \forall\, i \in [d_P],\quad \forall\, j \in [d_P], \\
    \mU_P^\T \mA^\T \mA \mU_Q
    &= \mU_P^\T \mU_Q + \mF,\quad |\mF_{ij}| \le \epsilon,\quad \forall\, i \in [d_P],\quad \forall\, j \in [d_Q], 
\end{align}
and, as a result, $\mS$ is a pertubed version of $\alpha \mI$, i.e.,
\begin{align}
    \mS = \alpha \mI + \mG,
\end{align}
where matrix $\mG$ is a polynomial of matrix $\mE$ as defined below, since
\begin{align}
    \mS 
    &= \eta \mI - \eta^2 (\mI + \mE) \left( \mI + \eta (\mI + \mE) \right)^{-1} \\
    &= \eta \mI - \frac{\eta^2}{1 + \eta} (\mI + \mE) \left( \mI + \frac{\eta}{1 + \eta} \mE \right)^{-1} \\
    &\stackrel{(a)}{=} \eta \mI - \frac{\eta^2}{1 + \eta} (\mI + \mE) \sum_{k=0}^\infty \left( -\frac{\eta}{1 + \eta} \mE \right)^k \\
    &= \eta \mI - \frac{\eta^2}{1 + \eta} \left(\mI + \mE + (\mI + \mE) \sum_{k=1}^\infty\left( -\frac{\eta}{1 + \eta} \mE \right)^k \right) \\
    & = \left( \eta - \frac{\eta^2}{1 + \eta} \right) \mI - \frac{\eta^2}{1 + \eta} \left(\mE + (\mI + \mE) \sum_{k=1}^\infty\left( -\frac{\eta}{1 + \eta} \mE \right)^k \right) \\
    &= \alpha \mI - \frac{\alpha^2}{1 - \alpha} \left(\mE + (\mI + \mE) \sum_{k=1}^\infty (-\alpha \mE)^k \right),
\end{align}
with $\mG = - (\alpha^2/(1 - \alpha)) \left(\mE + (\mI + \mE) \sum_{k=1}^\infty (-\alpha \mE)^k \right)$ and $\alpha = 1 / (1 + \sigma_P^2 + \lambda)$. Recall that $\eta = 1 / (\sigma_P^2 + \lambda)$. Step $(a)$ is valid because $(\eta / (1 + \eta)) \mE$ has eigenvalues bounded by $1$, since for $\epsilon < 1/d_P$, $\max_i |\lambda_i((\eta / (1 + \eta)) \mE)| \le |\lambda_i(\mE)| \le ( \sum_i \lambda_i^2(\mE) )^{1/2} = \norm[F]{\mE} \le \epsilon d_P \le 1$.

\textit{Step 2. Expressing risks $\risk_P(\widehat{\vx}_\lambda)$ and $\risk_Q(\widehat{\vx}_\lambda)$ as functions of $\alpha$ and the perturbation matrices.} \\
Under event $\mathcal{E}$, it holds that
\begin{align}
    \risk_P(\widehat{\vx}_\lambda)
    &= \norm[F]{\mI - (\alpha \mI + \mG) (\mI + \mE)}^2 / d_P + \tr\left( (\alpha \mI + \mG)^2 (\mI + \mE) \right) \sigma_P^2 / d_P \\
    &= \norm[F]{(1 - \alpha) \mI - ((\alpha \mI + \mG) \mE + \mG)}^2 / d_P + \tr\left( \alpha^2 \mI + \alpha^2 \mE + (2\alpha \mG + \mG^2) (\mI + \mE) \right) \sigma_P^2 / d_P \\
    &= (1 - \alpha)^2 + \epsilon_{\text{signal, P}} + \alpha^2 \sigma_P^2 + \epsilon_{\text{noise, P}}, 
\end{align}
where $\epsilon_{\text{signal, P}}$ and $\epsilon_{\text{noise, P}}$ are errors induced by the random measurement and are expressed as
\begin{align}
    \epsilon_{\text{signal, P}}
    &= \tr\big( [(\alpha \mI + \mG) \mE + \mG - 2(1 - \alpha) \mI][(\alpha \mI + \mG) \mE + \mG] \big) / d_P, \\
    \epsilon_{\text{noise, P}}
    &= \tr\left( \alpha^2 \mE + (2\alpha \mG + \mG^2) (\mI + \mE) \right) \sigma_P^2 / d_P, 
\end{align}
and that
\begin{align}
    \risk_Q(\widehat{\vx}_\lambda)
    &= \left( \norm[F]{\mU_P^\T \mU_Q - (\alpha \mI + \mG) (\mU_P^\T \mU_Q + \mF)}^2 - \norm[F]{\mU_P^\T \mU_Q }^2 + \norm[F]{\mU_Q}^2 \right) / d_Q \\
    & \quad\quad\quad\quad\quad\quad\quad\quad\quad\quad\quad\quad\quad\quad\quad\quad\quad\quad\quad\quad + \tr\left( (\alpha \mI + \mG)^2 (\mI + \mE) \right) \sigma_Q^2 / d_Q \\
    &= \left( \norm[F]{(1 - \alpha) \mU_P^\T \mU_Q - ((\alpha \mI + \mG) \mF + \mG \mU_P^\T \mU_Q)}^2 - \norm[F]{\mU_P^\T \mU_Q }^2 + \norm[F]{\mU_Q}^2 \right) / d_Q \\ 
    & \quad\quad\quad\quad\quad\quad\quad\quad\quad\quad\quad\quad\quad\quad + \tr\left( \alpha^2 \mI + \alpha^2 \mE + (2\alpha \mG + \mG^2) (\mI + \mE) \right) \sigma_Q^2 / d_Q \\
    &= (\alpha^2 - 2\alpha) \frac{\norm[F]{\mU_P^\T \mU_Q }^2}{d_Q}  + 1 + \epsilon_{\text{signal, Q}} + \alpha^2 \sigma_Q^2 \frac{d_P}{d_Q} + \epsilon_{\text{noise, Q}}, 
\end{align}
where $\epsilon_{\text{signal, Q}}$ and $\epsilon_{\text{noise, Q}}$ are also errors induced by the random measurement and are expressed as
\begin{align}
    \epsilon_{\text{signal, Q}}
    &= \tr\big( [(\alpha \mI + \mG) \mF + \mG \mU_P^\T \mU_Q - 2(1 - \alpha) \mU_P^\T \mU_Q]^\T [(\alpha \mI + \mG) \mF + \mG \mU_P^\T \mU_Q] \big) / d_Q, \\
    \epsilon_{\text{noise, Q}}
    &= \frac{\sigma_Q^2}{\sigma_P^2} \frac{d_P}{d_Q} \epsilon_{\text{noise, P}}.
\end{align}
The expression of $\risk_P(\widehat{\vx}_\lambda)$ implies that
\begin{align}
    \alpha^2 - 2\alpha = \risk_P(\widehat{\vx}_\lambda) - 1 - \alpha^2 \sigma_P^2 - \epsilon_{\text{signal, P}} - \epsilon_{\text{noise, P}}.
\end{align}
Plugging this expression into the expression of $\risk_Q(\widehat{\vx}_\lambda)$ yields
\begin{align}
    \risk_Q(\widehat{\vx}_\lambda) = a \risk_P(\widehat{\vx}_\lambda) + (1 - a) + \alpha^2 \left( \frac{d_P}{d_Q} \sigma_Q^2 - a \sigma_P^2 \right) + \epsilon_{\text{signal, Q}} + \epsilon_{\text{noise, Q}} - a(\epsilon_{\text{signal, P}} + \epsilon_{\text{noise, P}}),
\end{align}
where $a = \norm[F]{\mU_P^\T \mU_Q }^2 / d_Q = \norm[2]{cos(\vtheta)}^2 / d_Q$ and $\vtheta \in \reals^{\min\{d_P, d_Q\}}$ is the principal angles between $\mU_P$ and $\mU_Q$.

\textit{Step 3. Bounding the errors caused by the perturbation matrices.} \\
It remains to show that, under event $\mathcal{E}$, the error term $\epsilon_{\text{signal, Q}} + \epsilon_{\text{noise, Q}} - a(\epsilon_{\text{signal, P}} + \epsilon_{\text{noise, P}})$ is bounded by some constant times $\epsilon$. We show that each error in the error term is $O(\epsilon)$.

With some computation, it is easy to check that each of the errors $\epsilon_{\text{signal, P}}$, $\epsilon_{\text{noise, P}}$ and $\epsilon_{\text{signal, Q}}$ is the trace of a polynomial of the perturbation matrices, i.e., there exist polynomials $p_1, \ldots, p_5$ such that
\begin{align}
    \epsilon_{\text{signal, P}} &= \tr(p_1(\mE)), \\
    \epsilon_{\text{noise, P}} &= \tr(p_2(\mE)), \\
    \epsilon_{\text{signal, Q}} &= \tr(p_3(\mE) \mF \mF^\T) + \tr(p_4(\mE) \mF \mU_Q^\T \mU_P) + \tr(p_5(\mE) \mU_P^\T \mU_Q \mU_Q^\T \mU_P),
\end{align}
and that $p_1$, $p_2$ and $p_5$ have zero-th order terms zeros.
Recall that matrices $\mE$ and $\mF$ have entries bounded by $\epsilon$. Therefore, it holds that $\norm[F]{\mE} \le \epsilon d_P$ and $\norm[F]{\mF} \le \epsilon \sqrt{d_P d_Q}$, and that
\begin{align}
    |\tr(\mE^k)| 
    &= \big| \sum_i \sigma_i^k(\mE) \big| 
    \stackrel{(b)}{\le} \big| \sum_i \sigma_i(\mE) \big| 
    \le \epsilon d_P,\quad \forall k \ge 1,
\end{align}
where $(b)$ follows from the fact that $\max_i |\sigma_i(\mE)| \le \norm[F]{\mE} \le \epsilon d_P$ for $\epsilon < 1/d_P$. As a result, $\epsilon_{\text{signal, P}}$ and $\epsilon_{\text{noise, P}}$ are $O(\epsilon)$. So is $\epsilon_{\text{signal, Q}}$, because each of its term is $O(\epsilon)$. Indeed, for any $k \ge 0$,
\begin{align}
    \tr(\mE^k \mF \mF^\T) 
    &\le \tr^{\frac{1}{2}}(\mE^{2k})\, \norm[F]{\mF \mF^\T}
    \le \tr^{\frac{1}{2}}(\mI)\, \norm[F]{\mF}^2 
    \le \sqrt{d_P}\, \epsilon^2 d_P d_Q, \\
    \tr(\mE^k \mF \mU_Q^\T \mU_P)
    &\le \tr^{\frac{1}{2}}(\mE^{2k})\, \norm[F]{\mF \mU_Q^\T \mU_P}
    \le \tr^{\frac{1}{2}}(\mI)\, \norm[2]{\mU_Q^\T \mU_P} \norm[F]{\mF} 
    \le \sqrt{d_P}\, \epsilon \sqrt{d_P d_Q},
\end{align}
and for any $k \ge 1$,
\begin{align}
    \tr(\mE^k \mU_P^\T \mU_Q \mU_Q^\T \mU_P)
    \le \tr^{\frac{1}{2}}(\mE^{2k})\, \norm[F]{\mU_P^\T \mU_Q \mU_Q^\T \mU_P}
    &\le \tr^{\frac{1}{2}}(\mE^2) \norm[F]{\mU_P^\T \mU_Q}^2 \\
    &\stackrel{(c)}{\le} \epsilon d_P \min\{d_P, d_Q\},
\end{align}
where $(c)$ follows from the fact that $\norm[F]{\mU_P^\T \mU_Q}^2 = \norm[2]{cos(\vtheta)}^2$. We conclude that the error term $\epsilon_{\text{signal, Q}} + \epsilon_{\text{noise, Q}} - a(\epsilon_{\text{signal, P}} + \epsilon_{\text{noise, P}}) = O(\epsilon)$ and the proof is complete. \hfill$\blacksquare$

\newpage


\begin{thebibliography}{34}
\providecommand{\natexlab}[1]{#1}
\providecommand{\url}[1]{\texttt{#1}}
\expandafter\ifx\csname urlstyle\endcsname\relax
  \providecommand{\doi}[1]{doi: #1}\else
  \providecommand{\doi}{doi: \begingroup \urlstyle{rm}\Url}\fi

\bibitem[Ben-David et~al.(2010)Ben-David, Blitzer, Crammer, Kulesza, Pereira,
  and Vaughan]{ben-david10}
Shai Ben-David, John Blitzer, Koby Crammer, Alex Kulesza, Fernando Pereira, and
  Jennifer~Wortman Vaughan.
\newblock A theory of learning from different domains.
\newblock \emph{Machine learning}, 79\penalty0 (1–2):\penalty0 151–175,
  2010.

\bibitem[Caine et~al.(2021)Caine, Roelofs, Vasudevan, Ngiam, Chai, Chen, and
  Shlens]{caine21}
Benjamin Caine, Rebecca Roelofs, Vijay Vasudevan, Jiquan Ngiam, Yuning Chai,
  Zhifeng Chen, and Jonathon Shlens.
\newblock Pseudo-labeling for scalable {3D} object detection.
\newblock \emph{arXiv preprint arXiv:2103.02093}, 2021.

\bibitem[Cortes and Mohri(2014)]{cortes14}
Corinna Cortes and Mehryar Mohri.
\newblock Domain adaptation and sample bias correction theory and algorithm for
  regression.
\newblock \emph{Theoretical Computer Science}, 519:\penalty0 103--126, 2014.

\bibitem[Darestani et~al.(2021)Darestani, Chaudhari, and Heckel]{darestani21}
Mohammad~Zalbagi Darestani, Akshay~S Chaudhari, and Reinhard Heckel.
\newblock Measuring robustness in deep learning based compressive sensing.
\newblock In \emph{International Conference on Machine Learning}, volume 139,
  pages 2433--2444, 2021.

\bibitem[Deng et~al.(2009)Deng, Dong, Socher, Li, Li, and Fei-Fei]{deng09}
Jia Deng, Wei Dong, Richard Socher, Li-Jia Li, Kai Li, and Li~Fei-Fei.
\newblock {ImageNet}: A large-scale hierarchical image database.
\newblock In \emph{2009 IEEE Conference on Computer Vision and Pattern
  Recognition}, pages 248--255, 2009.

\bibitem[Emami et~al.(2020)Emami, Sahraee-Ardakan, Pandit, Rangan, and
  Fletcher]{emami20}
Melikasadat Emami, Mojtaba Sahraee-Ardakan, Parthe Pandit, Sundeep Rangan, and
  Alyson Fletcher.
\newblock Generalization error of generalized linear models in high dimensions.
\newblock In \emph{International Conference on Machine Learning}, volume 119,
  pages 2892--2901, 2020.

\bibitem[Everingham et~al.(2010)Everingham, Gool, Williams, Winn, and
  Zisserman]{everingham10}
Mark Everingham, Luc Gool, Christopher~K. Williams, John Winn, and Andrew
  Zisserman.
\newblock The {P}ascal {V}isual {O}bject {C}lasses ({VOC}) challenge.
\newblock \emph{International Journal of Computer Vision}, 88\penalty0
  (2):\penalty0 303–338, 2010.

\bibitem[Goldt et~al.(2020)Goldt, M\'ezard, Krzakala, and
  Zdeborov\'a]{goldt2020hidden}
Sebastian Goldt, Marc M\'ezard, Florent Krzakala, and Lenka Zdeborov\'a.
\newblock Modeling the influence of data structure on learning in neural
  networks: The hidden manifold model.
\newblock \emph{Phys. Rev. X}, 10:\penalty0 041044, Dec 2020.

\bibitem[He et~al.(2016)He, Zhang, Ren, and Sun]{he16}
Kaiming He, Xiangyu Zhang, Shaoqing Ren, and Jian Sun.
\newblock Deep residual learning for image recognition.
\newblock In \emph{IEEE Conference on Computer Vision and Pattern Recognition},
  2016.

\bibitem[He et~al.(2017)He, Gkioxari, Dollar, and Girshick]{he17}
Kaiming He, Georgia Gkioxari, Piotr Dollar, and Ross Girshick.
\newblock Mask {R-CNN}.
\newblock In \emph{IEEE International Conference on Computer Vision}, 2017.

\bibitem[Heckel and B{\"o}lcskei(2015)]{heckel15}
Reinhard Heckel and Helmut B{\"o}lcskei.
\newblock Robust subspace clustering via thresholding.
\newblock \emph{IEEE Transactions on Information Theory}, 61\penalty0
  (11):\penalty0 6320--6342, 2015.

\bibitem[Huang et~al.(2017)Huang, Liu, van~der Maaten, and Weinberger]{huang17}
Gao Huang, Zhuang Liu, Laurens van~der Maaten, and Kilian~Q. Weinberger.
\newblock Densely connected convolutional networks.
\newblock In \emph{IEEE Conference on Computer Vision and Pattern Recognition},
  2017.

\bibitem[Jocher et~al.(2020)Jocher, Stoken, Borovec, NanoCode012,
  ChristopherSTAN, Changyu, Laughing, tkianai, Hogan, lorenzomammana, yxNONG,
  AlexWang1900, Diaconu, Marc, wanghaoyang0106, ml5ah, Doug, Ingham, Frederik,
  Guilhen, Hatovix, Poznanski, Fang, Yu, changyu98, Wang, Gupta, Akhtar,
  PetrDvoracek, and Rai]{jocher20}
Glenn Jocher, Alex Stoken, Jirka Borovec, NanoCode012, ChristopherSTAN, Liu
  Changyu, Laughing, tkianai, Adam Hogan, lorenzomammana, yxNONG, AlexWang1900,
  Laurentiu Diaconu, Marc, wanghaoyang0106, ml5ah, Doug, Francisco Ingham,
  Frederik, Guilhen, Hatovix, Jake Poznanski, Jiacong Fang, Lijun Yu,
  changyu98, Mingyu Wang, Naman Gupta, Osama Akhtar, PetrDvoracek, and Prashant
  Rai.
\newblock {ultralytics/yolov5: v3.1 - Bug Fixes and Performance Improvements},
  2020.
\newblock URL \url{https://doi.org/10.5281/zenodo.4154370}.

\bibitem[Krizhevsky et~al.(2012)Krizhevsky, Sutskever, and
  Hinton]{krizhevsky12}
Alex Krizhevsky, Ilya Sutskever, and Geoffrey~E Hinton.
\newblock {ImageNet} classification with deep convolutional neural networks.
\newblock In \emph{Advances in Neural Information Processing Systems},
  volume~25, 2012.

\bibitem[Kusetogullari et~al.(2020)Kusetogullari, Yavariabdi, Cheddad, Grahn,
  and Hall]{kusetogullari20}
Huseyin Kusetogullari, Amir Yavariabdi, Abbas Cheddad, H\r{a}kan Grahn, and
  Johan Hall.
\newblock {ARDIS}: A {Swedish} historical handwritten digit dataset.
\newblock \emph{Neural Computing and Applications}, 32\penalty0 (21):\penalty0
  16505–16518, 2020.

\bibitem[LeCun et~al.(2010)LeCun, Cortes, and Burges]{lecun10}
Yann LeCun, Corinna Cortes, and CJ~Burges.
\newblock {MNIST} handwritten digit database, 2010.
\newblock URL \url{http://yann.lecun.com/exdb/mnist}.

\bibitem[Lin et~al.(2014)Lin, Maire, Belongie, Hays, Perona, Ramanan,
  Doll{\'a}r, and Zitnick]{lin14}
Tsung-Yi Lin, Michael Maire, Serge Belongie, James Hays, Pietro Perona, Deva
  Ramanan, Piotr Doll{\'a}r, and C~Lawrence Zitnick.
\newblock Microsoft {COCO}: Common objects in context.
\newblock In \emph{European Conference on Computer Vision}, pages 740--755,
  2014.

\bibitem[Lin et~al.(2017)Lin, Goyal, Girshick, He, and Dollar]{lin17}
Tsung-Yi Lin, Priya Goyal, Ross Girshick, Kaiming He, and Piotr Dollar.
\newblock Focal loss for dense object detection.
\newblock In \emph{IEEE International Conference on Computer Vision}, 2017.

\bibitem[Liu et~al.(2016)Liu, Anguelov, Erhan, Szegedy, Reed, Fu, and
  Berg]{liu16}
Wei Liu, Dragomir Anguelov, Dumitru Erhan, Christian Szegedy, Scott Reed,
  Cheng-Yang Fu, and Alexander~C. Berg.
\newblock {SSD}: Single shot multibox detector.
\newblock In \emph{European Conference on Computer Vision}, pages 21--37, 2016.

\bibitem[Loureiro et~al.(2021)Loureiro, Gerbelot, Cui, Goldt, Krzakala, Mezard,
  and Zdeborov\'{a}]{loureiro21}
Bruno Loureiro, Cedric Gerbelot, Hugo Cui, Sebastian Goldt, Florent Krzakala,
  Marc Mezard, and Lenka Zdeborov\'{a}.
\newblock Learning curves of generic features maps for realistic datasets with
  a teacher-student model.
\newblock In \emph{Advances in Neural Information Processing Systems},
  volume~34, pages 18137--18151, 2021.

\bibitem[Mania and Sra(2020)]{mania20}
Horia Mania and Suvrit Sra.
\newblock Why do classifier accuracies show linear trends under distribution
  shift?
\newblock \emph{arXiv preprint arXiv:2012.15483}, 2020.

\bibitem[McCoy et~al.(2020)McCoy, Min, and Linzen]{mccoy19}
R.~Thomas McCoy, Junghyun Min, and Tal Linzen.
\newblock {BERT}s of a feather do not generalize together: Large variability in
  generalization across models with similar test set performance.
\newblock In \emph{BlackboxNLP Workshop on Analyzing and Interpreting Neural
  Networks for NLP}, pages 217--227, 2020.

\bibitem[Miller et~al.(2020)Miller, Krauth, Recht, and Schmidt]{miller20}
John Miller, Karl Krauth, Benjamin Recht, and Ludwig Schmidt.
\newblock The effect of natural distribution shift on question answering
  models.
\newblock In \emph{International Conference on Machine Learning}, volume 119,
  pages 6905--6916, 2020.

\bibitem[Miller et~al.(2021)Miller, Taori, Raghunathan, Sagawa, Koh, Shankar,
  Liang, Carmon, and Schmidt]{miller21}
John Miller, Rohan Taori, Aditi Raghunathan, Shiori Sagawa, Pang~Wei Koh,
  Vaishaal Shankar, Percy Liang, Yair Carmon, and Ludwig Schmidt.
\newblock Accuracy on the line: On the strong correlation between
  out-of-distribution and in-distribution generalization.
\newblock In \emph{International Conference on Machine Learning}, volume 139,
  pages 7721--7735, 2021.

\bibitem[Qui{\~n}onero-Candela et~al.(2008)Qui{\~n}onero-Candela, Sugiyama,
  Schwaighofer, and Lawrence]{quinonero08}
Joaquin Qui{\~n}onero-Candela, Masashi Sugiyama, Anton Schwaighofer, and
  Neil~D. Lawrence.
\newblock \emph{Dataset shift in machine learning}.
\newblock {MIT} Press, 2008.

\bibitem[Recht et~al.(2019)Recht, Roelofs, Schmidt, and Shankar]{recht19}
Benjamin Recht, Rebecca Roelofs, Ludwig Schmidt, and Vaishaal Shankar.
\newblock Do {I}mage{N}et classifiers generalize to {I}mage{N}et?
\newblock In \emph{International Conference on Machine Learning}, volume~97,
  pages 5389--5400, 2019.

\bibitem[Redmon et~al.(2016)Redmon, Divvala, Girshick, and Farhadi]{redmon16}
Joseph Redmon, Santosh Divvala, Ross Girshick, and Ali Farhadi.
\newblock You only look once: Unified, real-time object detection.
\newblock In \emph{IEEE Conference on Computer Vision and Pattern Recognition},
  2016.

\bibitem[Ren et~al.(2015)Ren, He, Girshick, and Sun]{ren15}
Shaoqing Ren, Kaiming He, Ross Girshick, and Jian Sun.
\newblock Faster {R-CNN}: Towards real-time object detection with region
  proposal networks.
\newblock In \emph{Advances in Neural Information Processing Systems},
  volume~28, 2015.

\bibitem[Simonyan and Zisserman(2015)]{simonyan15}
Karen Simonyan and Andrew Zisserman.
\newblock Very deep convolutional networks for large-scale image recognition.
\newblock In \emph{3rd International Conference on Learning Representations},
  2015.

\bibitem[Soltanolkotabi and Cand\`es(2012)]{soltanolkotabi12}
Mahdi Soltanolkotabi and Emmanuel~J. Cand\`es.
\newblock A geometric analysis of subspace clustering with outliers.
\newblock \emph{The Annals of Statistics}, 40\penalty0 (4):\penalty0
  2195--2238, 2012.

\bibitem[Tenenbaum et~al.(2000)Tenenbaum, de~Silva, and Langford]{tenenbaum00}
Joshua~B. Tenenbaum, Vin de~Silva, and John~C. Langford.
\newblock A global geometric framework for nonlinear dimensionality reduction.
\newblock \emph{Science}, 290\penalty0 (5500):\penalty0 2319--2323, 2000.

\bibitem[Thrampoulidis et~al.(2018)Thrampoulidis, Abbasi, and
  Hassibi]{thrampoulidis18}
Christos Thrampoulidis, Ehsan Abbasi, and Babak Hassibi.
\newblock Precise error analysis of regularized {$M$}-estimators in high
  dimensions.
\newblock \emph{IEEE Transactions on Information Theory}, 64\penalty0
  (8):\penalty0 5592--5628, 2018.

\bibitem[Tripuraneni et~al.(2021)Tripuraneni, Adlam, and
  Pennington]{tripuraneni21}
Nilesh Tripuraneni, Ben Adlam, and Jeffrey Pennington.
\newblock Covariate shift in high-dimensional random feature regression.
\newblock \emph{arXiv preprint arXiv:2111.08234}, 2021.

\bibitem[Yadav and Bottou(2019)]{yadav19}
Chhavi Yadav and L{\'e}on Bottou.
\newblock Cold case: The lost {MNIST} digits.
\newblock In \emph{Advances in Neural Information Processing Systems},
  volume~32, 2019.

\end{thebibliography}
\end{document}